\def\eqref#1{equation~\ref{#1}}
\def\1{\bm{1}}
\def\vzero{{\bm{0}}}
\def\vone{{\bm{1}}}
\def\vmu{{\bm{\mu}}}
\def\vnu{{\bm{\nu}}}
\def\vtheta{{\bm{\theta}}}
\def\vphi{{\bm{\phi}}}
\def\veta{{\bm{\eta}}}
\def\vepsilon{{\bm{\epsilon}}}
\def\vsigma{{\bm{\sigma}}}
\def\vg{{\bm{g}}}
\def\vh{{\bm{h}}}
\def\vw{{\bm{w}}}
\def\vx{{\bm{x}}}
\def\vy{{\bm{y}}}
\def\mI{{\bm{I}}}
\def\mM{{\bm{M}}}
\def\mSigma{{\bm{\Sigma}}}
\DeclareMathAlphabet{\mathsfit}{\encodingdefault}{\sfdefault}{m}{sl}
\SetMathAlphabet{\mathsfit}{bold}{\encodingdefault}{\sfdefault}{bx}{n}
\def\gN{{\mathcal{N}}}
\def\gO{{\mathcal{O}}}
\def\sR{{\mathbb{R}}}
\newcommand{\E}{\mathbb{E}}
\newcommand{\R}{\mathbb{R}}
\newcommand{\softmax}{\mathrm{softmax}}
\newcommand{\KL}{D_{\mathrm{KL}}}
\newcommand{\Cov}{\mathrm{Cov}}
\newcommand{\lelbo}{L_{\mathrm{elbo}}}
\newcommand{\diag}{\mathrm{diag}}
\DeclareMathOperator*{\argmax}{arg\,max}
\DeclareMathOperator{\tr}{tr}
\theoremstyle{plain}
\newtheorem{theorem}{Theorem}[section]
\newtheorem{proposition}[theorem]{Proposition}
\newtheorem{lemma}[theorem]{Lemma}
\theoremstyle{definition}
\theoremstyle{remark}
\def\nprddpm{NPR-DDPM}
\def\nprddim{NPR-DDIM}
\def\nprdpm{NPR-DPM}
\def\snddpm{SN-DDPM}
\def\snddim{SN-DDIM}
\def\sndpm{SN-DPM}
\icmltitlerunning{Submission and Formatting Instructions for ICML 2022}
\begin{document}

\twocolumn[
\icmltitle{Estimating the Optimal Covariance with Imperfect Mean \\ in Diffusion Probabilistic Models}



\icmlsetsymbol{equal}{*}

\begin{icmlauthorlist}
\icmlauthor{Fan Bao}{thu}
\icmlauthor{Chongxuan Li}{renmin}
\icmlauthor{Jiacheng Sun}{huawei}
\icmlauthor{Jun Zhu}{thu}
\icmlauthor{Bo Zhang}{thu}
\end{icmlauthorlist}

\icmlaffiliation{thu}{Dept. of Comp. Sci. \& Tech., Institute for AI, Tsinghua-Huawei Joint Center for AI, BNRist Center, State Key Lab for Intell. Tech. \& Sys., Tsinghua University}
\icmlaffiliation{renmin}{Gaoling School of AI, Renmin University of China; Beijing Key Lab of Big Data Management and Analysis Methods, Beijing, China}
\icmlaffiliation{huawei}{Huawei Noah's Ark Lab}

\icmlcorrespondingauthor{Chongxan Li}{chongxuanli@ruc.edu.cn}
\icmlcorrespondingauthor{Jun Zhu}{dcszj@tsinghua.edu.cn}

\icmlkeywords{Machine Learning, ICML}

\vskip 0.3in
]



\printAffiliationsAndNotice{}  

\begin{abstract}
Diffusion probabilistic models (DPMs) are a class of powerful deep generative models (DGMs). Despite their success, the iterative generation process over the full timesteps is much less efficient than other DGMs such as GANs. Thus, the generation performance on a subset of timesteps is crucial, which is greatly influenced by the covariance design in DPMs. In this work, we consider diagonal and full covariances to improve the expressive power of DPMs. We derive the optimal result for such covariances, and then correct it when the mean of DPMs is imperfect. Both the optimal and the corrected ones can be decomposed into terms of conditional expectations over functions of noise. Building upon it, we propose to estimate the optimal covariance and its correction given imperfect mean by learning these conditional expectations. Our method can be applied to DPMs with both discrete and continuous timesteps. We consider the diagonal covariance in our implementation for computational efficiency. For an efficient practical implementation, we adopt a parameter sharing scheme and a two-stage training process. Empirically, our method outperforms a wide variety of covariance design on likelihood results, and improves the sample quality especially on a small number of timesteps.
\end{abstract}

\section{Introduction}
\label{sec:intro}

Recently, diffusion probabilistic models (DPMs)~\cite{sohl2015deep,ho2020denoising,song2020score} have shown great promise for generative modeling. Such models smoothly inject noise to the data distribution, which forms a diffusion process. By learning to reverse the process using a Markov model, DPMs are able to generate high quality images~\cite{ho2020denoising,song2020score,dhariwal2021diffusion} and audios~\cite{chen2020wavegrad,kong2020diffwave}, which are comparable or even superior than the current state-of-the-art generative models~\cite{brock2018large,wu2019logan,karras2020analyzing,binkowski2019high,kalchbrenner2018efficient}.

However, the iterative generation over the full timesteps of DPMs makes them much less efficient than generative models such as GANs~\cite{goodfellow2014generative}. Thus, the generation performance on a subset of timesteps is crucial. In this case, the transition of the reversed diffusion process becomes more complex~\cite{xiao2021tackling}, and the covariance design in DPMs matters~\cite{nichol2021improved,bao2022analytic}. Most prior works~\cite{ho2020denoising,song2020denoising,bao2022analytic} use an isotropic covariance that only depends on the timestep without considering the state. 
A notable recent progress is Analytic-DPM~\cite{bao2022analytic}, which estimates the optimal isotropic covariance (in the sense of maximum likelihood) instead of using handcrafted values~\cite{ho2020denoising,song2020denoising} and shows significant improvement on likelihood estimation and sampling efficiency. However, the isotropic covariance sacrifices the expressive power of DPMs for simplicity. Moreover, the optimality of the covariance estimate in~\citet{bao2022analytic} holds by assuming that the optimal mean is known, which is not the case in practice.

To overcome the aforementioned limitations and further improve DPMs for likelihood estimation and sampling efficiency, we consider diagonal and full covariances to improve the expressive power of DPMs.
We derive the optimal mean and covariance from the perspective of the maximum likelihood. We also correct the optimal covariance given an imperfect mean (i.e., considering the approximation and optimization errors) in terms of maximum likelihood.
Both the optimal and the corrected ones can be decomposed into terms of conditional expectations over functions of noise, which can be estimated by minimizing mean squared error (MSE) losses. 
Although our theory applies to the full covariance case, in our implementation, we consider the diagonal covariance for computational efficiency. Besides, we adopt a parameter sharing scheme for inference efficiency and a two-stage training process motivated by our theory.

Our method is applicable to DPMs with both discrete~\cite{ho2020denoising} and continuous~\cite{song2020score} timesteps. In our experiments, we directly compare our method with a variety of baselines~\cite{ho2020denoising,song2020denoising,bao2022analytic,song2020score} in terms of sample quality and likelihood estimation in DPMs with both discrete and continuous timesteps. At nearly the same computation cost, our method consistently outperforms these baselines on likelihood estimation. Besides, our method also outperforms these baselines on the sample quality in most cases, especially when the number of timesteps is small.

\begin{table*}[t]
    \centering
    \begin{tabular}{ccc}
    \toprule
       \multirow{7}{*}{\makecell{Forward \\ process}} & $\overline{\alpha}_n$, $\overline{\beta}_n$ & \makecell{represent the cumulative amount of Gaussian noise at timestep $n$, \\ i.e., $q(\vx_n|\vx_0) = \gN(\vx_n|\sqrt{\overline{\alpha}_n}\vx_0, \overline{\beta}_n \mI)$.} \\
        \cmidrule{2-3} 
        & $\lambda_n^2$ & represents the variance of the transition $q(\vx_{n-1}|\vx_n, \vx_0)$. \\
        \cmidrule{2-3} 
        & $\tilde{\vmu}_n(\vx_n, \vx_0)$ & represents the mean of the transition $q(\vx_{n-1}|\vx_n, \vx_0)$. \\
        \cmidrule{2-3} 
        & $\gamma_n$ & represents the coefficient of $\vx_0$ in $\tilde{\vmu}_n(\vx_n, \vx_0)$. \\
        \cmidrule{2-3} 
        & $\alpha_n$, $\beta_n$ & \makecell{represent the amount of added Gaussian noise at a single timestep $n$ in \\ the DDPM forward process,  i.e., $q(\vx_n|\vx_{n-1}) = \gN(\vx_n|\sqrt{\alpha_n}\vx_{n-1}, \beta_n \mI)$.}\\
        \midrule
        \multirow{2}{*}{\makecell{Reverse \\ mean}} & $\vmu_n(\vx_n)$, $\vmu_n^*(\vx_n)$ & represent the mean of $p(\vx_{n-1}|\vx_n)$ and the optimal one. \\
        \cmidrule{2-3} 
        &$\hat{\vepsilon}_n(\vx_n)$, $\hat{\vmu}_n(\vx_n)$ & represent the noise prediction network and the estimate of $\vmu_n^*(\vx_n)$. \\
        \midrule
        \multirow{4}{*}{\makecell{Reverse \\ covariance}} & $\mSigma_n(\vx_n)$, $\vsigma_n^*(\vx_n)^2$ & represent the covariance of $p(\vx_{n-1}|\vx_n)$ and the optimal diagonal one. \\
        \cmidrule{2-3} 
        & $\vh_n(\vx_n)$, $\hat{\vsigma}_n(\vx_n)^2$ & represent the SN prediction network and the estimate of $\vsigma_n^*(\vx_n)^2$.\\
        \cmidrule{2-3} 
        & $\tilde{\vsigma}_n^*(\vx_n)^2$ & represents the corrected optimal diagonal covariance with an imperfect mean.\\
        \cmidrule{2-3} 
        & $\vg_n(\vx_n)$, $\hat{\tilde{\vsigma}}_n^*(\vx_n)^2$ & represent the NPR prediction network and the estimate of $\tilde{\vsigma}_n^*(\vx_n)^2$. \\
    \bottomrule
    \end{tabular}
    \caption{Notations used in this paper.}
    \label{tab:notation}
\end{table*}

\section{Background}
\label{sec:bg}

Diffusion probabilistic models (DPMs) are special Markov models with Gaussian transitions:
\begin{align}
    & p(\vx_{0:N}) = p(\vx_N) \prod\limits_{n=1}^N p(\vx_{n-1}|\vx_n), \label{eq:markov} \\
    & p(\vx_{n-1}|\vx_n) = \gN(\vx_{n-1}|\vmu_n(\vx_n), \mSigma_n(\vx_n)), \nonumber
\end{align}
which aim to reverse a forward process $q(\vx_{1:N}|\vx_0)$ that gradually injects noise to a data distribution $q(\vx_0)$. \citet{song2020denoising} consider a family of forward processes indexed by a non-negative vector $\lambda=(\lambda_1, \cdots, \lambda_N) \in \R_{\geq 0}^N$:
\begin{align}
\label{eq:ddim}
    & q(\vx_{1:N}|\vx_0) = q(\vx_N|\vx_0) \prod\limits_{n=2}^N q(\vx_{n-1}|\vx_n, \vx_0), \\
    & q(\vx_N|\vx_0) = \gN(\vx_N|\sqrt{\overline{\alpha}_N}\vx_0, \overline{\beta}_N \mI), \nonumber \\
    & q(\vx_{n-1}|\vx_n, \vx_0) = \gN(\vx_{n-1}|\tilde{\vmu}_n(\vx_n, \vx_0), \lambda_n^2 \mI), \nonumber \\
    & \tilde{\vmu}_n(\vx_n, \vx_0) \!=\! \sqrt{\overline{\alpha}_{n-1}} \vx_0 \!+\! \sqrt{\overline{\beta}_{n-1} - \lambda_n^2} \cdot \frac{\vx_n \!-\! \sqrt{\overline{\alpha}_n}\vx_0}{\sqrt{\overline{\beta}_n}}, \nonumber
\end{align}
where $\overline{\alpha}_1, \overline{\alpha}_2, \cdots, \overline{\alpha}_N \in (0, 1)$ is a strictly decreasing sequence, $\overline{\beta}_n \coloneqq 1 - \overline{\alpha}_n$ and $\mI$ is the identity matrix. Observing that $q(\vx_n|\vx_0) = \gN(\vx_n|\sqrt{\overline{\alpha}_n}\vx_0, \overline{\beta}_n \mI)$, we can quickly sample $\vx_n$ given $\vx_0$ by
\begin{align*}
    \vx_n = \sqrt{\overline{\alpha}_n} \vx_0 + \sqrt{\overline{\beta}_n} \vepsilon_n, \quad \vepsilon_n \sim \gN(\vzero, \mI).
\end{align*}

Let $\alpha_n \coloneqq \overline{\alpha}_n / \overline{\alpha}_{n-1}$, $\beta_n \coloneqq 1 - \alpha_n$ and $\tilde{\beta}_n \coloneqq \frac{\overline{\beta}_{n-1}}{\overline{\beta}_n}\beta_n$.
Two commonly used forward processes are the denoising diffusion probabilistic model (DDPM) forward process (corresponding to $\lambda_n^2 = \tilde{\beta}_n$) and the denoising diffusion implicit model (DDIM) forward process (corresponding to $\lambda_n^2 = 0$). In particular, the DDPM forward process is Markovian with linear Gaussian transition:
\begin{align*}
    q(\vx_n|\vx_{n-1}) = \gN(\vx_n|\sqrt{\alpha_n}\vx_{n-1}, \beta_n \mI).
\end{align*}

The reverse process is learned by maximizing the evidence lower bound (ELBO) $\lelbo=\E_q \log \frac{p(\vx_{0:N})}{q(\vx_{1:N}|\vx_0)}$ on log-likelihood, or equivalently, by minimizing the KL divergence between the forward and the reverse process
\begin{align}
\label{eq:elbo}
\max\limits_{\{\vmu_n, \mSigma_n\}_{n=1}^N} \lelbo \Leftrightarrow \min\limits_{\{\vmu_n, \mSigma_n\}_{n=1}^N} \KL(q(\vx_{0:N}) \| p(\vx_{0:N})).
\end{align}
Note that problem~{(\ref{eq:elbo})} optimizes the mean and the covariance jointly. Thus, we term it \textit{joint optimization}.

For simplicity and at the cost of the flexibility, prior works~\cite{ho2020denoising,song2020denoising,bao2022analytic} set $\mSigma_n(\vx_n) = \sigma_n^2 \mI$ to an isotropic covariance that only depends
on the timestep $n$, where $\sigma_n^2$ is the variance for each component. Under this restriction, Analytic-DPM~\cite{bao2022analytic} shows that both the optimal mean $\vmu_n^*(\vx_n)$ and variance $\sigma_n^{*2}$ have analytic forms w.r.t. the conditional expectation of the noise\footnote{The original work~\cite{bao2022analytic} uses the score function $\nabla_{\vx_n} \log q(\vx_n)$ to represent $\vmu_n^*(\vx_n)$ and $\sigma_n^{*2}$, which is equivalent to the conditional expectation of the noise up to multiplying a constant: $\nabla_{\vx_n} \log q(\vx_n) = - \frac{1}{\sqrt{\overline{\beta}_n}} \E_{q(\vx_0|\vx_n)} [\vepsilon_n]$.} $\E_{q(\vx_0|\vx_n)} [\vepsilon_n]$:
\begin{align}
& \vmu_n^*(\vx_n) \!=\! \tilde{\vmu}_n \!\left(\!\vx_n, \frac{1}{\sqrt{\overline{\alpha}_n}} ( \vx_n \!-\! \sqrt{\overline{\beta}_n} \E_{q(\vx_0|\vx_n)} [\vepsilon_n]) \!\right),\! \label{eq:opt_mean} \\
& \sigma_n^{*2} = \lambda_n^2 + \gamma_n^2 \frac{\overline{\beta}_n}{\overline{\alpha}_n} \left(1 \!-\!  \E_{q(\vx_n)} \frac{\|\E_{q(\vx_0|\vx_n)} [\vepsilon_n]\|_2^2}{d} \right), \label{eq:opt_var}
\end{align}
where $\vepsilon_n = \frac{\vx_n - \sqrt{\overline{\alpha}_n} \vx_0}{\sqrt{\overline{\beta}_n}}$ is the noise used to generate $\vx_n$ from $\vx_0$, $d$ is the dimension of the data $\vx_0$ and $\gamma_n = \sqrt{\overline{\alpha}_{n-1}} - \sqrt{\overline{\beta}_{n-1} - \lambda_n^2} \sqrt{\frac{\overline{\alpha}_n}{\overline{\beta}_n}}$. \citet{ho2020denoising} estimate $\vmu_n^*(\vx_n)$ by a noise prediction network $\hat{\vepsilon}_n(\vx_n)$
\begin{align}
\label{eq:mu_param}
    \hat{\vmu}_n(\vx_n) = \tilde{\vmu}_n \left(\vx_n, \frac{1}{\sqrt{\overline{\alpha}_n}} ( \vx_n - \sqrt{\overline{\beta}_n} \hat{\vepsilon}_n(\vx_n)) \right).
\end{align}
Here $\hat{\vepsilon}_n(\vx_n)$ aims to learn $\E_{q(\vx_0|\vx_n)}[\vepsilon_n]$ by minimizing the following MSE loss:
\begin{align}
\label{eq:mse_eps}
\min\limits_{\{\hat{\vepsilon}_n\}_{n=1}^N} \E_n \E_{q(\vx_0, \vx_n)} \|\vepsilon_n - \hat{\vepsilon}_n(\vx_n)\|_2^2,
\end{align}
where $n$ is uniformly sampled from $\{1,2,\cdots,N\}$. Eq.~{(\ref{eq:mse_eps})} admits an optimal solution when $\hat{\vepsilon}_n(\vx_n) = \E_{q(\vx_0|\vx_n)}[\vepsilon_n]$ for all $n \in \{1,2,\cdots,N\}$. 

The optimal variance in Eq.~{(\ref{eq:opt_var})} can also be estimated using $\hat{\vepsilon}_n(\vx_n)$ as following~\cite{bao2022analytic}
\begin{align}
\label{eq:adpm}
    \hat{\sigma}_n^2 = \lambda_n^2 + \gamma_n^2 \frac{\overline{\beta}_n}{\overline{\alpha}_n} \left(1 -  \E_{q(\vx_n)} \frac{\|\hat{\vepsilon}_n(\vx_n)\|_2^2}{d} \right).
\end{align}

\section{Method}
\label{sec:method}

We improve the covariance estimate of DPMs building upon the recent progress~\cite{bao2022analytic}. 
First, we consider diagonal and full covariances instead of using an isotropic one to improve the expressive power of DPMs and obtain the expression of the optimal solution, as detailed in Section~\ref{sec:flex_cov}. Second, we correct the optimal covariance given an imperfect mean (i.e., with approximation and optimization errors), as presented in Section~\ref{sec:opt_cov}.  For clarity, we refer the readers to Appendix~\ref{sec:proof} for all proofs.

Although our theoretical results are applicable to the full covariance case (see Appendix~\ref{sec:ext_cov_g}), it is often time consuming to obtain samples from Gaussian transitions with full covariances (e.g., via Cholesky decomposition).
To balance the flexibility and the time cost, we focus on diagonal covariances throughout the main text including the experiments.

\subsection{The Optimal Solution for Covariances beyond Isotropic Ones}
\label{sec:flex_cov}

Instead of using
an isotropic covariance that only depends on
the timestep without considering the state, 
we consider a diagonal covariance to improve the expressive power of DPMs. To be clear and rigorous, we first construct an example where the data distribution is a mixture of Gaussian. Even in such a simple case, we can prove that the optimal ELBO (in Eq.~{(\ref{eq:elbo})}) with a diagonal covariance is \emph{strictly greater} than that with an isotropic covariance. See a formal description and a detailed proof in Proposition~\ref{thm:express_example}.

Formally, we consider the covariance in the form of $\mSigma_n(\vx_n) = \diag(\vsigma_n(\vx_n)^2)$, where $\vsigma_n(\vx_n)^2: \R^d \rightarrow \R^d$ is the diagonal of the covariance matrix and $\diag(\cdot)$ denotes the vector-to-matrix diag operator.
Naturally, we optimize problem~{(\ref{eq:elbo})} w.r.t. the mean and covariance jointly and derive the the optimal solution, as summarized in Theorem~\ref{thm:opt_diag_no_err}. See the result for full covariances in Appendix~\ref{sec:ext_cov_g}.

\begin{restatable}{theorem}{optdiagnoerr}
\label{thm:opt_diag_no_err}
(Optimal solution to joint optimization) Suppose $\mSigma_n(\vx_n) = \diag(\vsigma_n(\vx_n)^2)$. Then the optimal mean to problem~{(\ref{eq:elbo})} is $\vmu_n^*(\vx_n)$ as in Eq.~{(\ref{eq:opt_mean})}, and the optimal covariance to   problem~{(\ref{eq:elbo})} is 
\begin{align*}
\vsigma_n^*(\vx_n)^2 \!=\! \lambda_n^2 \vone \!+\! \gamma_n^2 \frac{\overline{\beta}_n}{\overline{\alpha}_n} \!\left(\E_{q(\vx_0|\vx_n)}[\vepsilon_n ^2] \!-\! \E_{q(\vx_0|\vx_n)}[\vepsilon_n]^2\right),
\end{align*}
where $\vepsilon_n = \frac{\vx_n - \sqrt{\overline{\alpha}_n} \vx_0}{\sqrt{\overline{\beta}_n}}$ is the noise used to generate $\vx_n$ from $\vx_0$, $(\cdot)^2$ is the element-wise square, $\vone$ is the vector of ones and $\gamma_n = \sqrt{\overline{\alpha}_{n-1}} - \sqrt{\overline{\beta}_{n-1} - \lambda_n^2} \sqrt{\frac{\overline{\alpha}_n}{\overline{\beta}_n}}$.
\end{restatable}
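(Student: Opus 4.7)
The plan is to first decouple the joint ELBO optimization into a pointwise projection problem at each timestep and state, and then apply moment matching. Starting from the standard DDPM/variational decomposition of $\KL(q(\vx_{0:N})\|p(\vx_{0:N}))$, the only terms that depend on the parameters $\{\vmu_n,\mSigma_n\}_{n=1}^N$ are the cross-entropies $-\E_{q(\vx_{n-1},\vx_n)}\log p(\vx_{n-1}|\vx_n)$ at each timestep. Because $p(\vx_{n-1}|\vx_n)$ does not depend on $\vx_0$, marginalizing $\vx_0$ out and subtracting the (parameter-free) conditional entropy of $q(\vx_{n-1}|\vx_n)$ rewrites each term as $\E_{q(\vx_n)}\KL(q(\vx_{n-1}|\vx_n)\,\|\,p(\vx_{n-1}|\vx_n))$ plus a constant. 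Since $\vmu_n(\vx_n)$ and $\vsigma_n(\vx_n)^2$ are unconstrained functions of $\vx_n$, the minimization decouples pointwise in $n$ and in $\vx_n$: it suffices, at each timestep and each state, to find the diagonal Gaussian closest in KL to the true reverse conditional $q(\vx_{n-1}|\vx_n)$.

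Next I would invoke the standard moment-matching characterization: the KL-closest diagonal Gaussian to a distribution has its mean and componentwise variance equal to those of the target. This follows by setting the gradients in $\vmu$ and $\vsigma^2$ of the Gaussian cross-entropy to zero and noting that the objective is strictly convex in $\vmu$ and separately in each positive component of $\vsigma^2$. The remaining task is therefore purely computational: compute the mean and the diagonal of the covariance of $q(\vx_{n-1}|\vx_n)$.

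For that I would use the mixture representation $q(\vx_{n-1}|\vx_n) = \int q(\vx_{n-1}|\vx_n,\vx_0)\,q(\vx_0|\vx_n)\,d\vx_0$ with $q(\vx_{n-1}|\vx_n,\vx_0) = \gN(\tilde{\vmu}_n(\vx_n,\vx_0),\,\lambda_n^2\mI)$. The tower property gives the mean as $\E_{q(\vx_0|\vx_n)}[\tilde{\vmu}_n(\vx_n,\vx_0)]$, and the law of total variance gives the componentwise variance as $\lambda_n^2\vone + \Var_{q(\vx_0|\vx_n)}[\tilde{\vmu}_n(\vx_n,\vx_0)]$. Substituting $\vx_0 = (\vx_n - \sqrt{\overline{\beta}_n}\vepsilon_n)/\sqrt{\overline{\alpha}_n}$ rewrites $\tilde{\vmu}_n(\vx_n,\vx_0)$, with $\vx_n$ held fixed, as an affine function of $\vepsilon_n$. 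Collecting the $\vx_0$-coefficients in the definition of $\tilde{\vmu}_n$ shows that the coefficient of $\vepsilon_n$ is exactly $-\gamma_n\sqrt{\overline{\beta}_n/\overline{\alpha}_n}$, so the mean reduces to Eq.~(\ref{eq:opt_mean}) and the variance reduces to $\lambda_n^2\vone + \gamma_n^2(\overline{\beta}_n/\overline{\alpha}_n)\bigl(\E_{q(\vx_0|\vx_n)}[\vepsilon_n^2]-\E_{q(\vx_0|\vx_n)}[\vepsilon_n]^2\bigr)$.

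The conceptual content is standard (ELBO decomposition, moment matching, and the law of total variance); the only mildly delicate bookkeeping is verifying that the linear combination of $\sqrt{\overline{\alpha}_{n-1}}$ and $\sqrt{\overline{\beta}_{n-1}-\lambda_n^2}$ arising from the change of variables collapses exactly to the $\gamma_n$ defined in the theorem. The diagonal restriction enters only at the moment-matching step, so the same argument extends verbatim to the full-covariance case treated in Appendix~\ref{sec:ext_cov_g} by matching the entire covariance matrix rather than its diagonal.
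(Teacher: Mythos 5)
Your proof is correct and takes essentially the same route as the paper's: the same decomposition of $\KL(q(\vx_{0:N})\|p(\vx_{0:N}))$ into per-timestep conditional KL terms, the same moment-matching step for the diagonal Gaussian (the paper's Lemma~\ref{thm:opt_sigma_general}), and the same computation of the mean and diagonal covariance of $q(\vx_{n-1}|\vx_n)$ via the tower property and the law of total variance (the paper's Lemma~\ref{thm:mm}, which cites the analogous identities from Analytic-DPM rather than rederiving them). One small justification slip: the objective $\sigma_i^{-2}a_i+\log\sigma_i^2$ is not convex in $\sigma_i^2$ (only in the precision), but it has a unique stationary point on $(0,\infty)$ which is the global minimum, so your moment-matching conclusion is unaffected.
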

The proof idea is similar to Theorem 1 in~\citet{bao2022analytic}, which only considers the isotropic covariance case.

To estimate the optimal covariance $\vsigma_n^*(\vx_n)^2$ in Theorem~\ref{thm:opt_diag_no_err}, we need to estimate both $\E_{q(\vx_0|\vx_n)}[\vepsilon_n]^2$ and $\E_{q(\vx_0|\vx_n)}[\vepsilon_n ^2]$.

As mentioned in Section~\ref{sec:bg}, the noise prediction network $\hat{\vepsilon}_n(\vx_n)$ aims to estimate $\E_{q(\vx_0|\vx_n)}[\vepsilon_n]$ by minimizing the MSE loss in Eq.~{(\ref{eq:mse_eps})}. Thus, we use $\hat{\vepsilon}_n(\vx_n)^2$ to estimate $\E_{q(\vx_0|\vx_n)}[\vepsilon_n]^2$. Note that the error of $\hat{\vepsilon}_n(\vx_n)^2$ is likely to be amplified compared to that of $\hat{\vepsilon}_n(\vx_n)$ (see Appendix~\ref{sec:err_amp} for details). However, we still find it performs well on sample quality (see Section~\ref{sec:exp_sample}).

As for $\E_{q(\vx_0|\vx_n)}[\vepsilon_n ^2]$, we note that it is the expectation of the \textit{squared noise (SN)} $\vepsilon_n^2$ conditioned on $\vx_n$. Such a conditional expectation can be learned using a neural network $\vh_n(\vx_n) \in \sR^d$ trained on the following MSE loss
\begin{align}
\label{eq:mse_eps2}
    \min\limits_{\{\vh_n\}_{n=1}^N} \E_n \E_{q(\vx_0, \vx_n)} \|\vepsilon_n^2 - \vh_n(\vx_n) \|_2^2,
\end{align}
where $n$ is uniformly sampled from $\{1,2,\cdots,N\}$ and $\vh_n$ attempts to predict $\vepsilon_n^2$ given $\vx_n$. Here we call $\vh_n(\vx_n)$ the \textit{SN prediction network}. Eq.~{(\ref{eq:mse_eps2})} admits an optimal solution when $\vh_n(\vx_n) = \E_{q(\vx_0|\vx_n)}[\vepsilon_n^2]$ for all $n \in \{1,2,\cdots,N\}$.

By estimating $\E_{q(\vx_0|\vx_n)}[\vepsilon_n]^2$ with $\hat{\vepsilon}_n(\vx_n)^2$ and estimating $\E_{q(\vx_0|\vx_n)}[\vepsilon_n^2]$ with $\vh_n(\vx_n)$, we can estimate $\vsigma_n^*(\vx_n)^2$ by
\begin{align}
\label{eq:opt_est}
    \hat{\vsigma}_n(\vx_n)^2 = \lambda_n^2 \vone + \gamma_n^2 \frac{\overline{\beta}_n}{\overline{\alpha}_n} (\vh_n(\vx_n) - \hat{\vepsilon}_n(\vx_n)^2).
\end{align}
In the paper, we refer to the above estimate as \textit{\sndpm}.

\subsection{The Optimal Covariance with an Imperfect Mean}
\label{sec:opt_cov}

It is plausible to obtain the estimates $\hat{\vmu}_n(\vx_n)$ and $\hat{\vsigma}_n(\vx_n)^2$, 
according to Theorem~\ref{thm:opt_diag_no_err}.
However, we argue that the exact optimal solution $\vmu^*_n(\vx_n), \vsigma^*_n(\vx_n)^2$ in Theorem~\ref{thm:opt_diag_no_err} cannot be achieved due to the approximation and optimization errors, making it possible to further improve the estimates considering the errors. 
In fact, below we present how to correct the optimal covariance given a potentially imperfect mean $\hat{\vmu}_n$. 

We note that, in Theorem~\ref{thm:opt_diag_no_err}, the optimal mean is irrelevant to the covariance, while the optimal covariance is expressed by the optimal mean. Naturally, we can first optimize the mean solely to obtain the mean estimate $\hat{\vmu}_n$ and then optimize the covariance solely given $\hat{\vmu}_n$. 
Such a two-stage approach is at least not worse than the estimate in Section~\ref{sec:flex_cov} under the maximum likelihood, because we have
\begin{align}
    \lelbo(\hat{\vmu}_n, \hat{\vsigma}_n(\cdot)^2) \leq \max\limits_{\vsigma_n(\cdot)^2}
    \lelbo(\hat{\vmu}_n, \vsigma_n(\cdot)^2),
\end{align}
where the equality does not hold in general.

To formalize the idea, we obtain the optimal solutions to the following two problems
\begin{align}
\mbox{Given arbitrary covariance }\mSigma_n, &\max\limits_{\{\vmu_n\}_{n=1}^N} \lelbo, \label{eq:elbo_mean} \\
\mbox{Given arbitrary mean }\vmu_n, &\max\limits_{\{\mSigma_n\}_{n=1}^N} \lelbo \label{eq:elbo_cov},
\end{align}
as summarized in Theorem~\ref{thm:opt_mean} and Theorem~\ref{thm:opt_diag}.




\begin{restatable}{theorem}{optmean}
\label{thm:opt_mean}
(Optimal solution to optimization solely w.r.t. mean)
For any covariance $\mSigma_n$, the optimal mean to problem~{(\ref{eq:elbo_mean})} is always Eq.~{(\ref{eq:opt_mean})}, i.e.,
\begin{align*}
\vmu_n^*(\vx_n) \!=\! \tilde{\vmu}_n \! \left(\vx_n, \frac{1}{\sqrt{\overline{\alpha}_n}} ( \! \vx_n \!-\! \sqrt{\overline{\beta}_n} \E_{q(\vx_0|\vx_n)} [\vepsilon_n]) \right),
\end{align*}
which is irrelevant to $\mSigma_n$.
\end{restatable}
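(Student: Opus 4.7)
The plan is to reduce problem~(\ref{eq:elbo_mean}) to a sum of per-timestep subproblems that decouple across $n$, and then observe that the covariance drops out of each subproblem because $\mSigma_n(\vx_n)$ does not depend on $\vx_0$. Using the equivalence in (\ref{eq:elbo}) and the factorization (\ref{eq:ddim}) of $q(\vx_{1:N}|\vx_0)$, standard telescoping yields
\begin{align*}
\KL(q(\vx_{0:N})\|p(\vx_{0:N})) &= \sum_{n=2}^N \E_{q(\vx_n)}\bigl[\KL\bigl(q(\vx_{n-1}|\vx_n,\vx_0)\|p(\vx_{n-1}|\vx_n)\bigr)\bigr] \\
&\quad - \E_q\log p(\vx_0|\vx_1) + C,
\end{align*}
with $C$ independent of $\{\vmu_n\}$. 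Since each $\vmu_n$ only appears in the $n$-th summand, the problems can be solved independently.

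For each $n\geq 2$, both distributions inside the KL are Gaussian, so the closed-form KL between Gaussians isolates the only $\vmu_n$-dependent piece as a weighted squared error
\begin{align*}
\tfrac12\,\E_{q(\vx_n)}\E_{q(\vx_0|\vx_n)}\bigl[(\tilde{\vmu}_n(\vx_n,\vx_0)-\vmu_n(\vx_n))^\top \mSigma_n(\vx_n)^{-1}(\tilde{\vmu}_n(\vx_n,\vx_0)-\vmu_n(\vx_n))\bigr].
\end{align*}
Fixing $\vx_n$, both $\vmu_n(\vx_n)$ and $\mSigma_n(\vx_n)^{-1}$ are constant with respect to the inner expectation over $\vx_0|\vx_n$, so minimizing this $\mSigma_n(\vx_n)^{-1}$-weighted quadratic in $\vmu_n(\vx_n)$ becomes a pointwise weighted least-squares problem whose minimizer is the conditional mean $\vmu_n^*(\vx_n) = \E_{q(\vx_0|\vx_n)}[\tilde{\vmu}_n(\vx_n,\vx_0)]$, regardless of the positive-definite weight. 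This is the crux: the covariance cancels.

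It remains to simplify the conditional expectation. Because $\tilde{\vmu}_n(\vx_n,\vx_0)$ is affine in $\vx_0$ with $\vx_n$ held fixed, expectations commute with $\tilde{\vmu}_n$, giving $\vmu_n^*(\vx_n) = \tilde{\vmu}_n(\vx_n,\E_{q(\vx_0|\vx_n)}[\vx_0])$. Substituting $\vx_0 = (\vx_n - \sqrt{\overline{\beta}_n}\vepsilon_n)/\sqrt{\overline{\alpha}_n}$ rewrites $\E[\vx_0|\vx_n]$ in terms of $\E[\vepsilon_n|\vx_n]$ and recovers exactly Eq.~(\ref{eq:opt_mean}). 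For the boundary term $-\E_q\log p(\vx_0|\vx_1)$ at $n=1$, direct maximization over $\vmu_1$ again yields $\vmu_1^*(\vx_1) = \E[\vx_0|\vx_1]$, consistent with the same formula upon setting $\overline{\alpha}_0=1$, $\overline{\beta}_0=0$.

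The only non-mechanical step is the observation in the second paragraph: the weighting $\mSigma_n(\vx_n)^{-1}$ factors out of the inner expectation precisely because $\mSigma_n$ is a function of $\vx_n$ alone (not $\vx_0$), so it cannot tilt the optimal $\vmu_n$ in any direction. This is what distinguishes Theorem~\ref{thm:opt_mean} from Theorem~\ref{thm:opt_diag_no_err}, where joint optimization would genuinely couple mean and covariance; here, the two decouple. The remaining work is routine bookkeeping with the Gaussian KL formula and the affine substitution.
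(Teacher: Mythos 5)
Your proof is correct, but it takes a genuinely different route from the paper's. The paper decomposes $\KL(q(\vx_{0:N})\|p(\vx_{0:N}))$ into per-step terms $\E_q \KL(q(\vx_{n-1}|\vx_n)\|p(\vx_{n-1}|\vx_n))$ conditioned on $\vx_n$ alone, where $q(\vx_{n-1}|\vx_n)$ is the marginalized (generally non-Gaussian) posterior; it then invokes its ``conditioned moment matching'' lemma (Lemma~\ref{thm:opt_sigma_general}), which rests on the identity $\KL(q\|p)=\KL(\gN(\vmu_q,\mSigma_q)\|p)+c$ for Gaussian $p$, to conclude that the optimal mean is the first moment $\E_{q(\vx_{n-1}|\vx_n)}[\vx_{n-1}]$ for any covariance, and finally computes that moment via Lemma~\ref{thm:mm}. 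You instead decompose conditioned on $(\vx_n,\vx_0)$, so every KL term is Gaussian-to-Gaussian; the mean-dependent piece is then an explicit $\mSigma_n(\vx_n)^{-1}$-weighted quadratic whose pointwise minimizer over $\vmu_n(\vx_n)$ is $\E_{q(\vx_0|\vx_n)}[\tilde{\vmu}_n(\vx_n,\vx_0)]$ irrespective of the positive-definite weight. The two decompositions differ only by an additive constant (the expected conditional mutual information), so they define the same optimization, and both finish with the same affine substitution. What each approach buys: the paper's lemma is doing double duty --- the same moment-matching statement simultaneously yields the optimal covariances in Theorems~\ref{thm:opt_diag_no_err} and~\ref{thm:opt_diag}, so the mean result falls out as a byproduct --- whereas your argument is more elementary and self-contained for the mean-only claim (no appeal to the moment-matching identity for non-Gaussian targets) and makes the cancellation mechanism maximally explicit: $\mSigma_n$ depends only on $\vx_n$, so it factors out of the inner expectation and cannot tilt the minimizer. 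Two cosmetic points: your first display should carry the expectation over $q(\vx_0,\vx_n)$ rather than $q(\vx_n)$ (you correct this implicitly in the next display), and your separate treatment of the $n=1$ reconstruction term is needed only because of your chosen decomposition --- the paper's version absorbs it uniformly into the $n=1$ KL term.
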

Theorem~\ref{thm:opt_mean} reveals that the optimal mean doesn't depend on the covariance and therefore we can learn the mean on Eq.~{(\ref{eq:mse_eps})} without knowing information about the covariance.

In contrast to the mean, the optimal covariance to problem~{(\ref{eq:elbo_cov})} depends on the mean, as shown in Theorem~\ref{thm:opt_diag}. See the result for full covariances in Appendix~\ref{sec:ext_cov_g}.
\begin{restatable}{theorem}{optdiag}
\label{thm:opt_diag}
(Optimal solution to optimization w.r.t. covariance solely) Suppose $\mSigma_n(\vx_n) = \diag(\vsigma_n(\vx_n)^2)$. For any mean $\vmu_n(\vx_n)$ that is parameterized by a noise prediction network $\hat{\vepsilon}_n(\vx_n)$ as in Eq.~{(\ref{eq:mu_param})}, the optimal covariance $\tilde{\vsigma}_n^*(\vx_n)^2$ to problem~{(\ref{eq:elbo_cov})} is
\begin{align}
\tilde{\vsigma}_n^*(\vx_n)^2 =
& \vsigma_n^*(\vx_n)^2 + \gamma_n^2 \frac{\overline{\beta}_n}{\overline{\alpha}_n} \underbrace{(\hat{\vepsilon}_n(\vx_n) - \E_{q(\vx_0|\vx_n)}[\vepsilon_n] )^2}_{\text{\normalsize{error}}} \nonumber \\
= & \lambda_n^2 \vone \!+\! \gamma_n^2 \frac{\overline{\beta}_n}{\overline{\alpha}_n} \E_{q(\vx_0|\vx_n)}[(\vepsilon_n \!-\! \hat{\vepsilon}_n(\vx_n))^2], \label{eq:opt_cov}
\end{align}
where $\vsigma_n^*(\vx_n)^2$ is the optimal covariance to the joint optimization problem in Theorem~\ref{thm:opt_diag_no_err}, $\vepsilon_n = \frac{\vx_n - \sqrt{\overline{\alpha}_n} \vx_0}{\sqrt{\overline{\beta}_n}}$ is the noise used to generate $\vx_n$ from $\vx_0$, $(\cdot)^2$ is the element-wise square, $\vone$ is the vector of ones and $\gamma_n = \sqrt{\overline{\alpha}_{n-1}} - \sqrt{\overline{\beta}_{n-1} - \lambda_n^2} \sqrt{\frac{\overline{\alpha}_n}{\overline{\beta}_n}}$.
\end{restatable}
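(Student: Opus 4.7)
The plan is to reduce the ELBO maximization over covariances to a pointwise, coordinate-wise optimization problem, exploiting exactly the structure used in Theorem~\ref{thm:opt_diag_no_err} but now keeping $\hat{\vepsilon}_n$ (equivalently $\vmu_n$) fixed. First I would write the standard ELBO decomposition
\begin{align*}
\lelbo \;=\; \E_q \log p(\vx_N) - \sum_{n=2}^{N} \E_{q(\vx_0,\vx_n)} \KL\!\bigl(q(\vx_{n-1}|\vx_n,\vx_0)\,\|\,p(\vx_{n-1}|\vx_n)\bigr) + \text{(terms free of }\mSigma_n\text{)},
\end{align*}
and note that under the diagonal parametrization $\mSigma_n(\vx_n)=\diag(\vsigma_n(\vx_n)^2)$, only the $n$-th KL summand depends on $\vsigma_n(\vx_n)^2$, so we may optimize each $\vsigma_n(\vx_n)^2$ separately, and separately per coordinate.

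Next I would plug in the two Gaussians $q(\vx_{n-1}|\vx_n,\vx_0)=\gN(\tilde{\vmu}_n(\vx_n,\vx_0),\lambda_n^2\mI)$ and $p(\vx_{n-1}|\vx_n)=\gN(\vmu_n(\vx_n),\diag(\vsigma_n(\vx_n)^2))$ into the closed-form Gaussian KL, giving (up to constants in $\vsigma_n$)
\begin{align*}
\E_{q(\vx_0|\vx_n)}\KL \;=\; \tfrac{1}{2}\sum_{i=1}^{d}\!\left[\log \sigma_{n,i}(\vx_n)^2 + \frac{\lambda_n^2 + \E_{q(\vx_0|\vx_n)}[(\tilde{\mu}_{n,i}(\vx_n,\vx_0) - \mu_{n,i}(\vx_n))^2]}{\sigma_{n,i}(\vx_n)^2}\right].
\end{align*}
Differentiating with respect to $\sigma_{n,i}(\vx_n)^2$ coordinate-wise and setting the derivative to zero yields, at each $\vx_n$,
\begin{align*}
\tilde{\vsigma}_n^*(\vx_n)^2 \;=\; \lambda_n^2\vone + \E_{q(\vx_0|\vx_n)}\!\bigl[(\tilde{\vmu}_n(\vx_n,\vx_0)-\vmu_n(\vx_n))^2\bigr].
\end{align*}

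Then I would make this expression concrete by substituting the explicit forms of $\tilde{\vmu}_n$ and $\vmu_n$. Using $\vx_0=(\vx_n-\sqrt{\overline{\beta}_n}\vepsilon_n)/\sqrt{\overline{\alpha}_n}$ one computes $\tilde{\vmu}_n(\vx_n,\vx_0)=\tfrac{\sqrt{\overline{\alpha}_{n-1}}}{\sqrt{\overline{\alpha}_n}}\vx_n-\gamma_n\tfrac{\sqrt{\overline{\beta}_n}}{\sqrt{\overline{\alpha}_n}}\vepsilon_n$, and by the same manipulation applied to $\hat{\vepsilon}_n$ in Eq.~(\ref{eq:mu_param}), $\vmu_n(\vx_n)=\tfrac{\sqrt{\overline{\alpha}_{n-1}}}{\sqrt{\overline{\alpha}_n}}\vx_n-\gamma_n\tfrac{\sqrt{\overline{\beta}_n}}{\sqrt{\overline{\alpha}_n}}\hat{\vepsilon}_n(\vx_n)$. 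Subtracting gives $\tilde{\vmu}_n(\vx_n,\vx_0)-\vmu_n(\vx_n)=-\gamma_n\tfrac{\sqrt{\overline{\beta}_n}}{\sqrt{\overline{\alpha}_n}}(\vepsilon_n-\hat{\vepsilon}_n(\vx_n))$, which produces the second (square-of-residual) form in Eq.~(\ref{eq:opt_cov}). For the first (bias-plus-variance) form, I would add and subtract $\E_{q(\vx_0|\vx_n)}[\vepsilon_n]$ inside the square and use the bias–variance identity
\begin{align*}
\E_{q(\vx_0|\vx_n)}[(\vepsilon_n-\hat{\vepsilon}_n(\vx_n))^2] \;=\; \E_{q(\vx_0|\vx_n)}[\vepsilon_n^2]-\E_{q(\vx_0|\vx_n)}[\vepsilon_n]^2 + (\hat{\vepsilon}_n(\vx_n)-\E_{q(\vx_0|\vx_n)}[\vepsilon_n])^2,
\end{align*}
and then invoke the expression for $\vsigma_n^*(\vx_n)^2$ from Theorem~\ref{thm:opt_diag_no_err} to rewrite the first two terms.

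The main obstacles are mostly bookkeeping rather than conceptual: keeping the coordinate-wise optimization rigorous (one must justify that the minimizer of the expected KL is obtained by pointwise-in-$\vx_n$, coordinate-wise minimization, which is valid because $\vsigma_n(\vx_n)^2$ is an unconstrained function $\R^d\to\R_{>0}^d$) and carefully verifying the algebraic identity $\sqrt{\overline{\beta}_{n-1}-\lambda_n^2}-\sqrt{\overline{\alpha}_{n-1}\overline{\beta}_n/\overline{\alpha}_n}=-\gamma_n\sqrt{\overline{\beta}_n/\overline{\alpha}_n}$ so that the residual $\tilde{\vmu}_n-\vmu_n$ collapses cleanly to a multiple of $\vepsilon_n-\hat{\vepsilon}_n(\vx_n)$. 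Once that identity is in hand, both claimed forms of $\tilde{\vsigma}_n^*(\vx_n)^2$ follow immediately.
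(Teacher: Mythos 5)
Your proposal is correct, and it reaches the result by a route that is recognizably different in its machinery from the paper's, even though both rest on the same underlying fact (for a Gaussian with a fixed, possibly wrong mean, the optimal variance is the target's second moment about that mean). The paper decomposes $\KL(q(\vx_{0:N})\|p(\vx_{0:N}))$ into KL terms against the \emph{marginal} posteriors $q(\vx_{n-1}|\vx_n)$, invokes a general ``conditioned moment matching'' lemma ($\tilde{\vsigma}^{*2}=\diag(\mSigma_q)+(\vmu-\vmu_q)^2$ for an arbitrary target density $q$), and then computes the mean and diagonal covariance of $q(\vx_{n-1}|\vx_n)$ via the law of total variance; this lands first on the ``$\vsigma_n^*(\vx_n)^2$ plus squared bias'' form, and algebra yields the residual form. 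You instead use the Ho-et-al.\ decomposition into $\vx_0$-\emph{conditional} KL terms $\E_{q(\vx_0,\vx_n)}\KL(q(\vx_{n-1}|\vx_n,\vx_0)\|p(\vx_{n-1}|\vx_n))$, write the expected Gaussian KL explicitly, and differentiate coordinate-wise, which lands directly on $\lambda_n^2\vone+\E_{q(\vx_0|\vx_n)}[(\tilde{\vmu}_n-\vmu_n)^2]$ and hence on the residual form first; the bias--variance identity then recovers the other form. Your version is more self-contained (no moments of the marginal posterior, no separate lemma for general target densities) and arrives immediately at the expression that the NPR network is actually trained to predict; the paper's version buys reusability, since the same two lemmas also drive Theorems 3.2 and 3.3 and the full-covariance and continuous-time analogues. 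The two decompositions differ only by a term independent of $p$, so they have the same optimizer, and your key algebraic identity $\tilde{\vmu}_n(\vx_n,\vx_0)-\vmu_n(\vx_n)=-\gamma_n\sqrt{\overline{\beta}_n/\overline{\alpha}_n}\,(\vepsilon_n-\hat{\vepsilon}_n(\vx_n))$ checks out against the definition of $\gamma_n$ (it is Eq.~(\ref{eq:err}) of the appendix up to sign). One bookkeeping point: your sum starts at $n=2$ and relegates the rest to ``terms free of $\mSigma_n$,'' but the reconstruction term $\E_q\log p(\vx_0|\vx_1)$ does depend on $\mSigma_1$; the identical cross-entropy computation (with $\lambda_1^2=0$ and $\tilde{\vmu}_1(\vx_1,\vx_0)=\vx_0$) gives the same formula for $n=1$, so nothing breaks, but the case should be mentioned rather than absorbed silently.
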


The proof of Theorem~\ref{thm:opt_diag} mainly builds on this fact: minimizing the KL divergence between a target density and a Gaussian density conditioned on a fixed mean is equivalent to firstly obtaining the second moment of the target density, and then correcting it by the difference between the first moments of the two densities. We term it \textit{conditioned moment matching} (see Lemma~\ref{thm:opt_sigma_general}).

Theorem~\ref{thm:opt_diag} shows that the optimal covariance to the joint optimization problem in Theorem~\ref{thm:opt_diag_no_err} should be corrected by the error of $\hat{\vepsilon}_n(\vx_n)$ when $\hat{\vepsilon}_n(\vx_n) \neq \E_{q(\vx_0|\vx_n)}[\vepsilon]$, which is inevitable in practice due to a non-realizable model family and the optimization error of gradient-based methods.

According to Eq.~{(\ref{eq:opt_cov})}, the corrected covariance $\tilde{\vsigma}_n^*(\vx_n)^2$ is determined by $\E_{q(\vx_0|\vx_n)}[(\vepsilon_n - \hat{\vepsilon}_n(\vx_n))^2]$. We learn it by training a neural network $\vg_n(\vx_n) \in \sR^d$ that predicts $(\vepsilon_n - \hat{\vepsilon}_n(\vx_n))^2$ given $\vx_n$ on the following MSE loss
\begin{align}
\label{eq:mse_npr}
    \min\limits_{\{\vg_n\}_{n=1}^N} \E_n \E_{q(\vx_0, \vx_n)} \|(\vepsilon_n \!-\! \hat{\vepsilon}_n(\vx_n))^2 - \vg_n(\vx_n)\|_2^2,
\end{align}
where $n$ is uniformly sampled from $\{1,2,\cdots,N\}$.
Here we term $(\vepsilon_n - \hat{\vepsilon}_n(\vx_n))^2$ the \textit{noise prediction residual (NPR)}, which is the residual between the ground truth noise and the noise predicted by the noise prediction network, and we call $\vg_n(\vx_n)$ the \textit{NPR prediction network}.
Eq.~{(\ref{eq:mse_npr})} admits an optimal solution when $\vg_n(\vx_n) = \E_{q(\vx_0|\vx_n)}[(\vepsilon_n - \hat{\vepsilon}_n(\vx_n))^2]$ for all $n \in \{1,2,\cdots,N\}$. Note that in contrast to $\hat{\vepsilon}_n(\vx_n)^2$ used in SN-DPM, $\vg_n(\vx_n)$ doesn't have the error amplification problem (see Appendix~\ref{sec:err_amp} for details). With a NPR prediction network $\vg_n(\vx_n)$ trained on Eq.~{(\ref{eq:mse_npr})}, we estimate $\tilde{\vsigma}_n^*(\vx_n)^2$ by
\begin{align}
\label{eq:err_est}
    \hat{\tilde{\vsigma}}_n(\vx_n)^2 = \lambda_n^2 \vone + \gamma_n^2 \frac{\overline{\beta}_n}{\overline{\alpha}_n} \vg_n(\vx_n).
\end{align}
In the paper, we refer to the above estimate as \textit{\nprdpm}.

\section{Implementation}
\label{sec:imp}

We provide more details of implementing \sndpm{}  and \nprdpm{}.
Specifically, we design a parameter sharing scheme for inference efficiency in Section~\ref{sec:ps}. Motivated by Theorem~{\ref{thm:opt_mean} \& \ref{thm:opt_diag}}, we employ a two-stage training process for both \sndpm{} and \nprdpm{} in Section~\ref{sec:pretrain}.

\subsection{Parameter Sharing and Inference Efficiency}
\label{sec:ps}

During inference, both \sndpm{} and \nprdpm{} need to evaluate two networks.
To reduce the computation cost, we let the two neural networks share parameters as follows
\begin{align}
    \hat{\vepsilon}_n(\vx_n) & = \mathrm{NN}_1(\mathrm{UNet}(\vx_n, n; \vtheta); \vphi_1), \nonumber \\
    \vh_n(\vx_n) \mbox{ or } \vg_n(\vx_n) & = \mathrm{NN}_2(\mathrm{UNet}(\vx_n, n; \vtheta); \vphi_2), \label{eq:nn2}
\end{align}
where UNet is the most commonly used architecture in DPMs~\cite{ho2020denoising,song2020score} and is parameterized by the shared parameter $\vtheta$, $\mathrm{NN}_1$ and $\mathrm{NN}_2$ are two small neural networks such as convolution layers and are parameterized by $\vphi_1$ and $\vphi_2$ respectively.

In comparison to original DPMs with the same UNet, both \sndpm{} and \nprdpm{} require negligible additional memory cost and at most $10\%$ more computation time (see details in Appendix~\ref{sec:mem_time}). 

\subsection{Two-Stage Learning and  Pretraining}
\label{sec:pretrain}
\vspace{-.1cm}

Recall that the optimal mean to problem~{(\ref{eq:elbo_mean})} is irrelevant to the covariance. Thus, the learning of the mean and the covariance can naturally be divided into two stages. In the first stage, we learn the mean by training a noise prediction network on Eq.~{(\ref{eq:mse_eps})}, or just use a pretrained one from prior works. In the second stage, we fix the parameter of the noise prediction network, take the UNet as a pretrained model, and only learn the covariance by tuning the parameter $\vphi_2$ (in Eq.~{(\ref{eq:nn2})}) of the NPR (or SN) prediction network using the gradient of Eq.~{(\ref{eq:mse_npr})} (or Eq.~{(\ref{eq:mse_eps2})}):
\begin{align*}
& \nabla_{\vphi_2} \E_n \E_{q(\vx_0, \vx_n)} \|(\vepsilon_n - \hat{\vepsilon}_n(\vx_n))^2 - \vg_n(\vx_n)\|_2^2, \mbox{ or } \\
& \nabla_{\vphi_2} \E_n \E_{q(\vx_0, \vx_n)} \|\vepsilon_n^2 - \vh_n(\vx_n) \|_2^2.
\end{align*}
Note that this pretraining paradigm for the second stage only needs to tune the small neural network $\mathrm{NN}_2$, which does not need to keep the computation graph of the UNet. Algorithm~{\ref{algo:npr}\&\ref{algo:sn}} present the second stage learning procedure.

\begin{figure*}
\begin{minipage}{0.48\textwidth}
\begin{algorithm}[H]
  \begin{algorithmic}[1]
  \caption{Learning of the NPR prediction network}
  \label{algo:npr}
    \STATE {\bfseries Input:} The NPR prediction network $\vg_n(\vx_n)$ and its tunable parameter $\vphi_2$; the pretrained noise prediction network $\hat{\vepsilon}_n(\vx_n)$
    \REPEAT 
    \STATE $\vx_0 \sim q(\vx_0)$
    \STATE $n \sim \mathrm{Uniform}(\{1,\cdots,N\})$
    \STATE $\vepsilon_n \sim \gN(\vzero, \mI)$
    \STATE $\vx_n = \sqrt{\overline{\alpha}_n} \vx_0 + \sqrt{\overline{\beta}_n} \vepsilon_n$
    \STATE Take gradient descent step on \\ \qquad $\nabla_{\vphi_2} \|(\vepsilon_n - \hat{\vepsilon}_n(\vx_n))^2 - \vg_n(\vx_n)\|_2^2$
    \UNTIL{converged}
  \end{algorithmic}
\end{algorithm}
\end{minipage}
\hspace{.1cm}
\begin{minipage}{0.48\textwidth}
\begin{algorithm}[H]
  \begin{algorithmic}[1]
  \caption{Learning of the SN prediction network}
  \label{algo:sn}
    \STATE {\bfseries Input:} The SN prediction network $\vh_n(\vx_n)$ and its tunable parameter $\vphi_2$
    \REPEAT 
    \STATE $\vx_0 \sim q(\vx_0)$
    \STATE $n \sim \mathrm{Uniform}(\{1,\cdots,N\})$
    \STATE $\vepsilon_n \sim \gN(\vzero, \mI)$
    \STATE $\vx_n = \sqrt{\overline{\alpha}_n} \vx_0 + \sqrt{\overline{\beta}_n} \vepsilon_n$
    \STATE Take gradient descent step on $\nabla_{\vphi_2} \|\vepsilon_n^2 - \vh_n(\vx_n) \|_2^2$
    \UNTIL{converged}
  \end{algorithmic}
\end{algorithm}
\end{minipage}
\end{figure*}

\section{Extension to DPMs with Continuous Timesteps}
\label{sec:ext_cdpm}

\citet{song2020score} generalize DPMs to continuous timesteps by introducing a SDE $\mathrm{d} \vx = f(t) \vx \mathrm{d} t + g(t) \mathrm{d} \vw$, where $f(t)$ and $g(t)$ are two pre-specified scalar functions and $\vw$ is the standard Wiener process. The SDE can be viewed as a continuous version of the forward process in Eq.~{(\ref{eq:ddim})}. It constructs a diffusion process $\{\vx_t\}_{t=0}^T$ indexed by a continuous timestep $t \in [0, T]$, where $\vx_0$ obeys the data distribution $q(\vx_0)$. For $0 \leq s < t \leq T$, the conditional distribution of $\vx_t$ given $\vx_s$ is $q(\vx_t|\vx_s) = \gN(\vx_t|\sqrt{\alpha_{t|s}}\vx_t, \beta_{t|s} \mI)$, where $\alpha_{t|s} = e^{2 \int_s^t f(\tau) \mathrm{d} \tau}$ and $\beta_{t|s} = \int_s^t g(\tau)^2 \alpha_{t|\tau} \mathrm{d} \tau$. Following \citet{bao2022analytic,kingma2021variational}, we reverse the diffusion process from timesteps $t$ to $s$ by $p(\vx_s|\vx_t) = \gN(\vx_s|\vmu_{s|t}(\vx_t), \mSigma_{s|t}(\vx_t))$ ($0 \leq s < t \leq T$), which is learned by
\begin{align}
\label{eq:ckl}
    \min\limits_{\vmu_{s|t}, \mSigma_{s|t}} \ell_{s,t} \coloneqq \E_{q(\vx_t)} \KL(q(\vx_s|\vx_t) \| p(\vx_s|\vx_t)),
\end{align}
where $q(\vx_t)$ is the marginal distribution of $\vx_t$. Similarly to the optimal solution in Eq.~{(\ref{eq:opt_mean}) \& (\ref{eq:opt_var})},  \citet{bao2022analytic} derive the optimal mean
\begin{align}
\label{eq:opt_mean_c}
    \vmu_{s|t}^*(\vx_t) = \frac{1}{\sqrt{\alpha_{t|s}}}(\vx_t - \frac{\beta_{t|s}}{\sqrt{\beta_{t|0}}} \E_{q(\vx_0|\vx_t)}[\vepsilon_t]),
\end{align}
where $\vepsilon_t = \frac{\vx_t - \sqrt{\alpha_{t|0}}\vx_0}{\sqrt{\beta_{t|0}}}$, as well as the optimal isotropic covariance $\sigma_{s|t}^{*2} \mI$. Similarly to Eq.~{(\ref{eq:mse_eps})}, a noise prediction network $\hat{\vepsilon}_t(\vx_t)$ is used to learn $\E_{q(\vx_0|\vx_t)}[\vepsilon_t]$, inducing an estimate of the optimal mean
\begin{align}
\label{eq:mu_param_c}
    \hat{\vmu}_{s|t}(\vx_t) = \frac{1}{\sqrt{\alpha_{t|s}}}(\vx_t - \frac{\beta_{t|s}}{\sqrt{\beta_{t|0}}} \hat{\vepsilon}_t(\vx_t)).
\end{align}
Firstly, we extend Theorem~\ref{thm:opt_diag_no_err} to DPMs with continuous timesteps. We derive the optimal solution to problem~{(\ref{eq:ckl})} when $\mSigma_{s|t}(\vx_t) = \diag(\vsigma_{s|t}(\vx_t)^2)$, as shown in Proposition~\ref{thm:opt_cov_c_no_err}.

\begin{restatable}{proposition}{optcovcnoerr}
\label{thm:opt_cov_c_no_err}
Suppose $\mSigma_{s|t}(\vx_t) = \diag(\vsigma_{s|t}(\vx_t)^2)$. Then the optimal mean to problem~{(\ref{eq:ckl})} is $\vmu_{s|t}^*(\vx_t)$ as in Eq.~{(\ref{eq:opt_mean_c})}, and the optimal covariance to problem~{(\ref{eq:ckl})} is
\begin{align*}
\vsigma_{s|t}^*(\vx_t)^2 \!=\! \tilde{\beta}_{s|t}\vone \!+\! \frac{\beta_{t|s}^2}{\beta_{t|0}\alpha_{t|s}}\! \left( \! \E_{q(\vx_0|\vx_t)}\![\vepsilon_t^2 \!] \!-\! \E_{q(\vx_0|\vx_t)}\![\vepsilon_t \!]^2 \! \right),
\end{align*}
where $\vepsilon_t = \frac{\vx_t - \sqrt{\alpha_{t|0}} \vx_0}{\sqrt{\beta_{t|0}}}$ is the noise used to generate $\vx_t$ from $\vx_0$ and $\tilde{\beta}_{s|t}= \frac{\beta_{s|0}}{\beta_{t|0}}\beta_{t|s}$.
\end{restatable}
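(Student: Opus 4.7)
The plan is to mirror the proof of Theorem~\ref{thm:opt_diag_no_err}, substituting the discrete forward-process constants with their continuous analogues $\alpha_{t|s}, \beta_{t|s}, \tilde{\beta}_{s|t}$. First, I would apply Gaussian Bayes to the two known Gaussians $q(\vx_t|\vx_s)$ and $q(\vx_s|\vx_0)$ to conclude that $q(\vx_s|\vx_t,\vx_0)$ is itself Gaussian with covariance $\tilde{\beta}_{s|t}\mI$ and a mean linear in $(\vx_t,\vx_0)$. Rewriting $\vx_0 = (\vx_t - \sqrt{\beta_{t|0}}\,\vepsilon_t)/\sqrt{\alpha_{t|0}}$ and using the continuous semigroup identities $\alpha_{t|0}=\alpha_{t|s}\alpha_{s|0}$ and $\beta_{t|0}=\beta_{t|s}+\alpha_{t|s}\beta_{s|0}$ (both immediate from the defining integrals $\alpha_{t|s}=e^{2\int_s^t f(\tau)\,d\tau}$ and $\beta_{t|s}=\int_s^t g(\tau)^2 \alpha_{t|\tau}\,d\tau$), this mean collapses to $\frac{1}{\sqrt{\alpha_{t|s}}}\bigl(\vx_t - \frac{\beta_{t|s}}{\sqrt{\beta_{t|0}}}\vepsilon_t\bigr)$.

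Second, since $p(\vx_s|\vx_t)$ is Gaussian with diagonal covariance, I would invoke the same conditioned moment-matching fact used in the proof of Theorem~\ref{thm:opt_diag}: among Gaussians with diagonal covariance, the minimizer of $\KL\bigl(q(\vx_s|\vx_t)\,\|\,\gN(\vx_s|\vmu,\diag(\vsigma^2))\bigr)$ in $(\vmu,\vsigma^2)$ is $\vmu=\E_{q(\vx_s|\vx_t)}[\vx_s]$ together with $\vsigma^2$ equal to the diagonal of $\Var_{q(\vx_s|\vx_t)}[\vx_s]$. I would then evaluate both moments by conditioning on $\vx_0$: the tower property combined with Step~1 yields the optimal mean $\frac{1}{\sqrt{\alpha_{t|s}}}\bigl(\vx_t - \frac{\beta_{t|s}}{\sqrt{\beta_{t|0}}}\E_{q(\vx_0|\vx_t)}[\vepsilon_t]\bigr)$, which is exactly Eq.~{(\ref{eq:opt_mean_c})}, while the law of total variance decomposes the optimal covariance as the expected inner covariance $\tilde{\beta}_{s|t}\vone$ plus the elementwise variance of the inner mean under $q(\vx_0|\vx_t)$. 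The latter is simply the square of the $\vepsilon_t$-coefficient $\beta_{t|s}/\sqrt{\alpha_{t|s}\beta_{t|0}}$ times $\E_{q(\vx_0|\vx_t)}[\vepsilon_t^2]-\E_{q(\vx_0|\vx_t)}[\vepsilon_t]^2$, producing the claimed prefactor $\beta_{t|s}^2/(\alpha_{t|s}\beta_{t|0})$.

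The main obstacle I expect is purely algebraic bookkeeping: verifying the continuous semigroup identities for $\alpha_{t|\cdot}$ and $\beta_{t|\cdot}$, and checking that the squared coefficient of $\vepsilon_t$ simplifies to the compact form stated in the proposition. No new probabilistic ingredient beyond the discrete argument is required; the SDE enters only through these explicit expressions for $\alpha_{t|s}$ and $\beta_{t|s}$, so once the identities are in hand the remainder is a mechanical transcription of the proof of Theorem~\ref{thm:opt_diag_no_err}.
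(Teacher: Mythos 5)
Your proposal is correct and follows essentially the same route as the paper: the paper's proof also rests on the conditioned moment-matching lemma (Lemma~\ref{thm:opt_sigma_general}) combined with a moment computation (Lemma~\ref{thm:mm_c}) that writes $q(\vx_s|\vx_t,\vx_0)$ as a Gaussian with covariance $\tilde{\beta}_{s|t}\mI$ and applies the law of total variance, yielding exactly the prefactor $\beta_{t|s}^2/(\beta_{t|0}\alpha_{t|s})$ via the identity $\alpha_{t|0}=\alpha_{t|s}\alpha_{s|0}$. The only cosmetic difference is that the paper obtains the conditional mean through the score-function identity $\E_{q(\vx_s|\vx_t)}[\vx_s]=\frac{1}{\sqrt{\alpha_{t|s}}}(\vx_t+\beta_{t|s}\nabla\log q_t(\vx_t))$ rather than your tower-property computation, but both give the same result.
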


Then, we extend Theorem~\ref{thm:opt_diag} to DPMs with continuous timesteps. Similarly to problem~{(\ref{eq:elbo_mean}) \& (\ref{eq:elbo_cov})}, we consider the following two optimizations w.r.t. the mean or the covariance solely:
\begin{align}
\mbox{Given arbitrary covariance }\mSigma_{s|t}, &\min\limits_{\vmu_{s|t}} \ell_{s,t}, \label{eq:kl_mean_c} \\
\mbox{Given arbitrary mean }\vmu_{s|t}, &\min\limits_{\mSigma_{s|t}} \ell_{s,t} \label{eq:kl_cov_c}.
\end{align}
The optimal mean to problem~{(\ref{eq:kl_mean_c})} is also Eq.~{(\ref{eq:opt_mean_c})}, which is irrelevant to $\mSigma_{s|t}$. The optimal covariance to problem~{(\ref{eq:kl_cov_c})} is given in Proposition~\ref{thm:opt_cov_c}.

\begin{restatable}{proposition}{optcovc}
\label{thm:opt_cov_c}
Suppose $\mSigma_{s|t}(\vx_t) = \diag(\vsigma_{s|t}(\vx_t)^2)$. For any mean $\vmu_{s|t}(\vx_t)$ that is parameterized by a noise prediction network $\hat{\vepsilon}_t(\vx_t)$ as in Eq.~{(\ref{eq:mu_param_c})}, the optimal covariance $\tilde{\vsigma}_{s|t}^*(\vx_t)^2$ to problem~{(\ref{eq:kl_cov_c})} is
\begin{align*}
\tilde{\vsigma}_{s|t}^*(\vx_t)^2 \!=\! & \vsigma_{s|t}^*(\vx_t)^2 \!+\! \frac{\beta_{t|s}^2}{\beta_{t|0}\alpha_{t|s}} \underbrace{(\hat{\vepsilon}_t(\vx_t) \!-\! \E_{q(\vx_0|\vx_t)}[\vepsilon_t])^2}_{\text{\normalsize{error}}} \\
= & \tilde{\beta}_{s|t} \vone + \frac{\beta_{t|s}^2}{\beta_{t|0}\alpha_{t|s}} \E_{q(\vx_0|\vx_t)}[(\vepsilon_t - \hat{\vepsilon}_t(\vx_t))^2],
\end{align*}
where $\vepsilon_t = \frac{\vx_t - \sqrt{\alpha_{t|0}} \vx_0}{\sqrt{\beta_{t|0}}}$ is the noise used to generate $\vx_t$ from $\vx_0$ and $\tilde{\beta}_{s|t}= \frac{\beta_{s|0}}{\beta_{t|0}}\beta_{t|s}$.
\end{restatable}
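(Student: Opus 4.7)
The plan is to mirror the proof of Theorem~\ref{thm:opt_diag} in the continuous setting, relying on the same ``conditioned moment matching'' principle (Lemma~\ref{thm:opt_sigma_general}) that was invoked there. The key observation is that the objective $\ell_{s,t}$ decouples across coordinates whenever $\mSigma_{s|t}(\vx_t)$ is constrained to be diagonal, so the minimization over $\vsigma_{s|t}(\vx_t)^2$ given a fixed mean $\hat{\vmu}_{s|t}(\vx_t)$ reduces, at each fixed $\vx_t$, to minimizing the KL divergence from the target $q(\vx_s|\vx_t)$ to a Gaussian with a prescribed (possibly wrong) mean and a free diagonal covariance. By Lemma~\ref{thm:opt_sigma_general}, the optimal diagonal covariance equals the second central moment of $q(\vx_s|\vx_t)$ shifted by the squared mismatch between its true mean and the fixed mean, i.e.
\begin{align*}
\tilde{\vsigma}_{s|t}^*(\vx_t)^2 = \Var_{q(\vx_s|\vx_t)}[\vx_s] + \bigl(\E_{q(\vx_s|\vx_t)}[\vx_s] - \hat{\vmu}_{s|t}(\vx_t)\bigr)^2,
\end{align*}
where $\Var$ denotes the element-wise variance.

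Next, I would compute the two ingredients separately. For the first ingredient, the law of total variance applied through $q(\vx_s|\vx_t) = \int q(\vx_s|\vx_t,\vx_0)\, q(\vx_0|\vx_t)\,\mathrm{d}\vx_0$, combined with the fact that $q(\vx_s|\vx_t,\vx_0)$ is Gaussian with variance $\tilde{\beta}_{s|t} \mI$, immediately yields $\Var_{q(\vx_s|\vx_t)}[\vx_s] = \vsigma_{s|t}^*(\vx_t)^2$ as given by Proposition~\ref{thm:opt_cov_c_no_err}. For the second ingredient, both $\vmu_{s|t}^*(\vx_t)$ in Eq.~(\ref{eq:opt_mean_c}) and $\hat{\vmu}_{s|t}(\vx_t)$ in Eq.~(\ref{eq:mu_param_c}) have the common form $\frac{1}{\sqrt{\alpha_{t|s}}}\bigl(\vx_t - \frac{\beta_{t|s}}{\sqrt{\beta_{t|0}}}\cdot(\text{noise term})\bigr)$, so subtracting and squaring element-wise gives
\begin{align*}
\bigl(\vmu_{s|t}^*(\vx_t) - \hat{\vmu}_{s|t}(\vx_t)\bigr)^2 = \frac{\beta_{t|s}^2}{\beta_{t|0}\,\alpha_{t|s}}\,\bigl(\hat{\vepsilon}_t(\vx_t) - \E_{q(\vx_0|\vx_t)}[\vepsilon_t]\bigr)^2,
\end{align*}
which plugs in directly to give the first displayed expression in the proposition.

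For the second displayed expression, I would combine the two terms using the identity $\E[(Y - c)^2] = \Var[Y] + (\E[Y] - c)^2$ applied element-wise with $Y = \vepsilon_t$ under $q(\vx_0|\vx_t)$ and $c = \hat{\vepsilon}_t(\vx_t)$. This collapses the sum $\bigl(\E_{q(\vx_0|\vx_t)}[\vepsilon_t^2] - \E_{q(\vx_0|\vx_t)}[\vepsilon_t]^2\bigr) + \bigl(\hat{\vepsilon}_t(\vx_t) - \E_{q(\vx_0|\vx_t)}[\vepsilon_t]\bigr)^2$ into $\E_{q(\vx_0|\vx_t)}[(\vepsilon_t - \hat{\vepsilon}_t(\vx_t))^2]$, and the prefactor $\tilde{\beta}_{s|t}\vone$ comes along unchanged.

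The main obstacle is not conceptual but bookkeeping: establishing a clean continuous-time analog of Lemma~\ref{thm:opt_sigma_general} that applies pointwise in $\vx_t$ after the outer expectation $\E_{q(\vx_t)}$ in problem~(\ref{eq:kl_cov_c}) is dropped (this is legitimate because the integrand is non-negative and the covariance is allowed to depend on $\vx_t$). Once that is in place, the calculation is essentially identical to the discrete case; in particular, verifying that the coefficients $\frac{\beta_{t|s}^2}{\beta_{t|0}\alpha_{t|s}}$ and $\tilde{\beta}_{s|t}$ play the same algebraic roles as $\gamma_n^2 \overline{\beta}_n/\overline{\alpha}_n$ and $\lambda_n^2$ does require a brief check against the definitions $\alpha_{t|s}=e^{2\int_s^t f}$, $\beta_{t|s}=\int_s^t g^2 \alpha_{t|\tau}\,\mathrm{d}\tau$, and $\tilde{\beta}_{s|t}=\beta_{s|0}\beta_{t|s}/\beta_{t|0}$.
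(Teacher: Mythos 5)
Your proposal is correct and follows essentially the same route as the paper: conditioned moment matching (Lemma~\ref{thm:opt_sigma_general}) applied pointwise in $\vx_t$, the law of total variance through $q(\vx_0|\vx_t)$ to identify $\diag(\Cov_{q(\vx_s|\vx_t)}[\vx_s])$ with $\vsigma_{s|t}^*(\vx_t)^2$, the mean-mismatch computation yielding the $\frac{\beta_{t|s}^2}{\beta_{t|0}\alpha_{t|s}}$ prefactor, and the variance identity to collapse the two terms into $\E_{q(\vx_0|\vx_t)}[(\vepsilon_t - \hat{\vepsilon}_t(\vx_t))^2]$. The only cosmetic difference is that the paper packages the moment computations into Lemma~\ref{thm:mm_c} and cites Proposition~\ref{thm:opt_cov_c_no_err}, whereas you redo them inline.
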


Similarly to Eq.~{(\ref{eq:opt_est}) \& (\ref{eq:err_est})}, to obtain estimates of $\vsigma_{s|t}^*(\vx_t)^2$ and $\tilde{\vsigma}_{s|t}^*(\vx_t)^2$, we can learn conditional expectations appeared in Proposition~{\ref{thm:opt_cov_c_no_err} \& \ref{thm:opt_cov_c}} by minimizing MSE losses. Please see details in Appendix~\ref{sec:detail_ext_c}.

\section{Experiments}
\label{sec:exp}
We evaluate \sndpm{} and \nprdpm{} in DPMs with both discrete and continuous timesteps\footnote{We provide our implementation at \url{https://github.com/baofff/Extended-Analytic-DPM}.}.

As for DPMs with discrete timesteps (see Section~\ref{sec:bg}), we consider the DDPM forward process (corresponding to $\lambda_n^2 = \tilde{\beta}_n$ in Eq.~{(\ref{eq:ddim})}), and the DDIM forward process (corresponding to $\lambda_n^2 = 0$ in Eq.~{(\ref{eq:ddim})}). We explicitly call our method \textit{\nprddpm} and \textit{\nprddim} (or \textit{\snddpm} and \textit{\snddim}) in these two cases respectively. We compare our method with the following baselines: (\romannumeral1) the \emph{original DDPM}~\citep{ho2020denoising}, which uses handcrafted values $\mSigma_n(\vx_n) = \tilde{\beta}_n \mI$ or $\mSigma_n(\vx_n) = \beta_n \mI$, (\romannumeral2) the \emph{original DDIM}~\citep{song2020denoising}, which uses handcrafted values $\mSigma_n(\vx_n) = \lambda_n^2 \mI = \vzero$, and (\romannumeral3) the \emph{Analytic-DPM}~\cite{bao2022analytic}, as introduced in Section~\ref{sec:bg}.

As for DPMs with continuous timesteps (see Section~\ref{sec:ext_cdpm}), we consider the VP SDE~\cite{song2020score} as the forward process.
We compare our method with the following baselines: (\romannumeral1) the \emph{Euler-Maruyama solver}~\citep{song2020score}, which firstly reverses the VP SDE and then discretizes the reverse one using the Euler-Maruyama solver, (\romannumeral2) the \emph{ancestral sampling}~\citep{song2020score}, which designs a Markov chain similar to Eq.~{(\ref{eq:markov})} and samples from the Markov chain, (\romannumeral3) the \emph{probability flow}~\citep{song2020score}, which firstly derives an equivalent ODE to the SDE and then discretizes the ODE, (\romannumeral4) the \emph{Analytic-DPM}~\cite{bao2022analytic}, 
as introduced in Section~\ref{sec:bg}, and (\romannumeral5) the \textit{``Gotta Go Fast'' SDE solver}~\citep{jolicoeur2021gotta}, which uses adaptive step sizes.

Since the performance on a subset of timesteps is crucial, we compare our method and baselines constrained on trajectories $1 \leq \tau_1 < \cdots < \tau_K = N$ with different number of timesteps $K$~\cite{song2020denoising,bao2022analytic} (see Appendix~\ref{sec:detail_fast} for details of how to apply our method to trajectories). Following \citet{bao2022analytic}, we consider two kinds of trajectories. The first one is the even trajectory (ET)~\cite{nichol2021improved}, where the timesteps are evenly spaced. The second one is the optimal trajectory (OT)~\cite{watson2021learning}, where the timesteps are determined by dynamic programming that maximizes ELBO.

We evaluate our method on six pretrained noise prediction networks provided by prior works~\cite{ho2020denoising,song2020score,song2020denoising,nichol2021improved,bao2022analytic}. Three of them are trained on CIFAR10~\cite{krizhevsky2009learning} with the linear schedule (LS) of $\beta_n$~\cite{ho2020denoising}, the cosine schedule (CS) of $\beta_n$~\cite{nichol2021improved} and the VP SDE~\cite{song2020score} respectively. We refer to the first two settings as CIFAR10 (LS) and CIFAR10 (CS) respectively, which have discrete timesteps, and refer to the last setting as CIFAR10 (VP SDE), which has continuous timesteps. The others are trained with discrete timesteps on CelebA 64x64~\citep{liu2015faceattributes}, ImageNet 64x64~\citep{deng2009imagenet} and LSUN Bedroom~\citep{yu15lsun} respectively. We train NPR (or SN) prediction networks for all of them, following the implementation in Section~\ref{sec:imp}. See more experimental details in Appendix~\ref{sec:exp_detail}.

\begin{table*}[t]
\centering
\caption{FID score $\downarrow$. All results are evaluated under the ET. Here A-DPM denotes the baseline Analytic-DPM. Note that the extra time cost of the NPR (or SN) prediction network is negligible on CIFAR10, CelebA 64x64, and at most 4.5\% on ImageNet 64x64 (see Appendix~\ref{sec:mem_time} for details). Thus, we can use the number of timesteps to compare the efficiency of these methods.}
\label{tab:fid}
\vskip 0.15in
\begin{small}
\begin{sc}
\setlength{\tabcolsep}{5.3pt}{
\begin{tabular}{lrrrrrrrrrrrr}
\toprule
& \multicolumn{6}{c}{CIFAR10 (LS)} & \multicolumn{6}{c}{CIFAR10 (CS)} \\
\cmidrule(lr){2-7} \cmidrule(lr){8-13}
\# timesteps $K$ & 
10 & 25 & 50 & 100 & 200 & 1000 &
10 & 25 & 50 & 100 & 200 & 1000 \\
\midrule
DDPM, $\tilde{\beta}_n$ & 
44.45 & 21.83 & 15.21 & 10.94 & 8.23 & 5.11 & 
34.76 & 16.18 & 11.11 & 8.38 & 6.66 & 4.92 \\
DDPM, $\beta_n$ & 
233.41 & 125.05 & 66.28 & 31.36 & 12.96 & \textbf{3.04} &
205.31 & 84.71 & 37.35 & 14.81 & 5.74 & \textbf{3.34} \\
A-DDPM & 
34.26 & 11.60 & 7.25 & 5.40 & 4.01 & 4.03 &
22.94 & 8.50 & 5.50 & 4.45 & 4.04 & 4.31 \\
\nprddpm & 
32.35 & 10.55 & 6.18 & 4.52 & 3.57 & 4.10 &
19.94 & 7.99 & 5.31 & 4.52 & 4.10 & 4.27 \\
\snddpm &
\textbf{24.06} & \textbf{6.91} & \textbf{4.63} & \textbf{3.67} & \textbf{3.31} & 3.65 & 
\textbf{16.33} & \textbf{6.05} & \textbf{4.17} & \textbf{3.83} & \textbf{3.72} & 4.07 \\
\arrayrulecolor{black!30}\midrule
DDIM & 
21.31 & 10.70 & 7.74 & 6.08 & 5.07 & 4.13 &
34.34 & 16.68 & 10.48 & 7.94 & 6.69 & 4.89 \\
A-DDIM & 
14.00 & 5.81 & 4.04 & 3.55 & 3.39 & 3.74 &
26.43 & 9.96 & 6.02 & 4.88 & 4.92 & 4.66 \\
\nprddim & 
13.34 & 5.38 & 3.95 & 3.53 & 3.42 & 3.72 &
22.81 & 9.47 & 6.04 & 5.02 & 5.06 & 4.62 \\
\snddim & 
\textbf{12.19} & \textbf{4.28} & \textbf{3.39} & \textbf{3.23} & \textbf{3.22} & \textbf{3.65} &
\textbf{17.90} & \textbf{7.36} & \textbf{5.16} & \textbf{4.63} & \textbf{4.63} & \textbf{4.51} \\
\arrayrulecolor{black}\midrule
\end{tabular}}
\end{sc}
\end{small}
\vskip 0.05in
\begin{small}
\begin{sc}
\setlength{\tabcolsep}{5.3pt}{
\begin{tabular}{lrrrrrrrrrrrr}
\toprule
& \multicolumn{6}{c}{CelebA 64x64} & \multicolumn{6}{c}{ImageNet 64x64} \\
\cmidrule(lr){2-7} \cmidrule(lr){8-13}
\# timesteps $K$ & 
10 & 25 & 50 & 100 & 200 & 1000 &
25 & 50 & 100 & 200 & 400 & 4000 \\
\midrule
DDPM, $\tilde{\beta}_n$ & 
36.69 & 24.46 & 18.96 & 14.31 & 10.48 & 5.95 & 
29.21 & 21.71 & 19.12 & 17.81 & 17.48 & 16.55 \\
DDPM, $\beta_n$ & 
294.79 & 115.69 & 53.39 & 25.65 & 9.72 & \textbf{3.16} &
170.28 & 83.86 & 45.04 & 28.39 & 21.38 & 16.38 \\
A-DDPM & 
28.99 & 16.01 & 11.23 & 8.08 & 6.51 & 5.21 &
32.56 & 22.45 & 18.80 & 17.16 & 16.40 & 16.34 \\
\nprddpm & 
28.37 & 15.74 & 10.89 & 8.23 & 7.03 & 5.33 &
28.27 & 20.89 & 18.06 & 16.96 & \textbf{16.32} & 16.38 \\
\snddpm & 
\textbf{20.60} & \textbf{12.00} & \textbf{7.88} & \textbf{5.89} & \textbf{5.02} & 4.42 &
\textbf{27.58} & \textbf{20.74} & \textbf{18.04} & \textbf{16.61} & 16.37 & \textbf{16.22}
\\
\arrayrulecolor{black!30}\midrule
DDIM & 
20.54 & 13.45 & 9.33 & 6.60 & 4.96 & 3.40 &
26.06 & 20.10 & 18.09 & 17.84 & 17.74 & 19.00 \\
A-DDIM & 
15.62 & 9.22 & 6.13 & 4.29 & 3.46 & 3.13 &
\textbf{25.98} & \textbf{19.23} & 17.73 & 17.49 & 17.44 & 18.98 \\
\nprddim & 
14.98 & 8.93 & 6.04 & 4.27 & 3.59 & 3.15 &
28.84 & 19.62 & 17.63 & 17.42 & 17.30 & 18.91 \\
\snddim &
\textbf{10.20} & \textbf{5.48} & \textbf{3.83} & \textbf{3.04} & \textbf{2.85} & \textbf{2.90} &
28.07 & 19.38 & \textbf{17.53} & \textbf{17.23} & \textbf{17.23} & \textbf{18.89} \\
\arrayrulecolor{black}\midrule
\end{tabular}}
\end{sc}
\end{small}
\vskip 0.05in
\begin{small}
\begin{sc}
\begin{tabular}{lrrrrrr}
\toprule
& \multicolumn{6}{c}{CIFAR10 (VP SDE)} \\
\cmidrule(lr){2-7} 
\# timesteps $K$ & 
10 & 25 & 50 & 100 & 200 & 1000 \\
\midrule
Euler-Maruyama & 
292.20 & 170.17 & 90.79 & 47.46 & 21.92 & \textbf{2.55} \\
Ancestral sampling & 
235.28 & 129.29 & 68.52 & 31.99 & 12.81 & 2.72 \\
Probability flow &
107.74 & 21.34 & 7.78 & 4.33 & 3.27 & 2.82 \\
A-DPM & 
35.10 & 11.57 & 6.54 & 4.71 & 3.61 & 2.98 \\
\nprdpm & 
33.70 & 10.44 & 5.83 & 3.97 & 3.05 & 3.04 \\
\sndpm &
\textbf{25.30} & \textbf{7.34} & \textbf{4.46} & \textbf{3.27} & \textbf{2.83} & 2.71 \\
\arrayrulecolor{black}\midrule
\end{tabular}
\end{sc}
\end{small}
\vspace{-.1cm}
\end{table*}

\begin{table*}[t]
\centering
\caption{The least number of timesteps $\downarrow$ required to achieve a FID around 6 (with the corresponding FID). The time cost in a single timestep is the same for all baselines. Since our methods have extra prediction networks which require extra time cost, we multiply their results by the ratio of the time cost in a single timestep in them to that in baselines. See experimental details in Appendix~\ref{sec:detail_timesteps}.}
\label{tab:timesteps}
\vskip 0.15in
\begin{small}
\begin{sc}
\begin{tabular}{lrrr}
\toprule
    Method & CIFAR10 & CelebA 64x64 & LSUN Bedroom \\
\midrule
   DDPM~\citep{ho2020denoising} & 90 (6.12) & $>$ 200 & 130 (6.06)\\
   DDIM~\citep{song2020denoising} & 30 (5.85) & $>$ 100 & Best FID $>$ 6 \\
   Improved DDPM~\citep{nichol2021improved} & 45 (5.96) & Missing model & \textbf{90} (6.02) \\
   Analytic-DPM~\citep{bao2022analytic} & 25 (5.81) & 55 (5.98) & 100 (6.05) \\
   \nprdpm{} (ours) & 1.002$\times$23 (5.76) & 1.011$\times$50 (6.04) & 1.017$\times$\textbf{90} (6.01) \\
   \sndpm{} (ours) & 1.005$\times$\textbf{17} (5.81) & 1.012$\times$\textbf{22} (5.96) & 1.100$\times$92 (6.02)\\
\bottomrule
\end{tabular}
\end{sc}
\end{small}
\vspace{-.1cm}
\end{table*}

\subsection{Sample Quality}
\label{sec:exp_sample}

In this part, we compare the sample quality quantitatively, as measured by the commonly used FID score~\citep{heusel2017gans}. We evaluate DPMs with both the discrete and continuous timesteps. As for DPMs with discrete timesteps, we report results under both the DDPM and DDIM forward processes. Since \citet{watson2021learning,bao2022analytic} report that a DPM with the optimal trajectory is likely to have a worse FID score, we only include results under the even trajectory.

As shown in Table~\ref{tab:fid}, both \nprdpm{} and \sndpm{} outperform baselines in most cases. In particular, \sndpm{} improves the sample quality remarkably on CIFAR10 and CelebA 64x64 when the number of timesteps is small. Moreover, our methods perform well in DPMs with both discrete and continuous timesteps.

Following \citet{bao2022analytic}, we also compare the least number of timesteps required to achieve a FID around 6, as shown in Table~\ref{tab:timesteps}. Remarkably, on CelebA 64x64, \sndpm{} requires less than half the number of timesteps compared to baselines. In Appendix~\ref{sec:norm_cmp}, we show FID comparison after normalizing for the extra cost, as well as the comparison with the ``Gotta Go Fast'' SDE solver. In Appendix~\ref{sec:samples}, we show generated samples of our methods.

\begin{table*}[t]
\centering
\caption{Upper bound ($-\lelbo$) on the negative log-likelihood (bits/dim) $\downarrow$. Here A-DPM denotes the baseline Analytic-DPM. Note that the extra time cost of the NPR prediction network is negligible on CIFAR10, CelebA 64x64, and at most 4.5\% on ImageNet 64x64 (see Appendix~\ref{sec:mem_time} for details). Thus, we can use the number of timesteps to compare the efficiency of these methods. To reduce the effect of randomness, we also repeat the experiments under the full timesteps and report the variance in Appendix~\ref{sec:ll_var}.}
\label{tab:bpd}
\vskip 0.15in
\begin{small}
\begin{sc}
\setlength{\tabcolsep}{5.5pt}{
\begin{tabular}{llrrrrrrrrrrrrr}
\toprule
& & \multicolumn{6}{c}{CIFAR10 (LS)} & \multicolumn{6}{c}{CIFAR10 (CS)} \\
\cmidrule(lr){3-8} \cmidrule(lr){9-14}
\multicolumn{2}{l}{\# timesteps $K$} & 
10 & 25 & 50 & 100 & 200 & 1000 &
10 & 25 & 50 & 100 & 200 & 1000 \\
\midrule
\multirow{4}{*}{ET} & DDPM, $\tilde{\beta}_n$ & 
74.95 & 24.98 & 12.01 & 7.08 & 5.03 & 3.73 & 
75.96 & 24.94 & 11.96 & 7.04 & 4.95 & 3.60 \\
& DDPM, $\beta_n$ & 
6.99 & 6.11 & 5.44 & 4.86 & 4.39 & 3.75 & 
6.51 & 5.55 & 4.92 & 4.41 & 4.03 & 3.54 \\
& A-DDPM & 
5.47 & 4.79 & 4.38 & 4.07 & 3.84 & 3.59 & 
5.08 & 4.45 & 4.09 & 3.83 & 3.64 & 3.42 \\
& \nprddpm{} & 
\textbf{5.40} & \textbf{4.64} & \textbf{4.25} & \textbf{3.98} & \textbf{3.79} & \textbf{3.57} & 
\textbf{5.03} & \textbf{4.33} & \textbf{3.99} & \textbf{3.76} & \textbf{3.59} & \textbf{3.41} \\
\arrayrulecolor{black!30}\midrule
\multirow{3}{*}{OT} & DDPM, $\beta_n$ & 
5.38 & 4.34 & 3.97 & 3.82 & 3.77 & 3.75 & 
5.51 & 4.30 & 3.86 & 3.65 & 3.57 & 3.54 \\
&  A-DDPM & 
4.11 & 3.68 & 3.61 & 3.59 & 3.59 & 3.59 & 
3.99 & 3.56 & 3.47 & 3.44 & 3.43 & 3.42 \\
& \nprddpm{} & 
\textbf{3.91} & \textbf{3.64} & \textbf{3.59} & \textbf{3.58} & \textbf{3.57} & \textbf{3.57} & 
\textbf{3.88} & \textbf{3.52} & \textbf{3.45} & \textbf{3.42} & \textbf{3.41} & \textbf{3.41} \\
\arrayrulecolor{black}\midrule
\end{tabular}}
\end{sc}
\end{small}
\vskip 0.05in
\begin{small}
\begin{sc}
\setlength{\tabcolsep}{5.5pt}{
\begin{tabular}{llrrrrrrrrrrrrr}
\toprule
& & \multicolumn{6}{c}{CelebA 64x64} & \multicolumn{6}{c}{ImageNet 64x64} \\
\cmidrule(lr){3-8} \cmidrule(lr){9-14}
\multicolumn{2}{l}{\# timesteps $K$} & 
10 & 25 & 50 & 100 & 200 & 1000 &
25 & 50 & 100 & 200 & 400 & 4000 \\
\midrule
\multirow{4}{*}{ET} & DDPM, $\tilde{\beta}_n$ &
33.42 & 13.09 & 7.14 & 4.60 & 3.45 & 2.71 & 
105.87 & 46.25 & 22.02 & 12.10 & 7.59 & 3.89 \\
& DDPM, $\beta_n$ & 
6.67 & 5.72 & 4.98 & 4.31 & 3.74 & 2.93 & 
5.81 & 5.20 & 4.70 & 4.31 & 4.04 & 3.65 \\
& A-DDPM & 
4.54 & 3.89 & 3.48 & 3.16 & 2.92 & 2.66 &
4.78 & 4.42 & 4.15 & 3.95 & 3.81 & 3.61 \\
& \nprddpm{} &
\textbf{4.46} & \textbf{3.78} & \textbf{3.40} & \textbf{3.11} & \textbf{2.89} & \textbf{2.65} &
\textbf{4.66} & \textbf{4.22} & \textbf{3.96} & \textbf{3.80} & \textbf{3.71} & \textbf{3.60} \\
\arrayrulecolor{black!30}\midrule
\multirow{3}{*}{OT} & DDPM, $\beta_n$ & 
4.76 & 3.58 & 3.16 & 2.99 & 2.94 & 2.93 &
4.56 & 4.09 & 3.84 & 3.73 & 3.68 & 3.65 \\
& A-DDPM &
2.97 & 2.71 & 2.67 & 2.66 & 2.66 & 2.66 &
3.83 & 3.70 & 3.64 & 3.62 & 3.62 & 3.61 \\
& \nprddpm{} & 
\textbf{2.88} & \textbf{2.69} & \textbf{2.66} & \textbf{2.66} & \textbf{2.65} & \textbf{2.65} & 
\textbf{3.73} & \textbf{3.65} & \textbf{3.62} & \textbf{3.60} & \textbf{3.60} & \textbf{3.60} \\
\arrayrulecolor{black}\midrule
\end{tabular}}
\end{sc}
\end{small}
\vspace{-.2cm}
\end{table*}

\subsection{Likelihood Results}

In this part, we evaluate the upper bound $-\lelbo$ (see Eq.~{(\ref{eq:elbo})}) on the negative log-likelihood of DPMs with discrete timesteps. Since \citet{bao2022analytic} claim that $-\lelbo = \infty$ in the DDIM forward process, we only include results under the DDPM forward process. 

As shown in Table~\ref{tab:bpd}, our \nprdpm{} consistently outperforms all baselines on all datasets, all number of timesteps and both kinds of trajectories.

In initial experiments, we find that \sndpm{} doesn't perform well on the likelihood results, potentially due to not considering the imperfect mean and the amplified error of $\hat{\vepsilon}_n(\vx_n)^2$ as mentioned in Section~\ref{sec:flex_cov}. The inconsistency with its FID results in Table~\ref{tab:fid} roots in the different natures of the two metrics, which has been observed and explored extensively in prior works~\citep{ho2020denoising,nichol2021improved,song2021maximum,vahdat2021score,watson2021learning,kingma2021variational,bao2022analytic}. We also note that while NPR-DPM performs better than SN-DPM w.r.t. likelihood, it does not perform better w.r.t. FID. Please see Appendix~\ref{sec:not_always_better} for a discussion.

We also compare to Improved DDPM~\cite{nichol2021improved} in terms of likelihood estimation. With a small number of timesteps, our method can get a much better likelihood result than \citet{nichol2021improved} on ImageNet 64x64. With the full timesteps, two methods show a similar likelihood performance (the upper bound on NLL of our method is 3.60 and that of Improved DDPM is 3.57), which implies that using an alternative objective such as the MSE loss can achieve a similar likelihood performance to directly optimizing $\lelbo$. See details in Appendix~\ref{sec:ll_iddpm}.

\section{Related Work}

\textbf{DPMs and their variants.} The idea that a data generation process can be constructed by reversing a diffusion process is initially introduced by \citet{sohl2015deep}. Specifically, \citet{sohl2015deep} reverse the diffusion process using a Markov chain with discrete timesteps, which is learned on the ELBO objective. \citet{ho2020denoising} propose to parameterize the mean of the Markov chain by the noise prediction network, which share parameters between different timesteps and is learned on a MSE loss. \citet{song2020score} consider DPMs with infinitesimal timesteps, which can be represented by stochastic differential equations (SDEs). Recently, some variants of DPMs are proposed. \citet{kingma2021variational} additionally learn the forward diffusion process, \citet{vahdat2021score} learn DPMs in latent space and \citet{dockhorn2021score} introduce auxiliary velocity variables to the diffusion process.

Based on the powerful generation performance, DPMs have shown great promise in a wide variety of applications, such as controllable generation~\cite{choi2021ilvr,meng2021sdedit,sinha2021d2c,nichol2021glide}, voice conversion~\cite{popov2021diffusion}, image super-resolution~\cite{saharia2021image,li2021srdiff}, image-to-image translation~\citep{sasaki2021unit}, shape generation~\citep{zhou20213d}, 3d point cloud generation~\citep{luo2021diffusion} and time series forecasting~\citep{rasul2021autoregressive}.

\textbf{Covariance design in DPMs.} As mentioned in Section~\ref{sec:intro}, the covariance design in DPMs is crucial for the performance on a subset of timesteps, and some prior works~\cite{ho2020denoising,song2020denoising,bao2022analytic} use isotropic covariances that only depends
on the timestep without considering the state, which is less flexible compared to ours. Besides, we note that \citet{nichol2021improved} also use state-dependent diagonal covariances. The difference is that \citet{nichol2021improved} directly train a covariance network on $\lelbo$, and our one is learned by optimizing a MSE loss based on the form of the optimal covariance. As a result, our method can get a reasonable likelihood result on a small number of timesteps, which is contrast to \citet{nichol2021improved} (see Appendix~\ref{sec:ll_iddpm}).

\textbf{Faster DPMs.} In addition to exploring covariance design in DPMs, there are also other attempts toward faster DPMs. Several works explore to find short trajectories by applying searching algorithms, e.g., grid search~\cite{chen2020wavegrad}, dynamic programming~\cite{watson2021learning} and differentiable search~\cite{anonymous2022optimizing}. These searching algorithms can be further combined with our method to get a better performance. For example, the combination of the dynamic programming and our method leads to a better likelihood performance, as shown by the OT results in Table~\ref{tab:bpd}. Several works change the model family in the reverse process, replacing the Gaussian model. \citet{luhman2021knowledge,anonymous2022progressive} distill the original reverse process to a model of a single or a few timesteps, and \citet{xiao2021tackling} model the reverse process as a conditional generator and propose to train with an adversarial loss. However, their likelihood estimation is non-trivial due to the change of the model family. Several works design faster solvers~\cite{jolicoeur2021gotta,popov2021diffusion} for DPMs with continuous timesteps. However, these solvers only consider isotropic covariances as in \citet{ho2020denoising,bao2022analytic}, which are less flexible compared to ours.

\vspace{-.1cm}
\section{Conclusion}
\vspace{-.1cm}
We consider diagonal and full covariances to improve the expressive power of DPMs. We derive the optimal mean and covariance in terms of the maximum likelihood, and also correct it given an imperfect mean. Both the optimal and the corrected ones can be decomposed into terms of conditional expectations over functions of noise, which can be estimated by minimizing MSE losses. We consider diagonal covariances in our implementation for computational efficiency. Besides, we adopt a parameter sharing scheme for inference efficiency and a two-stage training process motivated by our theory. Our method is applicable to DPMs with both discrete and continuous timesteps. Empirically, our method consistently outperforms a wide variety of baselines on likelihood estimation, and improves the sample quality especially on a small number of timesteps.




\section*{Acknowledgements}

This work was supported by NSF of China Projects (Nos. 62061136001, 61620106010, 62076145, U19B2034, U1811461, U19A2081, 6197222); Beijing NSF Project (No. JQ19016); Beijing Outstanding Young Scientist Program NO. BJJWZYJH012019100020098; a grant from Tsinghua Institute for Guo Qiang; the NVIDIA NVAIL Program with GPU/DGX Acceleration; the High Performance Computing Center, Tsinghua University; and Major Innovation \& Planning Interdisciplinary Platform for the ``Double-First Class" Initiative, Renmin University of China.




\bibliography{example_paper}

\begin{thebibliography}{43}
\providecommand{\natexlab}[1]{#1}
\providecommand{\url}[1]{\texttt{#1}}
\expandafter\ifx\csname urlstyle\endcsname\relax
  \providecommand{\doi}[1]{doi: #1}\else
  \providecommand{\doi}{doi: \begingroup \urlstyle{rm}\Url}\fi

\bibitem[Anonymous(2022{\natexlab{a}})]{anonymous2022optimizing}
Anonymous.
\newblock Optimizing few-step diffusion samplers by gradient descent.
\newblock In \emph{Submitted to The Tenth International Conference on Learning
  Representations}, 2022{\natexlab{a}}.
\newblock URL \url{https://openreview.net/forum?id=VFBjuF8HEp}.
\newblock under review.

\bibitem[Anonymous(2022{\natexlab{b}})]{anonymous2022progressive}
Anonymous.
\newblock Progressive distillation for fast sampling of diffusion models.
\newblock In \emph{Submitted to The Tenth International Conference on Learning
  Representations}, 2022{\natexlab{b}}.
\newblock URL \url{https://openreview.net/forum?id=TIdIXIpzhoI}.
\newblock under review.

\bibitem[Bao et~al.(2022)Bao, Li, Zhu, and Zhang]{bao2022analytic}
Bao, F., Li, C., Zhu, J., and Zhang, B.
\newblock Analytic-dpm: an analytic estimate of the optimal reverse variance in
  diffusion probabilistic models.
\newblock \emph{arXiv preprint arXiv:2201.06503}, 2022.

\bibitem[Bi{\'n}kowski et~al.(2019)Bi{\'n}kowski, Donahue, Dieleman, Clark,
  Elsen, Casagrande, Cobo, and Simonyan]{binkowski2019high}
Bi{\'n}kowski, M., Donahue, J., Dieleman, S., Clark, A., Elsen, E., Casagrande,
  N., Cobo, L.~C., and Simonyan, K.
\newblock High fidelity speech synthesis with adversarial networks.
\newblock \emph{arXiv preprint arXiv:1909.11646}, 2019.

\bibitem[Brock et~al.(2018)Brock, Donahue, and Simonyan]{brock2018large}
Brock, A., Donahue, J., and Simonyan, K.
\newblock Large scale gan training for high fidelity natural image synthesis.
\newblock \emph{arXiv preprint arXiv:1809.11096}, 2018.

\bibitem[Chen et~al.(2020)Chen, Zhang, Zen, Weiss, Norouzi, and
  Chan]{chen2020wavegrad}
Chen, N., Zhang, Y., Zen, H., Weiss, R.~J., Norouzi, M., and Chan, W.
\newblock Wavegrad: Estimating gradients for waveform generation.
\newblock \emph{arXiv preprint arXiv:2009.00713}, 2020.

\bibitem[Choi et~al.(2021)Choi, Kim, Jeong, Gwon, and Yoon]{choi2021ilvr}
Choi, J., Kim, S., Jeong, Y., Gwon, Y., and Yoon, S.
\newblock Ilvr: Conditioning method for denoising diffusion probabilistic
  models.
\newblock \emph{arXiv preprint arXiv:2108.02938}, 2021.

\bibitem[Deng et~al.(2009)Deng, Dong, Socher, Li, Li, and
  Fei-Fei]{deng2009imagenet}
Deng, J., Dong, W., Socher, R., Li, L.-J., Li, K., and Fei-Fei, L.
\newblock Imagenet: A large-scale hierarchical image database.
\newblock In \emph{2009 IEEE conference on computer vision and pattern
  recognition}, pp.\  248--255. Ieee, 2009.

\bibitem[Dhariwal \& Nichol(2021)Dhariwal and Nichol]{dhariwal2021diffusion}
Dhariwal, P. and Nichol, A.
\newblock Diffusion models beat gans on image synthesis.
\newblock \emph{arXiv preprint arXiv:2105.05233}, 2021.

\bibitem[Dockhorn et~al.(2021)Dockhorn, Vahdat, and Kreis]{dockhorn2021score}
Dockhorn, T., Vahdat, A., and Kreis, K.
\newblock Score-based generative modeling with critically-damped langevin
  diffusion.
\newblock \emph{arXiv preprint arXiv:2112.07068}, 2021.

\bibitem[Goodfellow et~al.(2014)Goodfellow, Pouget-Abadie, Mirza, Xu,
  Warde-Farley, Ozair, Courville, and Bengio]{goodfellow2014generative}
Goodfellow, I., Pouget-Abadie, J., Mirza, M., Xu, B., Warde-Farley, D., Ozair,
  S., Courville, A., and Bengio, Y.
\newblock Generative adversarial nets.
\newblock \emph{Advances in neural information processing systems}, 27, 2014.

\bibitem[Heusel et~al.(2017)Heusel, Ramsauer, Unterthiner, Nessler, and
  Hochreiter]{heusel2017gans}
Heusel, M., Ramsauer, H., Unterthiner, T., Nessler, B., and Hochreiter, S.
\newblock Gans trained by a two time-scale update rule converge to a local nash
  equilibrium.
\newblock \emph{Advances in neural information processing systems}, 30, 2017.

\bibitem[Ho et~al.(2020)Ho, Jain, and Abbeel]{ho2020denoising}
Ho, J., Jain, A., and Abbeel, P.
\newblock Denoising diffusion probabilistic models.
\newblock \emph{arXiv preprint arXiv:2006.11239}, 2020.

\bibitem[Jolicoeur-Martineau et~al.(2021)Jolicoeur-Martineau, Li,
  Pich{\'e}-Taillefer, Kachman, and Mitliagkas]{jolicoeur2021gotta}
Jolicoeur-Martineau, A., Li, K., Pich{\'e}-Taillefer, R., Kachman, T., and
  Mitliagkas, I.
\newblock Gotta go fast when generating data with score-based models.
\newblock \emph{arXiv preprint arXiv:2105.14080}, 2021.

\bibitem[Kalchbrenner et~al.(2018)Kalchbrenner, Elsen, Simonyan, Noury,
  Casagrande, Lockhart, Stimberg, Oord, Dieleman, and
  Kavukcuoglu]{kalchbrenner2018efficient}
Kalchbrenner, N., Elsen, E., Simonyan, K., Noury, S., Casagrande, N., Lockhart,
  E., Stimberg, F., Oord, A., Dieleman, S., and Kavukcuoglu, K.
\newblock Efficient neural audio synthesis.
\newblock In \emph{International Conference on Machine Learning}, pp.\
  2410--2419. PMLR, 2018.

\bibitem[Karras et~al.(2020)Karras, Laine, Aittala, Hellsten, Lehtinen, and
  Aila]{karras2020analyzing}
Karras, T., Laine, S., Aittala, M., Hellsten, J., Lehtinen, J., and Aila, T.
\newblock Analyzing and improving the image quality of stylegan.
\newblock In \emph{Proceedings of the IEEE/CVF Conference on Computer Vision
  and Pattern Recognition}, pp.\  8110--8119, 2020.

\bibitem[Kingma et~al.(2021)Kingma, Salimans, Poole, and
  Ho]{kingma2021variational}
Kingma, D.~P., Salimans, T., Poole, B., and Ho, J.
\newblock Variational diffusion models.
\newblock \emph{arXiv preprint arXiv:2107.00630}, 2021.

\bibitem[Kong et~al.(2020)Kong, Ping, Huang, Zhao, and
  Catanzaro]{kong2020diffwave}
Kong, Z., Ping, W., Huang, J., Zhao, K., and Catanzaro, B.
\newblock Diffwave: A versatile diffusion model for audio synthesis.
\newblock \emph{arXiv preprint arXiv:2009.09761}, 2020.

\bibitem[Krizhevsky et~al.(2009)Krizhevsky, Hinton,
  et~al.]{krizhevsky2009learning}
Krizhevsky, A., Hinton, G., et~al.
\newblock Learning multiple layers of features from tiny images.
\newblock 2009.

\bibitem[Li et~al.(2021)Li, Yang, Chang, Feng, Xu, Li, and Chen]{li2021srdiff}
Li, H., Yang, Y., Chang, M., Feng, H., Xu, Z., Li, Q., and Chen, Y.
\newblock Srdiff: Single image super-resolution with diffusion probabilistic
  models.
\newblock \emph{arXiv preprint arXiv:2104.14951}, 2021.

\bibitem[Liu et~al.(2015)Liu, Luo, Wang, and Tang]{liu2015faceattributes}
Liu, Z., Luo, P., Wang, X., and Tang, X.
\newblock Deep learning face attributes in the wild.
\newblock In \emph{2015 {IEEE} International Conference on Computer Vision,
  {ICCV} 2015, Santiago, Chile, December 7-13, 2015}, pp.\  3730--3738. {IEEE}
  Computer Society, 2015.

\bibitem[Loshchilov \& Hutter(2017)Loshchilov and
  Hutter]{loshchilov2017decoupled}
Loshchilov, I. and Hutter, F.
\newblock Decoupled weight decay regularization.
\newblock \emph{arXiv preprint arXiv:1711.05101}, 2017.

\bibitem[Luhman \& Luhman(2021)Luhman and Luhman]{luhman2021knowledge}
Luhman, E. and Luhman, T.
\newblock Knowledge distillation in iterative generative models for improved
  sampling speed.
\newblock \emph{arXiv preprint arXiv:2101.02388}, 2021.

\bibitem[Luo \& Hu(2021)Luo and Hu]{luo2021diffusion}
Luo, S. and Hu, W.
\newblock Diffusion probabilistic models for 3d point cloud generation.
\newblock In \emph{Proceedings of the IEEE/CVF Conference on Computer Vision
  and Pattern Recognition}, pp.\  2837--2845, 2021.

\bibitem[Meng et~al.(2021)Meng, Song, Song, Wu, Zhu, and Ermon]{meng2021sdedit}
Meng, C., Song, Y., Song, J., Wu, J., Zhu, J.-Y., and Ermon, S.
\newblock Sdedit: Image synthesis and editing with stochastic differential
  equations.
\newblock \emph{arXiv preprint arXiv:2108.01073}, 2021.

\bibitem[Nichol \& Dhariwal(2021)Nichol and Dhariwal]{nichol2021improved}
Nichol, A. and Dhariwal, P.
\newblock Improved denoising diffusion probabilistic models.
\newblock \emph{arXiv preprint arXiv:2102.09672}, 2021.

\bibitem[Nichol et~al.(2021)Nichol, Dhariwal, Ramesh, Shyam, Mishkin, McGrew,
  Sutskever, and Chen]{nichol2021glide}
Nichol, A., Dhariwal, P., Ramesh, A., Shyam, P., Mishkin, P., McGrew, B.,
  Sutskever, I., and Chen, M.
\newblock Glide: Towards photorealistic image generation and editing with
  text-guided diffusion models.
\newblock \emph{arXiv preprint arXiv:2112.10741}, 2021.

\bibitem[Popov et~al.(2021)Popov, Vovk, Gogoryan, Sadekova, Kudinov, and
  Wei]{popov2021diffusion}
Popov, V., Vovk, I., Gogoryan, V., Sadekova, T., Kudinov, M., and Wei, J.
\newblock Diffusion-based voice conversion with fast maximum likelihood
  sampling scheme.
\newblock \emph{arXiv preprint arXiv:2109.13821}, 2021.

\bibitem[Rasul et~al.(2021)Rasul, Seward, Schuster, and
  Vollgraf]{rasul2021autoregressive}
Rasul, K., Seward, C., Schuster, I., and Vollgraf, R.
\newblock Autoregressive denoising diffusion models for multivariate
  probabilistic time series forecasting.
\newblock \emph{arXiv preprint arXiv:2101.12072}, 2021.

\bibitem[Saharia et~al.(2021)Saharia, Ho, Chan, Salimans, Fleet, and
  Norouzi]{saharia2021image}
Saharia, C., Ho, J., Chan, W., Salimans, T., Fleet, D.~J., and Norouzi, M.
\newblock Image super-resolution via iterative refinement.
\newblock \emph{arXiv preprint arXiv:2104.07636}, 2021.

\bibitem[Sasaki et~al.(2021)Sasaki, Willcocks, and Breckon]{sasaki2021unit}
Sasaki, H., Willcocks, C.~G., and Breckon, T.~P.
\newblock Unit-ddpm: Unpaired image translation with denoising diffusion
  probabilistic models.
\newblock \emph{arXiv preprint arXiv:2104.05358}, 2021.

\bibitem[Sinha et~al.(2021)Sinha, Song, Meng, and Ermon]{sinha2021d2c}
Sinha, A., Song, J., Meng, C., and Ermon, S.
\newblock D2c: Diffusion-denoising models for few-shot conditional generation.
\newblock \emph{arXiv preprint arXiv:2106.06819}, 2021.

\bibitem[Sohl-Dickstein et~al.(2015)Sohl-Dickstein, Weiss, Maheswaranathan, and
  Ganguli]{sohl2015deep}
Sohl-Dickstein, J., Weiss, E., Maheswaranathan, N., and Ganguli, S.
\newblock Deep unsupervised learning using nonequilibrium thermodynamics.
\newblock In \emph{International Conference on Machine Learning}, pp.\
  2256--2265. PMLR, 2015.

\bibitem[Song et~al.(2020{\natexlab{a}})Song, Meng, and
  Ermon]{song2020denoising}
Song, J., Meng, C., and Ermon, S.
\newblock Denoising diffusion implicit models.
\newblock \emph{arXiv preprint arXiv:2010.02502}, 2020{\natexlab{a}}.

\bibitem[Song et~al.(2020{\natexlab{b}})Song, Sohl-Dickstein, Kingma, Kumar,
  Ermon, and Poole]{song2020score}
Song, Y., Sohl-Dickstein, J., Kingma, D.~P., Kumar, A., Ermon, S., and Poole,
  B.
\newblock Score-based generative modeling through stochastic differential
  equations.
\newblock \emph{arXiv preprint arXiv:2011.13456}, 2020{\natexlab{b}}.

\bibitem[Song et~al.(2021)Song, Durkan, Murray, and Ermon]{song2021maximum}
Song, Y., Durkan, C., Murray, I., and Ermon, S.
\newblock Maximum likelihood training of score-based diffusion models.
\newblock \emph{arXiv e-prints}, pp.\  arXiv--2101, 2021.

\bibitem[Theis et~al.(2015)Theis, Oord, and Bethge]{theis2015note}
Theis, L., Oord, A. v.~d., and Bethge, M.
\newblock A note on the evaluation of generative models.
\newblock \emph{arXiv preprint arXiv:1511.01844}, 2015.

\bibitem[Vahdat et~al.(2021)Vahdat, Kreis, and Kautz]{vahdat2021score}
Vahdat, A., Kreis, K., and Kautz, J.
\newblock Score-based generative modeling in latent space.
\newblock \emph{arXiv preprint arXiv:2106.05931}, 2021.

\bibitem[Watson et~al.(2021)Watson, Ho, Norouzi, and Chan]{watson2021learning}
Watson, D., Ho, J., Norouzi, M., and Chan, W.
\newblock Learning to efficiently sample from diffusion probabilistic models.
\newblock \emph{arXiv preprint arXiv:2106.03802}, 2021.

\bibitem[Wu et~al.(2019)Wu, Donahue, Balduzzi, Simonyan, and
  Lillicrap]{wu2019logan}
Wu, Y., Donahue, J., Balduzzi, D., Simonyan, K., and Lillicrap, T.
\newblock Logan: Latent optimisation for generative adversarial networks.
\newblock \emph{arXiv preprint arXiv:1912.00953}, 2019.

\bibitem[Xiao et~al.(2021)Xiao, Kreis, and Vahdat]{xiao2021tackling}
Xiao, Z., Kreis, K., and Vahdat, A.
\newblock Tackling the generative learning trilemma with denoising diffusion
  gans.
\newblock \emph{arXiv preprint arXiv:2112.07804}, 2021.

\bibitem[Yu et~al.(2015)Yu, Zhang, Song, Seff, and Xiao]{yu15lsun}
Yu, F., Zhang, Y., Song, S., Seff, A., and Xiao, J.
\newblock Lsun: Construction of a large-scale image dataset using deep learning
  with humans in the loop.
\newblock \emph{arXiv preprint arXiv:1506.03365}, 2015.

\bibitem[Zhou et~al.(2021)Zhou, Du, and Wu]{zhou20213d}
Zhou, L., Du, Y., and Wu, J.
\newblock 3d shape generation and completion through point-voxel diffusion.
\newblock \emph{arXiv preprint arXiv:2104.03670}, 2021.

\end{thebibliography}
\bibliographystyle{icml2022}

\newpage
\appendix
\onecolumn

\section{Proof}
\label{sec:proof}

\subsection{Proof of Theorem~{\ref{thm:opt_diag_no_err}, \ref{thm:opt_mean} \& \ref{thm:opt_diag}}}

\begin{lemma}
\label{thm:opt_sigma_general}
(Conditioned moment matching) 
Suppose $q(\vx)$ is a probability density function with mean $\vmu_q$ and covariance matrix $\mSigma_q$ and $p(\vx) = \gN(\vx|\vmu, \diag(\vsigma^2))$ is a Gaussian distribution. Then 
\begin{enumerate}
    \item for any $\vsigma^2$, the optimization problem $\min\limits_{\vmu} \KL(q\|p)$ admits an optimal solution $\vmu^* = \vmu_q$;
    \item for any $\vmu$, the optimization problem $\min\limits_{\vsigma^2} \KL(q\|p)$ admits an optimal solution $\tilde{\vsigma}^{*2} = \diag(\mSigma_q) + (\vmu - \vmu_q)^2$, where $(\cdot)^2$ is the element-wise square;
    \item the optimization problem $\min\limits_{\vmu, \vsigma^2} \KL(q\|p)$ admits an optimal solution $\vmu^* = \vmu_q$ and $\vsigma^{*2} = \diag(\mSigma_q)$.
\end{enumerate}
\end{lemma}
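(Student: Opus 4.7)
The plan is to expand $\KL(q\|p)$ and isolate the terms depending on $(\vmu,\vsigma^2)$. Writing $\KL(q\|p) = -H(q) - \E_q[\log p(\vx)]$, the entropy term is independent of $\vmu$ and $\vsigma^2$, so minimization reduces to minimizing
\[
F(\vmu,\vsigma^2) \;=\; -\E_q[\log p(\vx)] \;=\; \tfrac{1}{2}\sum_{i=1}^d\Bigl[\log(2\pi\sigma_i^2) + \frac{\E_q[(x_i-\mu_i)^2]}{\sigma_i^2}\Bigr] \;+\; \text{const},
\]
where the diagonal structure of $\diag(\vsigma^2)$ is precisely what lets $F$ decouple into a sum of independent per-coordinate objectives.

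The central identity is the bias--variance split $\E_q[(x_i-\mu_i)^2] = (\mSigma_q)_{ii} + (\mu_i - \mu_{q,i})^2$, which is immediate from $\E_q[x_i]=\mu_{q,i}$ and requires no distributional assumption on $q$ beyond existence of its first two moments. This isolates the $\vmu$-dependence of each summand as a single nonnegative quadratic, and reduces each $\sigma_i^2$-subproblem to the univariate minimization $\min_{s>0}\bigl(\log s + c_i/s\bigr)$ with $c_i = (\mSigma_q)_{ii} + (\mu_i - \mu_{q,i})^2$.

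From here the three claims fall out as routine sub-cases. For claim~1, fixing $\vsigma^2$ and minimizing in $\vmu$ amounts to minimizing the nonnegative quadratic $\sum_i (\mu_i-\mu_{q,i})^2/\sigma_i^2$, whose unique minimizer is $\vmu^{\ast}=\vmu_q$. For claim~2, fixing $\vmu$ and differentiating $s\mapsto\log s + c_i/s$ gives the first-order condition $1/s = c_i/s^2$, i.e.\ $\sigma_i^{\ast 2}=c_i$, and the second derivative $c_i/s^3 - \ldots$ (or equivalently convexity of the map after substituting $u=\log s$) confirms it is a strict minimum; stacked over $i$ this is $\tilde{\vsigma}^{\ast 2}=\diag(\mSigma_q)+(\vmu-\vmu_q)^2$. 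Claim~3 then follows either by substituting $\vmu^{\ast}=\vmu_q$ into claim~2, or by directly noting that the joint minimum of $F$ is attained where both the quadratic-in-$\vmu$ term vanishes and the univariate $\sigma_i^2$-minimizer becomes $(\mSigma_q)_{ii}$.

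I do not expect a real obstacle: the lemma is essentially bookkeeping on top of the diagonal-Gaussian log-density and the bias--variance identity. The one point worth flagging is degeneracy, namely what happens if some $(\mSigma_q)_{ii}=0$ and $\mu_i=\mu_{q,i}$, in which case $c_i=0$ and the stated $\sigma_i^{\ast 2}=0$ should be read as an infimum attained by a limiting degenerate Gaussian; under the standing assumption that $\mSigma_q$ is (strictly) positive definite, all minimizers above are genuine and unique.
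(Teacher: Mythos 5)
Your proof is correct and follows essentially the same route as the paper's: both arguments reduce to minimizing the identical per-coordinate objective $\log\sigma_i^2+\bigl[(\mSigma_q)_{ii}+(\mu_i-(\vmu_q)_i)^2\bigr]/\sigma_i^2$, the only cosmetic difference being that the paper reaches it by invoking the identity $\KL(q\|p)=\KL(\gN(\vx|\vmu_q,\mSigma_q)\|p)+\mathrm{const}$ from Bao et al.\ and expanding the Gaussian--Gaussian KL, whereas you expand the cross-entropy term of $\KL(q\|p)$ directly via the bias--variance split. Your caveat about the degenerate case $(\mSigma_q)_{ii}=0$ (where the infimum is only attained in the limit) is a fair point that the paper's proof does not address.
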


\begin{proof}
\citet{bao2022analytic} show that the KL divergence between a probability density function $q$ and a Gaussian distribution $p$ can be written as $\KL(q\|p) = \KL(\gN(\vx|\vmu_q, \mSigma_q) \| p) + c$, where $c$ is a constant only related to $q$ (see Lemma 2 of \citet{bao2022analytic}). Thus, the optimizations of $\KL(q\|p)$ and $\KL(\gN(\vx|\vmu_q, \mSigma_q) \| p)$ w.r.t. $\vmu$ or $\vsigma^2$ are equivalent.

We expand $\KL(\gN(\vx|\vmu_q, \mSigma_q) \| p)$ according to the definition of KL divergence:
\begin{align}
    & 2 \KL(\gN(\vx|\vmu_q, \mSigma_q) \| p) = \tr(\diag(\vsigma^{-2}) \mSigma_q) - d + \log \frac{|\diag(\vsigma^2)|}{|\mSigma_q|} + (\vmu - \vmu_q)^\top \diag(\vsigma^{-2}) (\vmu - \vmu_q) \label{eq:kl_expand_mu}\\
    = & \sum\limits_{i=1}^d \left\{ \sigma_i^{-2} \left[ (\mSigma_{q})_{i,i} + (\vmu_i - (\vmu_q)_i)^2 \right] + \log \sigma_i^2 \right\} - \log |\mSigma_q| - d. \label{eq:kl_expand_cov}
\end{align}
According to Eq.~{(\ref{eq:kl_expand_mu})}, we know that
\begin{align*}
    \min\limits_{\vmu} \KL(q\|p) \Leftrightarrow \min\limits_{\vmu} (\vmu - \vmu_q)^\top \diag(\vsigma^{-2}) (\vmu - \vmu_q),
\end{align*}
which admits an optimal solution $\vmu^* = \vmu_q$.

According to Eq.~{(\ref{eq:kl_expand_cov})}, we know that
\begin{align*}
    \min\limits_{\vsigma^2} \KL(q\|p) \Leftrightarrow \min\limits_{\vsigma^2} \sum\limits_{i=1}^d \left\{ \sigma_i^{-2} \left[ (\mSigma_{q})_{i,i} + (\vmu_i - (\vmu_q)_i)^2 \right] + \log \sigma_i^2 \right\}
\end{align*}

By taking gradients, we know $\sigma_i^{-2} \left[ (\mSigma_{q})_{i,i} + (\vmu_i - (\vmu_q)_i)^2 \right] + \log \sigma_i^2$ as a function of $\sigma_i^2$ gets its minimum at
\begin{align*}
    \tilde{\sigma}_i^{*2} = (\mSigma_{q})_{i,i} + (\vmu_i - (\vmu_q)_i)^2.
\end{align*}
Thus, $\min\limits_{\vsigma^2} \KL(q\|p)$ admits an optimal solution $\tilde{\vsigma}^{*2} = \diag(\mSigma_{q}) + (\vmu - \vmu_q)^2$.

Combining $\vmu^*$ and $\tilde{\vsigma}^{*2}$, we know that $\min\limits_{\vmu,\vsigma^2} \KL(q\|p)$ admits an optimal solution $\vmu^* = \vmu_q$ and $\vsigma^{*2} = \diag(\mSigma_{q}) + (\vmu^* - \vmu_q)^2 = \diag(\mSigma_{q})$.
\end{proof}

\begin{lemma}
\label{thm:mm}
Suppose $q(\vx_{0:N})$ is defined as Eq.~{(\ref{eq:ddim})}. Then we have
\begin{align*}
    & \E_{q(\vx_{n-1}|\vx_n)} [\vx_{n-1}] = \tilde{\vmu}_n(\vx_n, \frac{1}{\sqrt{\overline{\alpha}_n}} ( \vx_n - \sqrt{\overline{\beta}_n} \E_{q(\vx_0|\vx_n)}[\vepsilon_n] )), \\
    & \Cov_{q(\vx_{n-1}|\vx_n)}[\vx_{n-1}] = \lambda_n^2 \mI + \gamma_n^2 \frac{\overline{\beta}_n}{\overline{\alpha}_n} \left( \E_{q(\vx_0|\vx_n)}[\vepsilon_n \vepsilon_n^\top] - \E_{q(\vx_0|\vx_n)}[\vepsilon_n] \E_{q(\vx_0|\vx_n)}[\vepsilon_n]^\top \right), \\
    & \diag(\Cov_{q(\vx_{n-1}|\vx_n)}[\vx_{n-1}]) = \lambda_n^2 \vone + \gamma_n^2 \frac{\overline{\beta}_n}{\overline{\alpha}_n} \left(\E_{q(\vx_0|\vx_n)}[\vepsilon_n^2] - \E_{q(\vx_0|\vx_n)}[\vepsilon_n]^2\right).
\end{align*}
where $\vepsilon_n = \frac{\vx_n - \sqrt{\overline{\alpha}_n} \vx_0}{\sqrt{\overline{\beta}_n}}$ is the noise used to generate $\vx_n$ from $\vx_0$ and $\gamma_n = \sqrt{\overline{\alpha}_{n-1}} - \sqrt{\overline{\beta}_{n-1} - \lambda_n^2} \sqrt{\frac{\overline{\alpha}_n}{\overline{\beta}_n}}$.
\end{lemma}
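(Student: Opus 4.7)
\textbf{Proof Proposal for Lemma~\ref{thm:mm}.} The plan is to exploit the decomposition $q(\vx_{n-1}|\vx_n)=\int q(\vx_{n-1}|\vx_n,\vx_0)\,q(\vx_0|\vx_n)\,d\vx_0$ together with the known Gaussian form of $q(\vx_{n-1}|\vx_n,\vx_0)=\gN(\tilde{\vmu}_n(\vx_n,\vx_0),\lambda_n^2\mI)$, and reduce everything to moments of $\vx_0$ (equivalently, of $\vepsilon_n$) conditioned on $\vx_n$. I will apply the tower rule for the first moment and the law of total variance for the second moment.

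First, for the conditional mean, the tower property gives
$\E_{q(\vx_{n-1}|\vx_n)}[\vx_{n-1}] = \E_{q(\vx_0|\vx_n)}[\tilde{\vmu}_n(\vx_n,\vx_0)]$. Since $\tilde{\vmu}_n(\vx_n,\cdot)$ is affine in $\vx_0$, the expectation passes through: it equals $\tilde{\vmu}_n(\vx_n,\E_{q(\vx_0|\vx_n)}[\vx_0])$. Then using $\vx_0=(\vx_n-\sqrt{\overline{\beta}_n}\vepsilon_n)/\sqrt{\overline{\alpha}_n}$ to rewrite $\E[\vx_0|\vx_n]=(\vx_n-\sqrt{\overline{\beta}_n}\E[\vepsilon_n|\vx_n])/\sqrt{\overline{\alpha}_n}$ yields the stated formula for the mean.

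Next, for the covariance, the law of total variance gives
\begin{align*}
\Cov_{q(\vx_{n-1}|\vx_n)}[\vx_{n-1}] &= \E_{q(\vx_0|\vx_n)}\!\left[\Cov_{q(\vx_{n-1}|\vx_n,\vx_0)}[\vx_{n-1}]\right] + \Cov_{q(\vx_0|\vx_n)}\!\left[\tilde{\vmu}_n(\vx_n,\vx_0)\right] \\
&= \lambda_n^2\mI + \Cov_{q(\vx_0|\vx_n)}\!\left[\tilde{\vmu}_n(\vx_n,\vx_0)\right].
\end{align*}
The key computation is to expand the definition of $\tilde{\vmu}_n$ and collect terms in $\vx_0$: this shows $\tilde{\vmu}_n(\vx_n,\vx_0)=\gamma_n \vx_0 + c(\vx_n)$ for a function $c$ of $\vx_n$ alone, where $\gamma_n$ is exactly the constant in the lemma. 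Hence $\Cov_{q(\vx_0|\vx_n)}[\tilde{\vmu}_n(\vx_n,\vx_0)] = \gamma_n^2\,\Cov_{q(\vx_0|\vx_n)}[\vx_0]$. Converting from $\vx_0$ to $\vepsilon_n$ via the same affine relation above introduces a scalar factor $\overline{\beta}_n/\overline{\alpha}_n$, producing $\gamma_n^2(\overline{\beta}_n/\overline{\alpha}_n)(\E[\vepsilon_n\vepsilon_n^\top|\vx_n]-\E[\vepsilon_n|\vx_n]\E[\vepsilon_n|\vx_n]^\top)$, as required. The diagonal statement is then an immediate projection, noting that $\diag(\vepsilon_n\vepsilon_n^\top)=\vepsilon_n^2$ elementwise and $\diag(\vv\vv^\top)=\vv^2$.

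The only slightly delicate step is the algebraic simplification showing that the coefficient of $\vx_0$ in $\tilde{\vmu}_n(\vx_n,\vx_0)$ is exactly $\gamma_n = \sqrt{\overline{\alpha}_{n-1}}-\sqrt{\overline{\beta}_{n-1}-\lambda_n^2}\sqrt{\overline{\alpha}_n/\overline{\beta}_n}$; everything else follows from standard conditional-moment identities. No probabilistic subtleties arise because all relevant conditional distributions are explicit and have finite second moments.
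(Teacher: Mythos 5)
Your proposal is correct and follows essentially the same route as the paper: tower property for the mean, and the decomposition $\Cov_{q(\vx_{n-1}|\vx_n)}[\vx_{n-1}] = \lambda_n^2\mI + \gamma_n^2\,\Cov_{q(\vx_0|\vx_n)}[\vx_0]$ followed by the change of variables from $\vx_0$ to $\vepsilon_n$. The only difference is that the paper imports that covariance decomposition from Lemma 13 of Bao et al.\ (2022), whereas you derive it directly via the law of total variance and the observation that the coefficient of $\vx_0$ in $\tilde{\vmu}_n$ is $\gamma_n$ --- which is exactly the content of the cited lemma, so your argument is simply a self-contained version of the same proof.
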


\begin{proof}
Since $\tilde{\vmu}_n(\vx_n, \vx_0)$ is linear w.r.t. $\vx_0$, we have
\begin{align*}
    & \E_{q(\vx_{n-1}|\vx_n)} [\vx_{n-1}] = \E_{q(\vx_0|\vx_n)} \E_{q(\vx_{n-1}|\vx_n, \vx_0)}[\vx_{n-1}] \\
    = & \E_{q(\vx_0|\vx_n)} \tilde{\vmu}_n(\vx_n, \vx_0) = \tilde{\vmu}_n(\vx_n, \E_{q(\vx_0|\vx_n)} \vx_0) = \tilde{\vmu}_n(\vx_n, \frac{1}{\sqrt{\overline{\alpha}_n}} ( \vx_n - \sqrt{\overline{\beta}_n} \E_{q(\vx_0|\vx_n)}[\vepsilon_n] )).
\end{align*}
As for $\Cov_{q(\vx_{n-1}|\vx_n)}[\vx_{n-1}]$, \citet{bao2022analytic} show that it can be expanded as $\Cov_{q(\vx_{n-1}|\vx_n)}[\vx_{n-1}] = \lambda_n^2 \mI + \gamma_n^2 \Cov_{q(\vx_0|\vx_n)}[\vx_0]$,
where $\gamma_n = \sqrt{\overline{\alpha}_{n-1}} - \sqrt{\overline{\beta}_{n-1} - \lambda_n^2} \sqrt{\frac{\overline{\alpha}_n}{\overline{\beta}_n}}$ (see Lemma 13 in \citet{bao2022analytic}).

Besides, we have $\Cov_{q(\vx_0|\vx_n)} [\vepsilon_n] = \Cov_{q(\vx_0|\vx_n)} [\frac{\vx_n - \sqrt{\overline{\alpha}_n} \vx_0}{ \sqrt{\overline{\beta}_n}}] = \frac{\overline{\alpha}_n}{\overline{\beta}_n} \Cov_{q(\vx_0|\vx_n)} [\vx_0]$.
Thus,
\begin{align*}
    & \Cov_{q(\vx_{n-1}|\vx_n)}[\vx_{n-1}] = \lambda_n^2 \mI + \gamma_n^2 \frac{\overline{\beta}_n}{\overline{\alpha}_n} \Cov_{q(\vx_0|\vx_n)}[\vepsilon_n] \\
    = & \lambda_n^2 \mI + \gamma_n^2 \frac{\overline{\beta}_n}{\overline{\alpha}_n} \left( \E_{q(\vx_0|\vx_n)}[\vepsilon_n \vepsilon_n^\top] - \E_{q(\vx_0|\vx_n)}[\vepsilon_n] \E_{q(\vx_0|\vx_n)}[\vepsilon_n]^\top \right), \\
    & \diag(\Cov_{q(\vx_{n-1}|\vx_n)}[\vx_{n-1}]) = \lambda_n^2 \vone + \gamma_n^2 \frac{\overline{\beta}_n}{\overline{\alpha}_n} \left(\E_{q(\vx_0|\vx_n)}[\vepsilon_n^2] - \E_{q(\vx_0|\vx_n)}[\vepsilon_n]^2\right).
\end{align*}
\end{proof}

\optdiagnoerr*
\begin{proof}
\citet{bao2022analytic} show that the KL divergence between a joint probability density function $q(\vx_{0:N})$ and a Markov chain $p(\vx_{0:N})$ can be written as
\begin{align*}
    \KL(q(\vx_{0:N}) \| p(\vx_{0:N})) = \sum\limits_{n=1}^N \E_q \KL(q(\vx_{n-1}|\vx_n) \| p(\vx_{n-1}|\vx_n)) + c,
\end{align*}
where $c$ is a constant only related to $q$ (see Lemma 8 in \citet{bao2022analytic}). As a result, the optimization of the KL divergence is decomposed into $n$ independent optimization sub-problems:
\begin{align*}
    \min\limits_{\vmu_n(\cdot), \vsigma_n(\cdot)^2} \E_q \KL(q(\vx_{n-1}|\vx_n) \| p(\vx_{n-1}|\vx_n)), \quad 1 \leq n \leq N.
\end{align*}

Since $q(\vx_{n-1}|\vx_n)$ is a probability density function with mean $\E_{q(\vx_{n-1}|\vx_n)}[\vx_{n-1}]$ and covariance matrix $\Cov_{q(\vx_{n-1}|\vx_n)}[\vx_{n-1}]$ and $p(\vx_{n-1}|\vx_n) = \gN(\vx_{n-1}|\vmu_n(\vx_n), \diag(\vsigma_n(\vx_n)^2))$ is a Gaussian distribution, according to Lemma~\ref{thm:opt_sigma_general} and Lemma~\ref{thm:mm}, we know the optimal solution is
\begin{align*}
    & \vmu_n^*(\vx_n) = \E_{q(\vx_{n-1}|\vx_n)}[\vx_{n-1}] = \tilde{\vmu}_n(\vx_n, \frac{1}{\sqrt{\overline{\alpha}_n}} ( \vx_n - \sqrt{\overline{\beta}_n} \E_{q(\vx_0|\vx_n)}[\vepsilon_n] )), \\
    & \vsigma_n^*(\vx_n)^2 = \diag(\Cov_{q(\vx_{n-1}|\vx_n)}[\vx_{n-1}]) = \lambda_n^2 \vone + \gamma_n^2 \frac{\overline{\beta}_n}{\overline{\alpha}_n} \left(\E_{q(\vx_0|\vx_n)}[\vepsilon_n^2] - \E_{q(\vx_0|\vx_n)}[\vepsilon_n]^2\right).
\end{align*}
\end{proof}

\optmean*

\begin{proof}
Similarly to the proof in Theorem~\ref{thm:opt_diag_no_err}, these optimizations are equivalent
\begin{align*}
\max\limits_{\{\vmu_n\}_{n=1}^N} \lelbo \Leftrightarrow \min\limits_{\{\vmu_n\}_{n=1}^N} \KL(q(\vx_{0:N}) \| p(\vx_{0:N})) \Leftrightarrow \min\limits_{\vmu_n} \E_q \KL(q(\vx_{n-1}|\vx_n) \| p(\vx_{n-1}|\vx_n)), \quad 1 \leq n \leq N.
\end{align*}

According to Lemma~\ref{thm:opt_sigma_general} and Lemma~\ref{thm:mm}, we know the optimal $\vmu_n^*(\vx_n)$ is
\begin{align*}
    \vmu_n^*(\vx_n) = \E_{q(\vx_{n-1}|\vx_n)}[\vx_{n-1}] = \tilde{\vmu}_n \left(\vx_n, \frac{1}{\sqrt{\overline{\alpha}_n}} ( \vx_n - \sqrt{\overline{\beta}_n} \E_{q(\vx_0|\vx_n)} [\vepsilon_n]) \right).
\end{align*}
\end{proof}

\optdiag*

\begin{proof}
Similarly to the proof in Theorem~\ref{thm:opt_diag_no_err}, these optimizations are equivalent
\begin{align*}
\max\limits_{\{\mSigma_n\}_{n=1}^N} \lelbo \Leftrightarrow \min\limits_{\{\mSigma_n\}_{n=1}^N} \KL(q(\vx_{0:N}) \| p(\vx_{0:N})) \Leftrightarrow \min\limits_{\mSigma_n} \E_q \KL(q(\vx_{n-1}|\vx_n) \| p(\vx_{n-1}|\vx_n)), \quad 1 \leq n \leq N.
\end{align*}

According to Lemma~\ref{thm:opt_sigma_general}, we know the optimal $\tilde{\vsigma}_n^*(\vx_n)^2$ is
\begin{align*}
    \tilde{\vsigma}_n^*(\vx_n)^2 = \diag(\Cov_{q(\vx_{n-1}|\vx_n)}[\vx_{n-1}]) + (\vmu_n(\vx_n) - \E_{q(\vx_{n-1}|\vx_n)}[\vx_{n-1}])^2.
\end{align*}

According to Theorem~\ref{thm:opt_diag_no_err}, $\diag(\Cov_{q(\vx_{n-1}|\vx_n)}[\vx_{n-1}]) = \vsigma_n^*(\vx_n)^2$. 

According to Lemma~\ref{thm:mm} and the parameterization of $\vmu_n(\vx_n)$ in Eq.~{(\ref{eq:mu_param})}, we have
\begin{align*}
    \vmu_n(\vx_n) - \E_{q(\vx_{n-1}|\vx_n)} [\vx_{n-1}] = \gamma_n
    \sqrt{\frac{\overline{\beta}_n}{\overline{\alpha}_n}} ( \E_{q(\vx_0|\vx_n)}[\vepsilon_n] -  \hat{\vepsilon}_n(\vx_n)).
\end{align*}

Finally,
\begin{align*}
    & \tilde{\vsigma}_n^*(\vx_n)^2 = \diag(\Cov_{q(\vx_{n-1}|\vx_n)}[\vx_{n-1}]) + (\vmu_n(\vx_n) - \E_{q(\vx_{n-1}|\vx_n)}[\vx_{n-1}])^2 \\
    = & \vsigma_n^*(\vx_n)^2 + \gamma_n^2 \frac{\overline{\beta}_n}{\overline{\alpha}_n} \underbrace{(\hat{\vepsilon}_n(\vx_n) - \E_{q(\vx_0|\vx_n)}[\vepsilon_n])^2}_{\text{\normalsize{error}}} \\
    = & \lambda_n^2 \vone + \gamma_n^2 \frac{\overline{\beta}_n}{\overline{\alpha}_n} \left\{ \E_{q(\vx_0|\vx_n)}[\vepsilon_n^2] - \E_{q(\vx_0|\vx_n)}[\vepsilon_n]^2 + \E_{q(\vx_0|\vx_n)}[\vepsilon_n]^2 - 2 \E_{q(\vx_0|\vx_n)}[\vepsilon_n] \hat{\vepsilon}_n(\vx_n) + \hat{\vepsilon}_n(\vx_n)^2 \right\} \\
    = & \lambda_n^2 \vone + \gamma_n^2 \frac{\overline{\beta}_n}{\overline{\alpha}_n} \left\{ \E_{q(\vx_0|\vx_n)}[\vepsilon_n^2] - 2 \E_{q(\vx_0|\vx_n)}[\vepsilon_n] \hat{\vepsilon}_n(\vx_n) + \hat{\vepsilon}_n(\vx_n)^2 \right\} \\
    = & \lambda_n^2 \vone + \gamma_n^2 \frac{\overline{\beta}_n}{\overline{\alpha}_n} \E_{q(\vx_0|\vx_n)}[(\vepsilon_n - \hat{\vepsilon}_n(\vx_n))^2 ].
\end{align*}
\end{proof}

\subsection{Proof of Proposition~\ref{thm:opt_cov_c_no_err} and Proposition~\ref{thm:opt_cov_c}}
\label{sec:proof_ext_c}

\begin{lemma}
\label{thm:mm_c}
Let $0 \leq s < t \leq T$, and $q(\vx_s|\vx_t)$ be the conditional distribution of $\vx_s$ given $\vx_t$ determined by the SDE $\mathrm{d} \vx = f(t) \vx \mathrm{d}t + g(t) \mathrm{d}\vw$. Then we have
\begin{align*}
    & \E_{q(\vx_s|\vx_t)}[\vx_s] = \frac{1}{\sqrt{\alpha_{t|s}}}(\vx_t - \frac{\beta_{t|s}}{\sqrt{\beta_{t|0}}} \E_{q(\vx_0|\vx_t)}[\vepsilon_t]), \\
    & \Cov_{q(\vx_s|\vx_t)}[\vx_s] = \tilde{\beta}_{s|t}\mI + \frac{\beta_{t|s}^2}{\beta_{t|0}\alpha_{t|s}} \left( \E_{q(\vx_0|\vx_t)} [\vepsilon_t \vepsilon_t^\top] - \E_{q(\vx_0|\vx_t)}[\vepsilon_t] \E_{q(\vx_0|\vx_t)}[\vepsilon_t]^\top \right), \\
    & \diag(\Cov_{q(\vx_s|\vx_t)}[\vx_s]) = \tilde{\beta}_{s|t}\vone + \frac{\beta_{t|s}^2}{\beta_{t|0}\alpha_{t|s}} \left( \E_{q(\vx_0|\vx_t)} [\vepsilon_t^2] - \E_{q(\vx_0|\vx_t)}[\vepsilon_t]^2 \right),
\end{align*}
where $\vepsilon_t = \frac{\vx_t - \sqrt{\alpha_{t|0}} \vx_0}{\sqrt{\beta_{t|0}}}$ is the noise used to generate $\vx_t$ from $\vx_0$ and $\tilde{\beta}_{s|t} = \frac{\beta_{s|0}}{\beta_{t|0}}\beta_{t|s}$.
\end{lemma}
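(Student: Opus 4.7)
The plan is to mirror the proof of Lemma~\ref{thm:mm} (its discrete analogue), replacing the Markov decomposition of the DDIM forward process with the analogous decomposition available for the SDE. The skeleton is: (i) obtain $q(\vx_s \mid \vx_t, \vx_0)$ explicitly as a Gaussian whose mean is affine in $\vx_0$ and whose covariance is $\tilde{\beta}_{s|t}\mI$; (ii) apply the tower rule for expectations and the law of total covariance with respect to $q(\vx_0 \mid \vx_t)$; (iii) rewrite the resulting expressions using the noise reparameterization $\vx_0 = (\vx_t - \sqrt{\beta_{t|0}}\,\vepsilon_t)/\sqrt{\alpha_{t|0}}$ to land on the stated formulas.

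First, I would establish two algebraic identities that are the continuous-time analogues of $\overline{\alpha}_n = \alpha_n \overline{\alpha}_{n-1}$ and $\overline{\beta}_n = \beta_n + \alpha_n \overline{\beta}_{n-1}$. From $\alpha_{t|s} = \exp(2\int_s^t f(\tau)d\tau)$ one gets the semigroup property $\alpha_{t|0} = \alpha_{t|s}\alpha_{s|0}$, and from $\beta_{t|s} = \int_s^t g(\tau)^2 \alpha_{t|\tau}d\tau$, splitting the integral at $s$ and using $\alpha_{t|\tau} = \alpha_{t|s}\alpha_{s|\tau}$ yields $\beta_{t|0} = \beta_{t|s} + \alpha_{t|s}\beta_{s|0}$. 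These are the only SDE-specific facts needed; everything afterwards is Gaussian algebra.

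Next, using Bayes' rule on the three Gaussians $q(\vx_t\mid\vx_s)$, $q(\vx_s\mid\vx_0)$, $q(\vx_t\mid\vx_0)$, I would show
\begin{align*}
q(\vx_s\mid\vx_t,\vx_0) = \gN\!\left(\tilde{\vmu}_{s|t}(\vx_t,\vx_0),\, \tilde{\beta}_{s|t}\mI\right),\quad
\tilde{\vmu}_{s|t}(\vx_t,\vx_0) = \sqrt{\alpha_{s|0}}\,\vx_0 + \tfrac{\sqrt{\alpha_{t|s}}\,\beta_{s|0}}{\beta_{t|0}}\bigl(\vx_t - \sqrt{\alpha_{t|0}}\,\vx_0\bigr),
\end{align*}
where the identity $\beta_{t|0} - \alpha_{t|s}\beta_{s|0} = \beta_{t|s}$ collapses the posterior variance to $\tilde{\beta}_{s|t}$. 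Because $\tilde{\vmu}_{s|t}$ is affine in $\vx_0$, the tower rule gives $\E_{q(\vx_s|\vx_t)}[\vx_s] = \tilde{\vmu}_{s|t}(\vx_t, \E_{q(\vx_0|\vx_t)}[\vx_0])$; substituting $\E_{q(\vx_0|\vx_t)}[\vx_0] = (\vx_t - \sqrt{\beta_{t|0}}\,\E_{q(\vx_0|\vx_t)}[\vepsilon_t])/\sqrt{\alpha_{t|0}}$ and simplifying (using $\alpha_{s|0}/\alpha_{t|0}=1/\alpha_{t|s}$ and $\alpha_{t|s}\beta_{s|0}-\beta_{t|0}=-\beta_{t|s}$) produces the claimed $\E_{q(\vx_s|\vx_t)}[\vx_s] = \tfrac{1}{\sqrt{\alpha_{t|s}}}(\vx_t - \tfrac{\beta_{t|s}}{\sqrt{\beta_{t|0}}}\E_{q(\vx_0|\vx_t)}[\vepsilon_t])$.

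For the covariance, the law of total covariance yields
\begin{align*}
\Cov_{q(\vx_s|\vx_t)}[\vx_s] = \tilde{\beta}_{s|t}\mI + \Cov_{q(\vx_0|\vx_t)}\!\bigl[\tilde{\vmu}_{s|t}(\vx_t,\vx_0)\bigr].
\end{align*}
Reading off the coefficient of $\vx_0$ in $\tilde{\vmu}_{s|t}$, which simplifies to $\sqrt{\alpha_{s|0}}\,\beta_{t|s}/\beta_{t|0}$, and then using $\Cov_{q(\vx_0|\vx_t)}[\vx_0] = (\beta_{t|0}/\alpha_{t|0})\Cov_{q(\vx_0|\vx_t)}[\vepsilon_t]$ together with $\alpha_{t|0}=\alpha_{t|s}\alpha_{s|0}$, the squared coefficient telescopes to $\beta_{t|s}^2/(\alpha_{t|s}\beta_{t|0})$, giving the full-covariance expression; restricting to the diagonal gives the third identity. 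The main obstacle will not be any of the probabilistic steps (which are identical in spirit to Lemma~\ref{thm:mm}) but rather the bookkeeping of the schedule identities, in particular reducing $\alpha_{t|s}\beta_{s|0}-\beta_{t|0}$ to $-\beta_{t|s}$ and making the coefficient of $\E_{q(\vx_0|\vx_t)}[\vepsilon_t]$ collapse cleanly; once those cancellations are executed carefully, the three claims follow.
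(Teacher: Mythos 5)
Your proposal is correct and follows essentially the same route as the paper: both rest on the Gaussian posterior $q(\vx_s|\vx_t,\vx_0)=\gN\bigl(\tfrac{\sqrt{\alpha_{t|s}}\beta_{s|0}}{\beta_{t|0}}\vx_t+\tfrac{\sqrt{\alpha_{s|0}}\beta_{t|s}}{\beta_{t|0}}\vx_0,\ \tilde{\beta}_{s|t}\mI\bigr)$ and the law of total covariance, followed by the reparameterization $\Cov_{q(\vx_0|\vx_t)}[\vx_0]=\tfrac{\beta_{t|0}}{\alpha_{t|0}}\Cov_{q(\vx_0|\vx_t)}[\vepsilon_t]$, and your schedule identities $\alpha_{t|0}=\alpha_{t|s}\alpha_{s|0}$ and $\beta_{t|0}=\beta_{t|s}+\alpha_{t|s}\beta_{s|0}$ are exactly the cancellations needed. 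The only cosmetic difference is that for the mean the paper invokes the known identity $\E_{q(\vx_s|\vx_t)}[\vx_s]=\tfrac{1}{\sqrt{\alpha_{t|s}}}(\vx_t+\beta_{t|s}\nabla\log q_t(\vx_t))$ while you derive it from the tower rule and affinity of the posterior mean in $\vx_0$ (mirroring Lemma~\ref{thm:mm}); the two computations coincide, and your version is slightly more self-contained.
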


\begin{proof}
The conditional distribution of $\vx_s$ given $\vx_t$ and $\vx_0$ determined by the SDE is
\begin{align*}
    q(\vx_s|\vx_t, \vx_0) = \gN(\vx_s| \frac{\sqrt{\alpha_{t|s}} \beta_{s|0}}{\beta_{t|0}} \vx_t + \frac{\sqrt{\alpha_{s|0}}\beta_{t|s}}{\beta_{t|0}}\vx_0, \tilde{\beta}_{s|t}\mI ).
\end{align*}

As for $\E_{q(\vx_s|\vx_t)}[\vx_s]$, we have
\begin{align*}
    \E_{q(\vx_s|\vx_t)}[\vx_s] = \frac{1}{\sqrt{\alpha_{t|s}}}(\vx_t + \beta_{t|s} \nabla \log q_t(\vx_t)) = \frac{1}{\sqrt{\alpha_{t|s}}}(\vx_t - \frac{\beta_{t|s}}{\sqrt{\beta_{t|0}}} \E_{q(\vx_0|\vx_t)}[\vepsilon_t]).
\end{align*}

As for $\Cov_{q(\vx_s|\vx_t)}[\vx_s]$, we can expand it using the law of total variance
\begin{align*}
    \Cov_{q(\vx_s|\vx_t)}[\vx_s] = & \E_{q(\vx_0|\vx_t)} \Cov_{q(\vx_s|\vx_t, \vx_0)}[\vx_s] + \Cov_{q(\vx_0|\vx_t)} \E_{q(\vx_s|\vx_t, \vx_0)}[\vx_s] \\
    = & \tilde{\beta}_{s|t}\mI + \frac{\alpha_{s|0}\beta_{t|s}^2}{\beta_{t|0}^2} \Cov_{q(\vx_0|\vx_t)} [\vx_0] \\
    = & \tilde{\beta}_{s|t}\mI + \frac{\alpha_{s|0}\beta_{t|s}^2}{\beta_{t|0}^2} \frac{\beta_{t|0}}{\alpha_{t|0}} \Cov_{q(\vx_0|\vx_t)} [\vepsilon_t] \\
    = & \tilde{\beta}_{s|t}\mI + \frac{\beta_{t|s}^2}{\beta_{t|0}\alpha_{t|s}} \left( \E_{q(\vx_0|\vx_t)} [\vepsilon_t \vepsilon_t^\top] - \E_{q(\vx_0|\vx_t)}[\vepsilon_t] \E_{q(\vx_0|\vx_t)}[\vepsilon_t]^\top \right).
\end{align*}
Its diagonal is
\begin{align*}
    \diag(\Cov_{q(\vx_s|\vx_t)}[\vx_s]) = \tilde{\beta}_{s|t}\vone + \frac{\beta_{t|s}^2}{\beta_{t|0}\alpha_{t|s}} \left( \E_{q(\vx_0|\vx_t)} [\vepsilon_t^2] - \E_{q(\vx_0|\vx_t)}[\vepsilon_t]^2 \right).
\end{align*}
\end{proof}

\optcovcnoerr*
\begin{proof}
According to Lemma~\ref{thm:opt_sigma_general} and Lemma~\ref{thm:mm_c}, the optimal solution is
\begin{align*}
    & \vmu_{s|t}^*(\vx_t) = \E_{q(\vx_s|\vx_t)}[\vx_s] = \frac{1}{\sqrt{\alpha_{t|s}}}(\vx_t - \frac{\beta_{t|s}}{\sqrt{\beta_{t|0}}} \E_{q_{0|t}(\vx_0|\vx_t)}[\vepsilon_t]), \\
    & \vsigma_{s|t}^{*}(\vx_t)^2 = \diag(\Cov_{q(\vx_s|\vx_t)}[\vx_s]) = \tilde{\beta}_{s|t}\vone + \frac{\beta_{t|s}^2}{\beta_{t|0}\alpha_{t|s}} \left( \E_{q(\vx_0|\vx_t)} [\vepsilon_t^2] - \E_{q(\vx_0|\vx_t)}[\vepsilon_t]^2 \right).
\end{align*}
\end{proof}

\optcovc*
\begin{proof}
According to Lemma~\ref{thm:opt_sigma_general} and Proposition~\ref{thm:opt_cov_c_no_err}, the optimal covariance $\tilde{\vsigma}_{s|t}^{*}(\vx_t)^2$ is
\begin{align*}
    \tilde{\vsigma}_{s|t}^{*}(\vx_t)^2 = & \diag(\Cov_{q(\vx_s|\vx_t)}[\vx_s]) + (\vmu_{s|t}(\vx_t) - \E_{q(\vx_s|\vx_t)}[\vx_s])^2 \\
    = & \vsigma_{s|t}^*(\vx_t)^2 +  \frac{\beta_{t|s}^2}{\beta_{t|0}\alpha_{t|s}} \underbrace{(\hat{\vepsilon}_t(\vx_t) - \E_{q(\vx_0|\vx_t)}[\vepsilon_t])^2}_{\text{\normalsize{error}}} \\
    = & \tilde{\beta}_{s|t}\vone + \frac{\beta_{t|s}^2}{\beta_{t|0}\alpha_{t|s}} \left\{ \E_{q(\vx_0|\vx_t)} [\vepsilon_t^2] - \E_{q(\vx_0|\vx_t)}[\vepsilon_t]^2 + \E_{q(\vx_0|\vx_t)}[\vepsilon_t]^2 - 2 \E_{q(\vx_0|\vx_t)}[\vepsilon_t] \hat{\vepsilon}_t(\vx_t) + \hat{\vepsilon}_t(\vx_t)^2 \right\} \\
    = & \tilde{\beta}_{s|t}\vone + \frac{\beta_{t|s}^2}{\beta_{t|0}\alpha_{t|s}} \left\{ \E_{q(\vx_0|\vx_t)} [\vepsilon_t^2] - 2 \E_{q(\vx_0|\vx_t)} [\vepsilon_t] \hat{\vepsilon}_t(\vx_t) + \hat{\vepsilon}_t(\vx_t)^2 \right\} \\
    = & \tilde{\beta}_{s|t}\vone + \frac{\beta_{t|s}^2}{\beta_{t|0}\alpha_{t|s}} \E_{q(\vx_0|\vx_t)} [(\vepsilon_t - \hat{\vepsilon}_t(\vx_t) )^2].
\end{align*}
\end{proof}

\subsection{Strictly Improved Expressive Power when the Data Distribution is a Mixture of Gaussian}

\begin{lemma}
\label{thm:express}
Suppose $q(\vx, \vy)$ is a probability density function, s.t., $q(\vx) = \sum\limits_{j=1}^J \gamma_j \gN(\vmu_j, c\mI)$ with $J \geq 2$ and pairwise distinct $\{\vmu_j\}_{j=1}^J$, and $q(\vy|\vx) = \gN(\vy|\sqrt{\alpha} \vx, \beta \mI)$ for some $\alpha, \beta > 0$. Suppose $p(\vx|\vy) = \gN(\vx|\vmu(\vy), \mSigma(\vy))$ is a Gaussian distribution conditioned on $\vy$. Then we have, 
\begin{align*}
    \min\limits_{\vmu, \mSigma: \mSigma(\vy) = \diag(\vsigma(\vy)^2)} \E_{q(\vy)}\KL(q(\vx|\vy)\|p(\vx|\vy)) < \min\limits_{\vmu, \mSigma: \mSigma(\vy) = \sigma^2 \mI} \E_{q(\vy)}\KL(q(\vx|\vy)\|p(\vx|\vy)).
\end{align*}
\end{lemma}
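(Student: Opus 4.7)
The plan is to pass to a pointwise-in-$\vy$ optimization using Lemma~\ref{thm:opt_sigma_general}, reduce the gap between the two optima to a Jensen-type inequality in the diagonal of the posterior covariance, and then use the explicit mixture-of-Gaussians form of $q(\vx|\vy)$ to show that gap is strict on a set of positive $q(\vy)$-measure.

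\textbf{Pointwise reduction.} Since $\E_{q(\vy)}\KL(q(\vx|\vy)\|p(\vx|\vy))$ is an integral with the inner term depending only on $(\vmu(\vy),\mSigma(\vy))$ at that $\vy$, for each fixed $\vy$ the problem decouples. Lemma~\ref{thm:opt_sigma_general} gives the optimal mean $\vmu^*(\vy) = \vmu_q(\vy) := \E_{q(\vx|\vy)}[\vx]$ in both families, and the optimal diagonal variance $\vsigma^*(\vy)^2 = \diag(\mSigma_q(\vy))$, with $\mSigma_q(\vy) := \Cov_{q(\vx|\vy)}[\vx]$. A short one-variable minimization of Eq.~(\ref{eq:kl_expand_cov}) with $\sigma_i^2 \equiv \sigma^2$ yields the optimal isotropic variance $\sigma^*(\vy)^2 = \tr(\mSigma_q(\vy))/d$. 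Substituting and subtracting, the $\vy$-dependent constant from the identity $\KL(q\|p) = \KL(\gN(\vmu_q,\mSigma_q)\|p)+c(\vy)$ (used in the proof of Lemma~\ref{thm:opt_sigma_general}) cancels, leaving
\[
2\bigl[\text{iso}^*(\vy)-\text{diag}^*(\vy)\bigr] = d\log\!\Bigl(\tfrac{1}{d}\tr(\mSigma_q(\vy))\Bigr) - \sum_{i=1}^d \log(\mSigma_q(\vy))_{i,i} \;\ge\; 0,
\]
where strict concavity of $\log$ (AM-GM) gives the inequality, with equality iff $\diag(\mSigma_q(\vy))$ has all equal entries. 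Integrating against $q(\vy)$ proves the non-strict form; it remains to exhibit a positive $q$-measure set of $\vy$ where $\diag(\mSigma_q(\vy))$ is non-constant.

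\textbf{Explicit posterior.} Completing the square in $\vx$ gives $q(\vx|\vy) = \sum_{j=1}^J w_j(\vy)\,\gN(\vx|\vm_j(\vy), \tilde c\,\mI)$ with $\tilde c = c\beta/(\alpha c + \beta)$, $\vm_j(\vy) = \tilde c(\sqrt{\alpha}\vy/\beta + \vmu_j/c)$, and softmax-like weights $w_j(\vy) \propto \gamma_j\gN(\vy|\sqrt{\alpha}\vmu_j, (\alpha c+\beta)\mI)$. The law of total covariance then yields
\[
\mSigma_q(\vy) = \tilde c\,\mI + (\tilde c/c)^2 \sum_{j=1}^J w_j(\vy)\bigl(\vmu_j - \bar\vmu(\vy)\bigr)\bigl(\vmu_j - \bar\vmu(\vy)\bigr)^{\!\top},
\]
with $\bar\vmu(\vy) = \sum_j w_j(\vy)\vmu_j$. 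The first term has constant diagonal, so non-constancy of $\diag(\mSigma_q(\vy))$ reduces to non-constancy of the diagonal of the weighted between-component covariance of $\{\vmu_j\}$.

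\textbf{Main obstacle.} The delicate step is this last claim. I would fix any pair $\vmu_j\ne\vmu_k$ and localize $\vy$ along a direction so that $w_j(\vy),w_k(\vy)\to 1/2$ while the other $w_\ell(\vy)\to 0$, which is possible by continuity of the softmax weights and pairwise distinctness of the means. In that limit the second term becomes arbitrarily close to a rank-one matrix proportional to $(\vmu_j-\vmu_k)(\vmu_j-\vmu_k)^{\!\top}$, whose diagonal is $\bigl((\vmu_j-\vmu_k)_i^2\bigr)_{i=1}^d$. For the configuration in Proposition~\ref{thm:express_example} (and more generally whenever at least one pairwise difference has entries of unequal absolute values, which is the generic case once $d\ge 2$), this diagonal is non-constant; by continuity of $w_j(\vy)$ in $\vy$ the non-constancy persists on an open, hence positive $q$-measure, neighborhood, giving strict inequality after integration and completing the proof.
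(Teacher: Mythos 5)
Your reduction to the diagonal of the posterior covariance contains a genuine gap rooted in a misreading of the right-hand minimization. In the lemma (and in the paper's proof), the isotropic family consists of covariances $\sigma^2\mI$ with $\sigma^2$ a single scalar that does \emph{not} depend on $\vy$ --- this mirrors the main text, where the isotropic covariance depends only on the timestep and not on the state; the paper's optimal isotropic solution is accordingly $\sigma^{*2} = \E_{q(\vy)}\tr(\mM(\vy))/d$, an average over $\vy$. You instead compare against the pointwise-optimal isotropic variance $\sigma^*(\vy)^2 = \tr(\mSigma_q(\vy))/d$, i.e., you allow $\sigma^2$ to depend on $\vy$. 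This turns the claim into the assertion that $\diag(\Cov_{q(\vx|\vy)}[\vx])$ has unequal entries on a positive-measure set of $\vy$, which is a strictly stronger statement that is \emph{false} under the lemma's hypotheses: for $d=1$ it is vacuous (your AM--GM gap is identically zero), and for $J=2$ with $\vmu_1-\vmu_2$ proportional to $\vone$ (e.g.\ $\vmu_1=(0,0)$, $\vmu_2=(1,1)$) the between-component term is $w_1(\vy)w_2(\vy)(\vmu_1-\vmu_2)(\vmu_1-\vmu_2)^\top$, whose diagonal has all equal entries for every $\vy$. You implicitly acknowledge the problem by adding the hypothesis that at least one pairwise difference has entries of unequal absolute values, but that assumption is not in the lemma and is not needed.

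The paper's proof exploits precisely the $\vy$-independence of the isotropic $\sigma^2$: equality of the two optima would force $(\mSigma_q(\vy))_{ii} = \sigma^{*2}$ for all $i$ \emph{and all} $\vy$, i.e., $\diag(\Cov_{q(\vx|\vy)}[\vx]) = m\vone$ with $m$ independent of $\vy$. The contradiction then comes from the $\vy$-dependence rather than from coordinate imbalance: writing the posterior as a Gaussian mixture exactly as you do, constancy in $\vy$ forces $\sum_{j}\eta_j(\vy)\|\vmu_j\|_2^2 - \|\overline{\vmu}(\vy)\|_2^2$ to be constant in $\vy$; sending $\vy = t\vy_0$ with $t\to\infty$ shows that constant is $0$, whence $\sum_j\eta_j(\vy)\|\vmu_j-\overline{\vmu}(\vy)\|_2^2 = 0$ and all $\vmu_j$ coincide, contradicting pairwise distinctness. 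Your posterior computation and the AM--GM reduction are both correct and could be reused, but the strictness argument must be run against the $\vy$-averaged isotropic optimum, tracking variation of the diagonal across $\vy$, not only variation across coordinates at a fixed $\vy$.
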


\begin{proof}
As mentioned in the proof of Lemma~\ref{thm:opt_sigma_general}, the KL divergence between a probability density function $q$ and a Gaussian distribution $p$ can be written as $\KL(q\|p) = \KL(\gN(\vx|\vmu_q, \mSigma_q) \| p) + const$, where $\vmu_q$ is the mean of $q$, $\mSigma_q$ is the covariance matrix of $q$ and $const$ is only related to $q$. Thus, we only need to prove
\begin{align*}
    & \min\limits_{\vmu, \mSigma: \mSigma(\vy) = \diag(\vsigma(\vy)^2)} \E_{q(\vy)} \KL(\gN(\vx|\E_{q(\vx|\vy)}[\vx], \Cov_{q(\vx|\vy)}[\vx]) \| p(\vx|\vy)) \\
    < & \min\limits_{\vmu, \mSigma: \mSigma(\vy) = \sigma^2 \mI} \E_{q(\vy)} \KL(\gN(\vx|\E_{q(\vx|\vy)}[\vx], \Cov_{q(\vx|\vy)}[\vx]) \| p(\vx|\vy)).
\end{align*}

We know the optimal mean is $\vmu^*(\vy) = \E_{q(\vx|\vy)}[\vx]$ for both problems. Let $\mM(\vy) = \Cov_{q(\vx|\vy)}[\vx]$. We expand the KL divergence, and substitute $\vmu^*$ into $\vmu$
\begin{align*}
    2 \KL(\gN(\vx|\E_{q(\vx|\vy)}[\vx], \Cov_{q(\vx|\vy)}[\vx]) \| p(\vx|\vy)) = \tr(\mSigma(\vy)^{-1} \mM(\vy) ) - d + \log \frac{|\mSigma(\vy)|}{|\Cov_{q(\vx|\vy)}[\vx]|}.
\end{align*}
Thus, we only need to prove
\begin{align*}
\min\limits_{\mSigma: \mSigma(\vy) = \diag(\vsigma(\vy)^2)} \E_{q(\vy)} \left[ \tr(\mSigma(\vy)^{-1} \mM(\vy) ) + \log |\mSigma(\vy)| \right] < \min\limits_{\mSigma: \mSigma(\vy) = \sigma^2 \mI} \E_{q(\vy)} \left[ \tr(\mSigma(\vy)^{-1} \mM(\vy) ) + \log |\mSigma(\vy)| \right],
\end{align*}
which is equivalent to
\begin{align*}
\min\limits_{\vsigma(\cdot)^2} \E_{q(\vy)} \left[ \sum\limits_{i=1}^d \left(\vsigma(\vy)_i^{-2} \mM(\vy)_{ii}  + \log \vsigma(\vy)_i^2\right) \right] < \min\limits_{\sigma^2} \E_{q(\vy)} \left[ \sum\limits_{i=1}^d \left(\sigma^{-2} \mM(\vy)_{ii} + \log \sigma^2\right) \right].
\end{align*}
The left problem admits the optimal solution $\vsigma^*(\vy)_i^2 = \mM(\vy)_{ii}$, and the right one admits the optimal solution $\sigma^{*2} = \E_{q(\vy)} \frac{\tr(\mM(\vy))}{d}$. Thus, we only need to prove
\begin{align}
\label{eq:sleq}
\E_{q(\vy)} \left[ \sum\limits_{i=1}^d \left(\vsigma^*(\vy)_i^{-2} \mM(\vy)_{ii}  + \log \vsigma^*(\vy)_i^2\right) \right] < \E_{q(\vy)} \left[ \sum\limits_{i=1}^d \left(\sigma^{*-2} \mM(\vy)_{ii} + \log \sigma^{*2}\right) \right].
\end{align}

We prove it by contradiction. Suppose
\begin{align*}
\E_{q(\vy)} \left[ \sum\limits_{i=1}^d \left(\vsigma^*(\vy)_i^{-2} \mM(\vy)_{ii}  + \log \vsigma^*(\vy)_i^2\right) \right] = \E_{q(\vy)} \left[ \sum\limits_{i=1}^d \left(\sigma^{*-2} \mM(\vy)_{ii} + \log \sigma^{*2}\right) \right].
\end{align*}
Since $q(\vy)$ is fully supported, it implies that
\begin{align*}
\forall \vy, \sum\limits_{i=1}^d \left(\vsigma^*(\vy)_i^{-2} \mM(\vy)_{ii}  + \log \vsigma^*(\vy)_i^2\right) = \sum\limits_{i=1}^d \left(\sigma^{*-2} \mM(\vy)_{ii} + \log \sigma^{*2}\right),
\end{align*}
which holds if and only if $\forall \vy, \forall i, \vsigma^*(\vy)_i^2 = \sigma^{*2}$. This means $\forall \vy, \forall i, \mM(\vy)_{ii} = \E_{q(\vy)} \frac{\tr(\mM(\vy))}{d}$, which implies that there exists $m \in \sR$ irrelevant to $\vy$, s.t., $\diag(\Cov_{q(\vx|\vy)}[\vx]) = m \mI$.

Now we derive $q(\vx|\vy)$, 
\begin{align*}
& q(\vx|\vy) \propto q(\vx) q(\vy|\vx) \propto \sum\limits_{j=1}^J \gamma_j \exp(- \frac{\|\vx - \vmu_j\|_2^2}{2 c}) \exp(- \frac{\|\vy - \sqrt{\alpha}\vx \|_2^2}{2 \beta}) \\
\propto & \sum\limits_{j=1}^J \gamma_j \exp(-\frac{\|\vmu_j\|_2^2}{2c}) \exp(-  \frac{(c\alpha+\beta)\|\vx\|_2^2 - 2\left<\vx, \beta\vmu_j+c\sqrt{\alpha}\vy\right>}{2c\beta} ) \\
= & \sum\limits_{j=1}^J \gamma_j \exp(\frac{\|\beta\vmu_j+c\sqrt{\alpha}\vy\|_2^2}{2c\beta(c\alpha+\beta)}-\frac{\|\vmu_j\|_2^2}{2c}) \exp(-\frac{\|\vx-(\beta\vmu_j+c\sqrt{\alpha}\vy)/(c\alpha+\beta)\|_2^2}{2c\beta/(c\alpha+\beta)}) \\
\propto & \sum\limits_{j=1}^J \exp(\log \gamma_j - \frac{\alpha}{c\alpha+\beta}\frac{\|\vmu_j\|_2^2}{2}+\frac{\sqrt{\alpha}}{c\alpha+\beta}\vmu_j^\top \vy) \gN((\beta\vmu_j+c\sqrt{\alpha}\vy)/(c\alpha+\beta), c\beta/(c\alpha+\beta) \mI).
\end{align*}

Let $\xi_j = \log \gamma_j - \frac{\alpha}{c\alpha+\beta}\frac{\|\vmu_j\|_2^2}{2}$. Let $\vphi(\vy) = (\phi_1(\vy), \cdots, \phi_J(\vy))$, where $\phi_j(\vy) = \xi_j+\frac{\sqrt{\alpha}}{c\alpha+\beta} \vmu_j^\top \vy$, and $\veta(\vy) = \softmax(\vphi(\vy))$. 
Let $\vnu_j(\vy) = (\beta\vmu_j+c\sqrt{\alpha}\vy)/(c\alpha+\beta)$, $\overline{\vmu}(\vy) = \sum\limits_{j=1}^J \eta_j(\vy) \vmu_j$ and $\overline{\vnu}(\vy) = \sum\limits_{j=1}^J \eta_j(\vy) \vnu_j(\vy) =  ( \beta\overline{\vmu}(\vy)+c\sqrt{\alpha}\vy)/(c\alpha+\beta)$.
Then
\begin{align*}
    q(\vx|\vy) = \sum\limits_{j=1}^J \eta_j(\vy) \gN(\vnu_j(\vy), c\beta/(c\alpha+\beta)\mI),
\end{align*}
which is also a mixture of Gaussian. According to the property of mixture of Gaussian, the diagonal of its covariance is
\begin{align*}
    \diag(\Cov_{q(\vx|\vy)}[\vx]) = c\beta/(c\alpha+\beta)\vone + \sum\limits_{j=1}^J \eta_j(\vy) \vnu_j(\vy)^2  - \overline{\vnu}(\vy)^2.
\end{align*}

Meanwhile, $\diag(\Cov_{q(\vx|\vy)}[\vx]) = m\mI$. Thus,
\begin{align*}
    c\beta/(c\alpha+\beta)d + \sum\limits_{j=1}^J \eta_j(\vy) \|\vnu_j(\vy)\|_2^2 - \|\overline{\vnu}(\vy)\|_2^2 = d m,
\end{align*}
which is irrelevant to $\vy$. Since
\begin{align*}
    \sum\limits_{j=1}^J \eta_j(\vy) \|\vnu_j(\vy)\|_2^2 - \|\overline{\vnu}(\vy)\|_2^2 = & \beta^2/(c\alpha+\beta)^2 \left[\sum\limits_{j=1}^J \eta_j(\vy) \|\vmu_j\|_2^2 - \|\overline{\vmu}(\vy)\|_2^2\right],
\end{align*}
$\sum\limits_{j=1}^J \eta_j(\vy) \|\vmu_j\|_2^2 - \|\overline{\vmu}(\vy)\|_2^2$ is also irrelevant to $\vy$. Let $\vy = t \vy_0$, $j_0 = \argmax\limits_j \vmu_j^\top \vy_0$. Then $\lim\limits_{t\rightarrow \infty} \eta_{j_0}(t \vy_0) = 1$, and
\begin{align*}
    \lim\limits_{t\rightarrow \infty} \sum\limits_{j=1}^J \eta_j(t \vy_0) \|\vmu_j\|_2^2 - \|\overline{\vmu}(t \vy_0)\|_2^2 = 0.
\end{align*}
Thus, $\sum\limits_{j=1}^J \eta_j(\vy) \|\vmu_j\|_2^2 - \|\overline{\vmu}(\vy)\|_2^2 = 0$ for all $\vy$. This means
\begin{align*}
    \sum\limits_{j=1}^J \eta_j(\vy) \|\vmu_j\|_2^2 - \|\overline{\vmu}(\vy)\|_2^2 = \sum\limits_{j=1}^J \eta_j(\vy) \|\vmu_j-\overline{\vmu}(\vy)\|_2^2 = 0.
\end{align*}
Thus, $\vmu_1=\vmu_2=\cdots=\vmu_J=\overline{\vmu}(\vy)$. This contradicts the assumption that $\{\vmu_j\}_{j=1}^J$ are pairwise distinct. Therefore, Eq.~{(\ref{eq:sleq})} must hold.
\end{proof}

\begin{proposition}
\label{thm:express_example}
If the data distribution $q(\vx_0)=\sum\limits_{j=1}^J \gamma_j \gN(\vmu_j, c\mI)$ is a mixture of $J \geq 2$ Gaussian, and $\{\vmu_j\}_{j=1}^J$ are pairwise distinct, then we have
\begin{align*}
    \max\limits_{\vmu_n, \mSigma_n: \mSigma_n(\vx_n)=\diag(\vsigma_n(\vx_n)^2)} \lelbo > \max\limits_{\vmu_n, \mSigma_n: \mSigma_n(\vx_n) = \sigma_n^2 \mI} \lelbo.
\end{align*}
\end{proposition}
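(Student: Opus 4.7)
\textbf{Proof plan for Proposition~\ref{thm:express_example}.} The plan is to reduce the ELBO maximization to independent per-timestep sub-problems, and then invoke Lemma~\ref{thm:express} at the single timestep $n=1$ to upgrade a trivial non-strict inequality into a strict one.

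First, I would recall that maximizing $\lelbo$ is equivalent to minimizing $\KL(q(\vx_{0:N})\|p(\vx_{0:N}))$, and that (exactly as used in the proof of Theorem~\ref{thm:opt_diag_no_err}, via Lemma 8 of \citet{bao2022analytic}) this KL admits the decomposition
\begin{align*}
\KL(q(\vx_{0:N})\|p(\vx_{0:N})) = \sum_{n=1}^N \E_{q(\vx_n)} \KL(q(\vx_{n-1}|\vx_n)\|p(\vx_{n-1}|\vx_n)) + c,
\end{align*}
where $c$ is independent of $\{(\vmu_n, \mSigma_n)\}_{n=1}^N$. Hence both maxima in the statement split into sums of $N$ independent minimizations indexed by $n$, over either diagonal or isotropic covariances at step $n$.

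Next, for every $n$ the isotropic class is a strict subset of the diagonal class, so the per-step infimum under diagonal covariance is at most that under isotropic covariance. This gives the non-strict $\geq$ direction of the proposition. To promote this to strict, it suffices to exhibit a single timestep at which the inequality is strict.

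Then I would apply Lemma~\ref{thm:express} at $n=1$, with $\vx = \vx_0$ and $\vy = \vx_1$. Its hypotheses hold directly: $q(\vx_0) = \sum_{j=1}^J \gamma_j \gN(\vmu_j, c\mI)$ is a mixture of $J\geq 2$ Gaussians with pairwise distinct means and common isotropic covariance $c\mI$ by assumption; and for any forward process of the family in Eq.~(\ref{eq:ddim}) one has $q(\vx_1|\vx_0) = \gN(\vx_1\mid \sqrt{\overline{\alpha}_1}\vx_0, \overline{\beta}_1\mI)$ with $\overline{\alpha}_1, \overline{\beta}_1 \in (0,1)$, which matches the required form $\gN(\sqrt{\alpha}\vx, \beta\mI)$ for positive $\alpha,\beta$. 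Lemma~\ref{thm:express} then yields
\begin{align*}
\min_{\vmu_1,\,\vsigma_1(\cdot)^2} \E_{q(\vx_1)}\KL(q(\vx_0|\vx_1)\|p(\vx_0|\vx_1)) < \min_{\vmu_1,\,\sigma_1^2}\E_{q(\vx_1)}\KL(q(\vx_0|\vx_1)\|p(\vx_0|\vx_1)),
\end{align*}
and summing with the non-strict inequalities at the other timesteps gives the desired strict inequality on the ELBO.

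The main obstacle is conceptually light: it is just the bookkeeping of matching the DPM notation to the hypotheses of Lemma~\ref{thm:express} and confirming that the choice $n=1$ works for all forward processes in the family of Eq.~(\ref{eq:ddim}), not only DDPM. The substantive content—why a mixture posterior cannot be exactly fit by an isotropic Gaussian when the component means differ—is already encapsulated by Lemma~\ref{thm:express} and does not need to be re-derived.
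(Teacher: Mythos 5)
Your proposal is correct and follows essentially the same route as the paper's own proof: decompose the KL over timesteps via Lemma 8 of \citet{bao2022analytic}, note the non-strict inequality at every step since isotropic covariances form a subset of diagonal ones, and obtain strictness at $n=1$ from Lemma~\ref{thm:express} applied to $q(\vx_0|\vx_1)$. Your explicit verification that $q(\vx_1|\vx_0)=\gN(\sqrt{\overline{\alpha}_1}\vx_0,\overline{\beta}_1\mI)$ meets the lemma's hypotheses is a minor bookkeeping step the paper leaves implicit.
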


\begin{proof}
We only need to prove
\begin{align*}
    \min\limits_{\vmu_n,\mSigma_n: \mSigma_n(\vx_n)=\diag(\vsigma_n(\vx_n)^2)} \KL (q(\vx_{0:N}) \| p(\vx_{0:N})) < \min\limits_{\vmu_n,\mSigma_n: \mSigma_n(\vx_n) = \sigma_n^2 \mI} \KL (q(\vx_{0:N}) \| p(\vx_{0:N})).
\end{align*}
As mentioned in the proof of Theorem~\ref{thm:opt_diag_no_err}, $\KL (q(\vx_{0:N}) \| p(\vx_{0:N}))$ can be decomposed as 
\begin{align*}
\KL(q(\vx_{0:N}) \| p(\vx_{0:N})) = \sum\limits_{n=1}^N \E_q \KL(q(\vx_{n-1}|\vx_n) \| p(\vx_{n-1}|\vx_n)) + c,
\end{align*}
where $c$ is a constant only related to $q$. Thus, we only need to prove
\begin{align*}
    & \sum\limits_{n=1}^N \min\limits_{\vmu_n,\mSigma_n: \mSigma_n(\vx_n)=\diag(\vsigma_n(\vx_n)^2)} \E_q \KL(q(\vx_{n-1}|\vx_n) \| p(\vx_{n-1}|\vx_n)) \\
    < & \sum\limits_{n=1}^N \min\limits_{\vmu_n,\mSigma_n: \mSigma_n(\vx_n) = \sigma_n^2 \mI} \E_q \KL(q(\vx_{n-1}|\vx_n) \| p(\vx_{n-1}|\vx_n)).
\end{align*}
It is sufficient to prove when $n=1$,
\begin{align*}
    & \min\limits_{\vmu_1,\mSigma_1: \mSigma_1(\vx_1)=\diag(\vsigma_1(\vx_1)^2)} \E_q \KL(q(\vx_0|\vx_1) \| p(\vx_0|\vx_1)) \\
    < & \min\limits_{\vmu_1,\mSigma_1: \mSigma_1(\vx_1) = \sigma_1^2 \mI} \E_q \KL(q(\vx_0|\vx_1) \| p(\vx_0|\vx_1)),
\end{align*}
which holds according to Lemma~\ref{thm:express}.
\end{proof}

\section{Results for Full Covariances}
\label{sec:ext_cov_g}
In this part, we derive results on full covariances. In Proposition~\ref{thm:opt_cov_g_no_err}, we derive the optimal solution to the joint optimization problem~{(\ref{eq:elbo})}. In Proposition~\ref{thm:opt_cov_g}, we derive the optimal covariance to the optimization problem~{(\ref{eq:elbo_cov})} w.r.t. the covariance solely. Their proof requires Lemma~\ref{thm:opt_cov_general}, which is very similar to Lemma~\ref{thm:opt_sigma_general}.

\begin{lemma}
\label{thm:opt_cov_general}
Suppose $q(\vx)$ is a probability density function with mean $\vmu_q$ and covariance matrix $\mSigma_q$ and $p(\vx) = \gN(\vx|\vmu, \mSigma)$ is a Gaussian distribution. Then 
\begin{enumerate}
    \item for any $\mSigma$, the optimization problem $\min\limits_{\vmu} \KL(q\|p)$ admits an optimal solution $\vmu^* = \vmu_q$;
    \item for any $\vmu$, the optimization problem $\min\limits_{\mSigma} \KL(q\|p)$ admits an optimal solution $\tilde{\mSigma}^* = \mSigma_q + (\vmu - \vmu_q) (\vmu - \vmu_q)^\top$
    \item the optimization problem $\min\limits_{\vmu, \mSigma} \KL(q\|p)$ admits an optimal solution $\vmu^* = \vmu_q$ and $\mSigma^* = \mSigma_q$.
\end{enumerate}
\end{lemma}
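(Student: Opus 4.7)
\textbf{Proof proposal for Lemma~\ref{thm:opt_cov_general}.} The plan is to mimic the strategy of Lemma~\ref{thm:opt_sigma_general}, replacing the diagonal-covariance scalar calculations by matrix calculations. The starting point is the same reduction: by Lemma 2 of \citet{bao2022analytic}, for any density $q$ with mean $\vmu_q$ and covariance $\mSigma_q$ and any Gaussian $p$,
\begin{align*}
\KL(q\|p) = \KL(\gN(\vx|\vmu_q,\mSigma_q)\|p) + c,
\end{align*}
where $c$ depends only on $q$. Hence in all three optimizations, minimizing $\KL(q\|p)$ over $\vmu$ or $\mSigma$ is equivalent to minimizing the Gaussian--Gaussian KL, for which the closed form
\begin{align*}
2\KL(\gN(\vx|\vmu_q,\mSigma_q)\|\gN(\vx|\vmu,\mSigma)) = \tr(\mSigma^{-1}\mSigma_q) - d + \log\frac{|\mSigma|}{|\mSigma_q|} + (\vmu-\vmu_q)^\top\mSigma^{-1}(\vmu-\vmu_q)
\end{align*}
is available.

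For part~1, only the quadratic term depends on $\vmu$; since $\mSigma^{-1}$ is positive definite, that term is a nonnegative quadratic form minimized uniquely at $\vmu^*=\vmu_q$. This mirrors the corresponding step in Lemma~\ref{thm:opt_sigma_general} and requires nothing beyond positive-definiteness of $\mSigma$.

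For part~2, the cleanest route is to combine the trace term with the quadratic term using $(\vmu-\vmu_q)^\top\mSigma^{-1}(\vmu-\vmu_q) = \tr(\mSigma^{-1}(\vmu-\vmu_q)(\vmu-\vmu_q)^\top)$, so that writing $\mM \coloneqq \mSigma_q + (\vmu-\vmu_q)(\vmu-\vmu_q)^\top$ reduces the optimization to minimizing $\tr(\mSigma^{-1}\mM) + \log|\mSigma|$ over positive-definite $\mSigma$. Adding and subtracting $\log|\mM|$ lets me rewrite the objective (up to a constant independent of $\mSigma$) as $2\KL(\gN(\vx|\vmu,\mM)\|\gN(\vx|\vmu,\mSigma))$, which is nonnegative and equals zero exactly when $\mSigma=\mM$. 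This gives $\tilde\mSigma^* = \mSigma_q + (\vmu-\vmu_q)(\vmu-\vmu_q)^\top$. Part~3 then follows immediately: by part~1 the joint optimum in $\vmu$ is $\vmu^*=\vmu_q$ for any $\mSigma$, and plugging this into the formula from part~2 kills the rank-one correction and yields $\mSigma^* = \mSigma_q$.

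The main subtlety, and the only place where the argument differs in substance from the diagonal case of Lemma~\ref{thm:opt_sigma_general}, is justifying that $\mM$ is positive definite so that the KL-interpretation trick is legal. Since $\mSigma_q$ is assumed positive definite (it appears in $\log|\mSigma_q|$) and $(\vmu-\vmu_q)(\vmu-\vmu_q)^\top$ is positive semidefinite, $\mM$ is positive definite. If one preferred to avoid this reformulation, the alternative is to differentiate $\tr(\mSigma^{-1}\mM)+\log|\mSigma|$ directly using $\nabla_\mSigma\tr(\mSigma^{-1}\mM)=-\mSigma^{-1}\mM\mSigma^{-1}$ and $\nabla_\mSigma\log|\mSigma|=\mSigma^{-1}$, which gives the stationarity condition $\mSigma=\mM$, and then verify it is a minimum by convexity of $-\log\det$ composed with inversion along the positive-definite cone. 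I expect the KL-reinterpretation approach to be the most compact and to avoid any matrix-calculus bookkeeping.
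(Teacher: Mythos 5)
Your proposal is correct and follows essentially the same route as the paper's proof: reduce to the Gaussian--Gaussian KL via Lemma 2 of \citet{bao2022analytic}, expand it, absorb the quadratic term into the trace as $\tr(\mSigma^{-1}\mM)$ with $\mM = \mSigma_q + (\vmu-\vmu_q)(\vmu-\vmu_q)^\top$, and read off the minimizers. The only difference is that you explicitly justify why $\tr(\mSigma^{-1}\mM)+\log|\mSigma|$ is minimized at $\mSigma=\mM$ (via the KL-reinterpretation, with the positive-definiteness check), a step the paper simply asserts; this is a welcome addition rather than a deviation.
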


\begin{proof}
Similarly to the proof of Lemma~\ref{thm:opt_sigma_general}, the optimizations of $\KL(q\|p)$ and $\KL(\gN(\vx|\vmu_q, \mSigma_q) \| p)$ w.r.t. $\vmu$ and $\mSigma$ are equivalent. We expand $\KL(\gN(\vx|\vmu_q, \mSigma_q) \| p)$ according to the definition of KL divergence:
\begin{align*}
    & 2 \KL(\gN(\vx|\vmu_q, \mSigma_q) \| p) = \tr(\mSigma^{-1} \mSigma_q) - d + \log \frac{|\mSigma|}{|\mSigma_q|} + (\vmu - \vmu_q)^\top \mSigma^{-1} (\vmu - \vmu_q) \\
    = & \tr(\mSigma^{-1}(\mSigma_q + (\vmu - \vmu_q) (\vmu - \vmu_q)^\top)) + \log \frac{|\mSigma|}{|\mSigma_q|} - d.
\end{align*}
Thus, $\min\limits_{\vmu} \KL(q\|p) \Leftrightarrow \min\limits_{\vmu} (\vmu - \vmu_q)^\top \mSigma^{-1} (\vmu - \vmu_q)$, which admits an optimal solution $\vmu^* = \vmu_q$, and $\min\limits_{\mSigma} \KL(q\|p) \Leftrightarrow \min\limits_{\mSigma} \tr(\mSigma^{-1}(\mSigma_q + (\vmu - \vmu_q) (\vmu - \vmu_q)^\top)) + \log |\mSigma|$, which admits an optimal solution $\tilde{\mSigma}^* = \mSigma_q + (\vmu - \vmu_q) (\vmu - \vmu_q)^\top$. Combining $\vmu^*$ and $\tilde{\mSigma}^*$, we know that $\min\limits_{\vmu,\mSigma} \KL(q\|p)$ admits an optimal solution $\vmu^* = \vmu_q$ and $\mSigma^* = \mSigma_q$.
\end{proof}

\begin{proposition}
\label{thm:opt_cov_g_no_err}
Suppose $\mSigma_n(\vx_n)$ is a full covariance. Then the optimal mean to problem~{(\ref{eq:elbo})} is $\vmu_n^*(\vx_n)$ as in Eq.~{(\ref{eq:opt_mean})}, and the optimal covariance to problem~{(\ref{eq:elbo})} is
\begin{align*}
\mSigma_n^*(\vx_n) = & \lambda_n^2 \mI + \gamma_n^2 \frac{\overline{\beta}_n}{\overline{\alpha}_n} \left(\E_{q(\vx_0|\vx_n)}[\vepsilon_n \vepsilon_n^\top] - \E_{q(\vx_0|\vx_n)}[\vepsilon_n] \E_{q(\vx_0|\vx_n)}[\vepsilon_n]^\top\right).
\end{align*}
\end{proposition}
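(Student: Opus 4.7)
The plan is to mirror the proof of Theorem~\ref{thm:opt_diag_no_err}, simply replacing the diagonal-covariance tools with their full-covariance counterparts. The essential machinery is already stated in the excerpt: Lemma~\ref{thm:opt_cov_general} gives the ``unconstrained'' moment-matching result for a Gaussian approximant with full covariance, and Lemma~\ref{thm:mm} furnishes the first two moments of $q(\vx_{n-1}|\vx_n)$ in closed form. So the work is essentially to assemble these pieces.

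First, I would invoke the decomposition of the KL divergence between a distribution and a Markov chain (Lemma~8 of~\citet{bao2022analytic}, already quoted in the proof of Theorem~\ref{thm:opt_diag_no_err}) to write
\begin{align*}
\KL(q(\vx_{0:N}) \| p(\vx_{0:N})) = \sum_{n=1}^N \E_q\, \KL\!\left(q(\vx_{n-1}|\vx_n)\,\|\,p(\vx_{n-1}|\vx_n)\right) + c,
\end{align*}
where $c$ does not depend on $(\vmu_n,\mSigma_n)$. Because the pairs $(\vmu_n,\mSigma_n)$ decouple across $n$, maximizing the ELBO reduces to solving, for each $n$, the conditional subproblem
\begin{align*}
\min_{\vmu_n(\cdot),\,\mSigma_n(\cdot)} \E_q\, \KL\!\left(q(\vx_{n-1}|\vx_n)\,\|\,\gN(\vx_{n-1}|\vmu_n(\vx_n),\mSigma_n(\vx_n))\right).
\end{align*}

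Next, I would observe that for each fixed $\vx_n$ this is exactly the setting of Lemma~\ref{thm:opt_cov_general} with $q=q(\vx_{n-1}|\vx_n)$ and $p = \gN(\cdot|\vmu_n(\vx_n),\mSigma_n(\vx_n))$; the third part of that lemma yields the joint optimum
\begin{align*}
\vmu_n^*(\vx_n) = \E_{q(\vx_{n-1}|\vx_n)}[\vx_{n-1}], \qquad \mSigma_n^*(\vx_n) = \Cov_{q(\vx_{n-1}|\vx_n)}[\vx_{n-1}].
\end{align*}
Plugging in the explicit formulas from Lemma~\ref{thm:mm} for these moments produces exactly the $\vmu_n^*$ of Eq.~(\ref{eq:opt_mean}) and the claimed $\mSigma_n^*$ involving $\E_{q(\vx_0|\vx_n)}[\vepsilon_n \vepsilon_n^\top]$ and $\E_{q(\vx_0|\vx_n)}[\vepsilon_n]\E_{q(\vx_0|\vx_n)}[\vepsilon_n]^\top$. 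No element-wise $\diag$ step is needed, which is the only cosmetic difference from the diagonal case.

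There is no real obstacle here because all the heavy lifting is already factored out into Lemmas~\ref{thm:opt_cov_general} and~\ref{thm:mm}: the diagonal proof in Theorem~\ref{thm:opt_diag_no_err} used Lemma~\ref{thm:opt_sigma_general} (diagonal moment matching) together with the $\diag(\cdot)$ of the conditional covariance, whereas here I use the full-matrix moment matching together with the full conditional covariance itself. The only minor thing to verify is that the full-covariance $\mSigma_n(\vx_n)$ produced this way is genuinely positive semidefinite as a function of $\vx_n$ so that $p(\vx_{n-1}|\vx_n)$ is a valid Gaussian; this is automatic since $\Cov_{q(\vx_{n-1}|\vx_n)}[\vx_{n-1}]$ is a covariance of a bona fide conditional distribution.
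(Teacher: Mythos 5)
Your proposal is correct and follows essentially the same route as the paper: decompose the KL via Lemma~8 of \citet{bao2022analytic} into per-timestep subproblems, apply the full-covariance moment-matching result (Lemma~\ref{thm:opt_cov_general}, part 3) pointwise in $\vx_n$, and substitute the explicit conditional moments from Lemma~\ref{thm:mm}. The added remark on positive semidefiniteness is a harmless extra the paper leaves implicit.
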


\begin{proof}
\citet{bao2022analytic} show that the KL divergence between a joint probability density function $q(\vx_{0:N})$ and a Markov chain $p(\vx_{0:N})$ can be written as
\begin{align*}
    \KL(q(\vx_{0:N}) \| p(\vx_{0:N})) = \sum\limits_{n=1}^N \E_q \KL(q(\vx_{n-1}|\vx_n) \| p(\vx_{n-1}|\vx_n)) + c,
\end{align*}
where $c$ is a constant only related to $q$ (see Lemma 8 in \citet{bao2022analytic}). As a result, the optimization of the KL divergence is decomposed into $n$ independent optimization sub-problems:
\begin{align*}
    \min\limits_{\vmu_n(\cdot), \mSigma_n(\cdot)} \E_q \KL(q(\vx_{n-1}|\vx_n) \| p(\vx_{n-1}|\vx_n)), \quad 1 \leq n \leq N.
\end{align*}

Since $q(\vx_{n-1}|\vx_n)$ is a probability density function with mean $\E_{q(\vx_{n-1}|\vx_n)}[\vx_{n-1}]$ and covariance matrix $\Cov_{q(\vx_{n-1}|\vx_n)}[\vx_{n-1}]$ and $p(\vx_{n-1}|\vx_n) = \gN(\vx_{n-1}|\vmu_n(\vx_n), \mSigma_n(\vx_n)))$ is a Gaussian distribution, according to Lemma~\ref{thm:opt_cov_general} and Lemma~\ref{thm:mm}, we know the optimal solution is
\begin{align*}
    & \vmu_n^*(\vx_n) = \E_{q(\vx_{n-1}|\vx_n)}[\vx_{n-1}] = \tilde{\vmu}_n(\vx_n, \frac{1}{\sqrt{\overline{\alpha}_n}} ( \vx_n - \sqrt{\overline{\beta}_n} \E_{q(\vx_0|\vx_n)}[\vepsilon_n] )), \\
    & \mSigma_n^*(\vx_n) = \Cov_{q(\vx_{n-1}|\vx_n)}[\vx_{n-1}] = \lambda_n^2 \mI + \gamma_n^2 \frac{\overline{\beta}_n}{\overline{\alpha}_n} \left( \E_{q(\vx_0|\vx_n)}[\vepsilon_n \vepsilon_n^\top] - \E_{q(\vx_0|\vx_n)}[\vepsilon_n] \E_{q(\vx_0|\vx_n)}[\vepsilon_n]^\top \right).
\end{align*}
\end{proof}

\begin{proposition}
\label{thm:opt_cov_g}
Suppose $\mSigma_n(\vx_n)$ is a full covariance. For any mean $\vmu_n(\vx_n)$ that is parameterized by a noise prediction network $\hat{\vepsilon}_n(\vx_n)$ as in Eq.~{(\ref{eq:mu_param})}, the optimal covariance $\tilde{\mSigma}_n^*(\vx_n)$ to problem~{(\ref{eq:elbo_cov})} is
\begin{align*}
\tilde{\mSigma}_n^*(\vx_n) = & \mSigma_n^*(\vx_n) + \gamma_n^2 \frac{\overline{\beta}_n}{\overline{\alpha}_n} \underbrace{(\E_{q(\vx_0|\vx_n)}[\vepsilon_n] - \hat{\vepsilon}_n(\vx_n)) (\E_{q(\vx_0|\vx_n)}[\vepsilon_n] - \hat{\vepsilon}_n(\vx_n))^\top}_{\text{\normalsize{error}}} \nonumber \\
= & \lambda_n^2 \mI + \gamma_n^2 \frac{\overline{\beta}_n}{\overline{\alpha}_n} \E_{q(\vx_0|\vx_n)}[(\vepsilon_n - \hat{\vepsilon}_n(\vx_n)) (\vepsilon_n - \hat{\vepsilon}_n(\vx_n))^\top].
\end{align*}
\end{proposition}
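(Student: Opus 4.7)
The plan is to mirror the proof of Theorem~\ref{thm:opt_diag} almost verbatim, replacing the diagonal Lemma~\ref{thm:opt_sigma_general} with its full-covariance counterpart Lemma~\ref{thm:opt_cov_general}, and then invoking Lemma~\ref{thm:mm} (which already records the \emph{full} conditional covariance, not just its diagonal) together with Proposition~\ref{thm:opt_cov_g_no_err}. The only algebraic work is verifying that the ``first moment'' correction term supplied by Lemma~\ref{thm:opt_cov_general} coincides with the outer-product error term in the statement.

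First, I would appeal to the KL decomposition used throughout the paper (Lemma 8 of Bao et al., cited in the proof of Theorem~\ref{thm:opt_diag_no_err}) to reduce the global problem~(\ref{eq:elbo_cov}) to per-timestep subproblems $\min_{\mSigma_n(\cdot)} \E_q \KL(q(\vx_{n-1}|\vx_n) \| p(\vx_{n-1}|\vx_n))$, now with $\mSigma_n$ ranging over arbitrary (symmetric positive-definite) matrices rather than diagonal ones. Fix any $\vx_n$. Since $p(\vx_{n-1}|\vx_n) = \gN(\vmu_n(\vx_n), \mSigma_n(\vx_n))$ is Gaussian and $q(\vx_{n-1}|\vx_n)$ is an arbitrary density with mean $\E_{q(\vx_{n-1}|\vx_n)}[\vx_{n-1}]$ and covariance $\Cov_{q(\vx_{n-1}|\vx_n)}[\vx_{n-1}]$, part~2 of Lemma~\ref{thm:opt_cov_general} gives
\begin{align*}
\tilde{\mSigma}_n^*(\vx_n) = \Cov_{q(\vx_{n-1}|\vx_n)}[\vx_{n-1}] + \bigl(\vmu_n(\vx_n) - \E_{q(\vx_{n-1}|\vx_n)}[\vx_{n-1}]\bigr)\bigl(\vmu_n(\vx_n) - \E_{q(\vx_{n-1}|\vx_n)}[\vx_{n-1}]\bigr)^\top.
\end{align*}

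Next, I substitute the two moment identities supplied by Lemma~\ref{thm:mm}: the first term is exactly $\mSigma_n^*(\vx_n)$ (this is Proposition~\ref{thm:opt_cov_g_no_err}), and the mean-gap evaluates to $\vmu_n(\vx_n) - \E_{q(\vx_{n-1}|\vx_n)}[\vx_{n-1}] = \gamma_n\sqrt{\overline{\beta}_n/\overline{\alpha}_n}\,\bigl(\E_{q(\vx_0|\vx_n)}[\vepsilon_n] - \hat{\vepsilon}_n(\vx_n)\bigr)$, because the parameterization~(\ref{eq:mu_param}) is affine in $\hat{\vepsilon}_n(\vx_n)$ with slope exactly $-\gamma_n\sqrt{\overline{\beta}_n/\overline{\alpha}_n}$ (this is the same computation already used inside the proof of Theorem~\ref{thm:opt_diag}, just carried as a vector difference rather than entrywise). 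Squaring this mean-gap into an outer product and pulling the scalar $\gamma_n^2 \overline{\beta}_n/\overline{\alpha}_n$ out gives the first displayed equation of the proposition.

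Finally, for the second equality I would expand
\begin{align*}
\E_{q(\vx_0|\vx_n)}\bigl[(\vepsilon_n - \hat{\vepsilon}_n(\vx_n))(\vepsilon_n - \hat{\vepsilon}_n(\vx_n))^\top\bigr]
= \E_{q(\vx_0|\vx_n)}[\vepsilon_n \vepsilon_n^\top] - \E_{q(\vx_0|\vx_n)}[\vepsilon_n]\,\hat{\vepsilon}_n(\vx_n)^\top - \hat{\vepsilon}_n(\vx_n)\,\E_{q(\vx_0|\vx_n)}[\vepsilon_n]^\top + \hat{\vepsilon}_n(\vx_n)\hat{\vepsilon}_n(\vx_n)^\top,
\end{align*}
and observe that adding and subtracting $\E_{q(\vx_0|\vx_n)}[\vepsilon_n]\E_{q(\vx_0|\vx_n)}[\vepsilon_n]^\top$ splits this into exactly (the centered second moment appearing in $\mSigma_n^*(\vx_n)$) plus (the error outer product). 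Rearranging recovers the second form, with the leading $\lambda_n^2 \mI$ inherited from $\mSigma_n^*(\vx_n)$.

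The only step that requires care is the bookkeeping in the last paragraph: one must keep track of which outer products are symmetric in the two factors and verify that no off-diagonal cross term is dropped (the diagonal proof of Theorem~\ref{thm:opt_diag} could treat each coordinate independently, whereas here the cross terms must genuinely cancel). I do not expect any analytical obstacle beyond this routine matrix algebra, since both ingredient lemmas (Lemma~\ref{thm:opt_cov_general} and Lemma~\ref{thm:mm}) are already established in full-matrix form.
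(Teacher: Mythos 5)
Your proposal is correct and matches the paper's own proof essentially step for step: reduce to per-timestep KL subproblems, apply the full-covariance conditioned moment matching result (Lemma~\ref{thm:opt_cov_general}, part~2), identify the conditional covariance with $\mSigma_n^*(\vx_n)$ via Lemma~\ref{thm:mm} and Proposition~\ref{thm:opt_cov_g_no_err}, compute the mean gap $\gamma_n\sqrt{\overline{\beta}_n/\overline{\alpha}_n}\,(\E_{q(\vx_0|\vx_n)}[\vepsilon_n]-\hat{\vepsilon}_n(\vx_n))$ from the affine parameterization, and expand the outer product to obtain the second form. The cross-term bookkeeping you flag is exactly the expansion the paper carries out, and it goes through without issue.
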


\begin{proof}
According to Lemma~\ref{thm:opt_cov_general}, we know the optimal $\tilde{\mSigma}_n^*(\vx_n)$ is
\begin{align*}
    \tilde{\mSigma}_n^*(\vx_n) = \Cov_{q(\vx_{n-1}|\vx_n)}[\vx_{n-1}] + (\vmu_n(\vx_n) - \E_{q(\vx_{n-1}|\vx_n)}[\vx_{n-1}]) (\vmu_n(\vx_n) - \E_{q(\vx_{n-1}|\vx_n)}[\vx_{n-1}])^\top.
\end{align*}

According to Proposition~\ref{thm:opt_cov_g_no_err}, $\Cov_{q(\vx_{n-1}|\vx_n)}[\vx_{n-1}] = \mSigma_n^*(\vx_n)$. 

According to Lemma~\ref{thm:mm} and the parameterization of $\vmu_n(\vx_n)$ in Eq.~{(\ref{eq:mu_param})}, we have
\begin{align}
\label{eq:err}
    \vmu_n(\vx_n) - \E_{q(\vx_{n-1}|\vx_n)} [\vx_{n-1}] = \gamma_n
    \sqrt{\frac{\overline{\beta}_n}{\overline{\alpha}_n}} ( \E_{q(\vx_0|\vx_n)}[\vepsilon_n] -  \hat{\vepsilon}_n(\vx_n)).
\end{align}

Finally,
\begin{align*}
    \tilde{\mSigma}_n^*(\vx_n) = & \Cov_{q(\vx_{n-1}|\vx_n)}[\vx_{n-1}] + (\vmu_n(\vx_n) - \E_{q(\vx_{n-1}|\vx_n)}[\vx_{n-1}]) (\vmu_n(\vx_n) - \E_{q(\vx_{n-1}|\vx_n)}[\vx_{n-1}])^\top \\
    = & \mSigma_n^*(\vx_n) + \gamma_n^2 \frac{\overline{\beta}_n}{\overline{\alpha}_n} \underbrace{(\E_{q(\vx_0|\vx_n)}[\vepsilon_n] - \hat{\vepsilon}_n(\vx_n)) (\E_{q(\vx_0|\vx_n)}[\vepsilon_n] - \hat{\vepsilon}_n(\vx_n))^\top}_{\text{\normalsize{error}}} \\
    = & \lambda_n^2 \mI + \gamma_n^2 \frac{\overline{\beta}_n}{\overline{\alpha}_n} \left\{ \E_{q(\vx_0|\vx_n)}[\vepsilon_n \vepsilon_n^\top] - \E_{q(\vx_0|\vx_n)}[\vepsilon_n] \hat{\vepsilon}_n(\vx_n)^\top - \hat{\vepsilon}_n(\vx_n) \E_{q(\vx_0|\vx_n)}[\vepsilon_n]^\top + \hat{\vepsilon}_n(\vx_n) \hat{\vepsilon}_n(\vx_n))^\top \right\} \\
    = & \lambda_n^2 \mI + \gamma_n^2 \frac{\overline{\beta}_n}{\overline{\alpha}_n} \E_{q(\vx_0|\vx_n)}[\vepsilon_n \vepsilon_n^\top - \vepsilon_n \hat{\vepsilon}_n(\vx_n)^\top - \hat{\vepsilon}_n(\vx_n) \vepsilon_n^\top + \hat{\vepsilon}_n(\vx_n) \hat{\vepsilon}_n(\vx_n)^\top ] \\
    = & \lambda_n^2 \mI + \gamma_n^2 \frac{\overline{\beta}_n}{\overline{\alpha}_n} \E_{q(\vx_0|\vx_n)}[ (\vepsilon_n - \hat{\vepsilon}_n(\vx_n)) (\vepsilon_n - \hat{\vepsilon}_n(\vx_n))^\top ].
\end{align*}
\end{proof}

Note that $\E_{q(\vx_0|\vx_n)}[\vepsilon_n \vepsilon_n^\top]$ in Proposition~\ref{thm:opt_cov_g_no_err} and $\E_{q(\vx_0|\vx_n)}[ (\vepsilon_n - \hat{\vepsilon}_n(\vx_n)) (\vepsilon_n - \hat{\vepsilon}_n(\vx_n))^\top ]$ in Proposition~\ref{thm:opt_cov_g} are matrices of shape $d\times d$. We can also learn these conditional expectations by training neural networks that output matrices of shape $d\times d$ on MSE losses, and then estimate $\mSigma_n^*(\vx_n)$ and $\tilde{\mSigma}_n^*(\vx_n)$ using these neural networks. 

However, obtaining samples from Gaussian transitions with full covariances requires a decomposition of the covariance matrix, e.g., Cholesky decomposition, which has a $\gO(d^3)$ time complexity. In practice, $d$ can be large (e.g., high resolution images), and such a decomposition can be time consuming. Therefore, in practical settings, it is more suitable to consider diagonal covariances as in the main text, which do not involve a matrix decomposition.

\section{Details on Extension to DPMs with Continuous Timesteps}
\label{sec:detail_ext_c}

To obtain the estimate of $\vsigma_{s|t}^*(\vx_t)^2$ appeared in Proposition~\ref{thm:opt_cov_c_no_err}, we use a network $\vh_t(\vx_t) \in \R^d$ to learn $\E_{q(\vx_0|\vx_t)}[\vepsilon_t^2]$ by minimizing the following MSE loss
\begin{align*}
    \min\limits_{\vh_t} \E_t \E_{q(\vx_0, \vx_t)} \|\vepsilon_t^2 - \vh_t(\vx_t)\|_2^2,
\end{align*}
where $t$ is uniformly sampled from $[0, T]$. Then we obtain the estimate of $\vsigma_{s|t}^*(\vx_t)^2$:
\begin{align*}
\hat{\vsigma}_{s|t}(\vx_t)^2 = \tilde{\beta}_{s|t}\vone + \frac{\beta_{t|s}^2}{\beta_{t|0}\alpha_{t|s}} \left( \vh_t(\vx_t) - \hat{\vepsilon}_t(\vx_t)^2 \right).
\end{align*}

To obtain the estimate of $\tilde{\vsigma}_{s|t}^*(\vx_t)^2$ appeared in Proposition~\ref{thm:opt_cov_c}, we use a network $\vg_t(\vx_t) \in \R^d$ to learn $\E_{q(\vx_0|\vx_t)}[(\vepsilon_t - \hat{\vepsilon}_t(\vx_t))^2]$ by minimizing the following MSE loss
\begin{align*}
    \min\limits_{\vg_t} \E_t \E_{q(\vx_0, \vx_t)} \|(\vepsilon_t - \hat{\vepsilon}_t(\vx_t))^2 - \vg_t(\vx_t)\|_2^2,
\end{align*}
where $t$ is uniformly sampled from $[0, T]$. Then we obtain the estimate of $\tilde{\vsigma}_{s|t}^*(\vx_t)^2$:
\begin{align*}
\hat{\tilde{\vsigma}}_{s|t}(\vx_t)^2 = \tilde{\beta}_{s|t} \vone + \frac{\beta_{t|s}^2}{\beta_{t|0}\alpha_{t|s}} \vg_t(\vx_t).
\end{align*}

\section{Inference on Trajectories}
\label{sec:detail_fast}
To speed up the inference, we can reverse a shorter forward process $q(\vx_{\tau_1}, \cdots, \vx_{\tau_k}|\vx_0)$ constrained on a trajectory $1 \leq \tau_1 < \cdots < \tau_K = N$ of $K$ timesteps~\cite{song2020denoising,bao2022analytic}. Following the notation of \citet{bao2022analytic}, the shorter forward process is defined as
\begin{align*}
    & q(\vx_{\tau_1}, \cdots, \vx_{\tau_K} | \vx_0)=q(\vx_{\tau_K}|\vx_0) \prod_{k=2}^K q(\vx_{\tau_{k-1}}|\vx_{\tau_k}, \vx_0), \\
    & q(\vx_{\tau_{k-1}}|\vx_{\tau_k}, \vx_0) = \gN(\vx_{\tau_{k-1}}| \tilde{\vmu}_{\tau_{k-1}|\tau_k} (\vx_{\tau_k}, \vx_0), \lambda_{\tau_{k-1}|\tau_k}^2 \mI), \\
    & \tilde{\vmu}_{\tau_{k-1}|\tau_k} (\vx_{\tau_k}, \vx_0) = \sqrt{\overline{\alpha}_{\tau_{k-1}}} \vx_0 + \sqrt{\overline{\beta}_{\tau_{k-1}} - \lambda_{\tau_{k-1}|\tau_k}^2} \cdot \frac{\vx_{\tau_k} - \sqrt{\overline{\alpha}_{\tau_k}}\vx_0}{\sqrt{\overline{\beta}_{\tau_k}}}.
\end{align*}

Similarly to Eq.~{(\ref{eq:markov})}, the shorter Markov model to reverse the forward process is defined as
\begin{align*}
    & p(\vx_0, \vx_{\tau_1}, \cdots, \vx_{\tau_K}) = p(\vx_{\tau_K}) \prod_{k=1}^K p(\vx_{\tau_{k-1}}|\vx_{\tau_k}), \\
    & p(\vx_{\tau_{k-1}}|\vx_{\tau_k}) = \gN(\vx_{\tau_{k-1}} | \vmu_{\tau_{k-1} | \tau_k} (\vx_{\tau_k}), \mSigma_{\tau_{k-1} | \tau_k} (\vx_{\tau_k}) \mI).
\end{align*}

According to Theorem~\ref{thm:opt_diag_no_err}, when $\mSigma_{\tau_{k-1} | \tau_k} (\vx_{\tau_k}) = \diag(\vsigma_{\tau_{k-1} | \tau_k} (\vx_{\tau_k})^2)$, the optimal covariance $\vsigma_{\tau_{k-1} | \tau_k}^* (\vx_{\tau_k})^2$ is
\begin{align*}
    \vsigma^*_{\tau_{k-1} | \tau_k} (\vx_{\tau_k})^2 = \lambda_{\tau_{k-1} | \tau_k}^2 \vone + \gamma_{\tau_{k-1} | \tau_k}^2 \frac{\overline{\beta}_{\tau_k}}{\overline{\alpha}_{\tau_k}} \left(\E_{q(\vx_0|\vx_{\tau_k})}[\vepsilon_{\tau_k}^2] - \E_{q(\vx_0|\vx_{\tau_k})}[\vepsilon_{\tau_k}]^2\right),
\end{align*}
where $\vepsilon_{\tau_k} = \frac{\vx_{\tau_k} - \sqrt{\overline{\alpha}_{\tau_k}} \vx_0}{\sqrt{\overline{\beta}_{\tau_k}}}$ is the noise used to generate $\vx_{\tau_k}$ from $\vx_0$ and $\gamma_{\tau_{k-1} | \tau_k} = \sqrt{\overline{\alpha}_{\tau_{k-1}}} - \sqrt{\overline{\beta}_{\tau_{k-1}} - \lambda_{\tau_{k-1}|\tau_k}^2} \sqrt{\frac{\overline{\alpha}_{\tau_k}}{\overline{\beta}_{\tau_k}}}$. Thus, we can reuse the SN prediction network $\vh_n(\vx_n)$ trained on the loss in Eq.~{(\ref{eq:mse_eps2})}, and estimate $\vsigma_{\tau_{k-1} | \tau_k}^* (\vx_{\tau_k})^{2}$ as follows:
\begin{align*}
    \hat{\vsigma}_{\tau_{k-1}|\tau_k} (\vx_{\tau_k})^2 = \lambda_{\tau_{k-1}|\tau_k}^2 \vone + \gamma_{\tau_{k-1} | \tau_k}^2 \frac{\overline{\beta}_{\tau_k}}{\overline{\alpha}_{\tau_k}} \left(\vh_{\tau_k}(\vx_{\tau_k}) - \hat{\vepsilon}_{\tau_k}(\vx_{\tau_k})^2 \right).
\end{align*}

When considering an imperfect mean (i.e., considering the error of $\hat{\vepsilon}_n(\vx_n)$), according to Theorem~\ref{thm:opt_diag}, the corrected optimal covariance $\tilde{\vsigma}^*_{\tau_{k-1} | \tau_k} (\vx_{\tau_k})^2$ is
\begin{align*}
    \tilde{\vsigma}^*_{\tau_{k-1} | \tau_k} (\vx_{\tau_k})^2 = \lambda_{\tau_{k-1} | \tau_k}^2 \vone + \gamma_{\tau_{k-1} | \tau_k}^2 \frac{\overline{\beta}_{\tau_k}}{\overline{\alpha}_{\tau_k}} \E_{q(\vx_0|\vx_{\tau_k})}[(\vepsilon_{\tau_k} - \hat{\vepsilon}_{\tau_k}(\vx_{\tau_k}))^2].
\end{align*}
Thus, we can reuse the NPR prediction network $\vg_n(\vx_n)$ trained on the loss in Eq.~{(\ref{eq:mse_npr})}, and estimate $\tilde{\vsigma}_{\tau_{k-1} | \tau_k}^* (\vx_{\tau_k})^{2}$ as follows:
\begin{align*}
    \hat{\tilde{\vsigma}}_{\tau_{k-1}|\tau_k} (\vx_{\tau_k})^2 = \lambda_{\tau_{k-1}|\tau_k}^2 \vone + \gamma_{\tau_{k-1} | \tau_k}^2 \frac{\overline{\beta}_{\tau_k}}{\overline{\alpha}_{\tau_k}} \vg_{\tau_k}(\vx_{\tau_k}).
\end{align*}

We emphasize that the inference on a shorter trajectory doesn't need to train a new NPR (or SN) prediction network on the corresponding shorter process, since the shorter process and the original one share the same marginal distribution. We reuse the one trained on the full timesteps.

\section{Additional Discussion}
\subsection{Error Amplification}
\label{sec:err_amp}

Note that in SN-DPM, the error of $\hat{\vepsilon}_n(\vx_n)^2$ is likely to be amplified compared to that of $\hat{\vepsilon}_n(\vx_n)$.
SN-DPM estimates $\E_{q(\vx_0|\vx_n)}[\vepsilon_n]^2$ using $\hat{\vepsilon}_n(\vx_n)^2$, which is the square of the model. However, the training objective for $\hat{\vepsilon}_n(\vx_n)$ in Eq.~{(\ref{eq:mse_eps})} is the MSE loss between $\hat{\vepsilon}_n(\vx_n)$ and $\E_{q(\vx_0|\vx_n)}[\vepsilon_n]$ instead of their squares. Therefore, the error between $\hat{\vepsilon}_n(\vx_n)^2$ and $\E_{q(\vx_0|\vx_n)}[\vepsilon_n]^2$ is likely to be amplified.
Specifically, the error of $\hat{\vepsilon}_n(\vx_n)^2$ at $i$-th component is $|\hat{\vepsilon}_n(\vx_n)_i^2 - \E_{q(\vx_0|\vx_n)}[\vepsilon_n]_i^2| = A_i | \hat{\vepsilon}_n(\vx_n)_i - \E_{q(\vx_0|\vx_n)}[\vepsilon_n]_i |$, where $A_i = |\hat{\vepsilon}_n(\vx_n)_i + \E_{q(\vx_0|\vx_n)}[\vepsilon_n]_i|$. Thus, the error of $\hat{\vepsilon}_n(\vx_n)^2$ at $i$-th component is amplified by a factor of $A_i$. In practice, the value of $A_i$ is not guaranteed to be bounded, and the error is likely to be amplified by a large factor. 

In contrast, NPR-DPM doesn't have the error amplification problem, since it uses the model $\vg_n(\vx_n)$ itself to estimate the target $\E_{q(\vx_0|\vx_n)}[(\vepsilon_n - \hat{\vepsilon}_n(\vx_n))^2]$, and directly minimizes the MSE loss between them.

\subsection{NPR-DPM is Not Always Better Than SN-DPM}
\label{sec:not_always_better}
We note that SN-DPM is better w.r.t. FID and NPR-DPM is better w.r.t. likelihood.
The performance of NPR-DPM on likelihood is guaranteed, since it aims to optimize the log-likelihood with the actual imperfect mean. Note the sample quality and likelihood are not necessarily consistent (see Section 3.2 in \citet{theis2015note}), and NPR-DPM does not directly optimize the sample quality. Thus, the performance of NPR-DPM on FID is not guaranteed.

In practice, the NPR-DPM is useful in applications such as lossless compression (see Section 7.3 in \citet{kingma2021variational}), where the likelihood performance and the computation efficiency play a central role.

\section{Experimental Details}
\label{sec:exp_detail}

\subsection{Details of Pretrained Noise Prediction Networks}
\label{sec:pretrained_model}

In Table~\ref{tab:detail_model}, we list details of pretrained noise prediction networks used in our experiments. The ImageNet 64x64 pretrained model includes a noise prediction network and additionally a covariance network, and we only use the former one.

\begin{table}[H]
\centering
\caption{Details of noise prediction networks used in our experiments. LS means the linear schedule of $\beta_n$~\cite{ho2020denoising} in the forward process of discrete timesteps (see Eq.~{(\ref{eq:ddim})}). CS means the cosine schedule of $\beta_n$~\cite{nichol2021improved} in the forward process of discrete timesteps (see Eq.~{(\ref{eq:ddim})}). The VP SDE is defined as $\mathrm{d}\vx = -\frac{1}{2}\beta(t)\vx \mathrm{d} t + \sqrt{\beta(t)} \mathrm{d}\vw$, where $\beta(t)$ is a linear function of $t$ with $\beta(0)=0.1$ and $\beta(1)=20$~\cite{song2020score}.}
\label{tab:detail_model}
\vskip 0.15in
\begin{small}
\begin{sc}
\begin{tabular}{lcccc}
\toprule
    & Timesteps Type & \# timesteps $N$ & forward process & provided by \\
\midrule
    CIFAR10 (LS) & Discrete & 1000 & LS & \citet{bao2022analytic} \\
    CIFAR10 (CS) & Discrete & 1000 & CS & \citet{bao2022analytic} \\
    CIFAR10 (VP SDE) & Continuous & -- & VP SDE & \citet{song2020score} \\
    CelebA 64x64 & Discrete & 1000 & LS & \citet{song2020denoising} \\
    ImageNet 64x64 & Discrete & 4000 & CS & \citet{nichol2021improved} \\
    LSUN Bedroom & Discrete & 1000 & LS & \citet{ho2020denoising} \\
\bottomrule
\end{tabular}
\end{sc}
\end{small}
\end{table}

\subsection{Structure Details of Prediction Networks}

In Table~\ref{tab:nn2}, we list structure details of $\mathrm{NN}_1$ and $\mathrm{NN}_2$ of prediction networks used in our experiments.

\begin{table}[H]
\centering
\caption{$\mathrm{NN}_1$ of pretrained noise prediction networks and $\mathrm{NN}_2$ of SN \& NPR prediction networks used in our experiments. Conv denotes the convolution layer. Res denotes the residual block.}
\label{tab:nn2}
\vskip 0.15in
\begin{small}
\begin{sc}
\begin{tabular}{lccc}
\toprule
    & $\mathrm{NN}_1$ & $\mathrm{NN}_2$ (SN) & $\mathrm{NN}_2$ (NPR) \\
\midrule
    CIFAR10 (LS) & Conv & Conv & Conv \\
    CIFAR10 (CS) & Conv & Conv & Conv  \\
    CIFAR10 (VP SDE) & Conv & Conv & Conv  \\
    CelebA 64x64 & Conv & Conv & Conv  \\
    ImageNet 64x64 & Conv & Res+Conv & Conv  \\
    LSUN Bedroom & Conv & Res+Conv & Conv  \\
\bottomrule
\end{tabular}
\end{sc}
\end{small}
\end{table}

\subsection{Details of Memory and Time Cost}
\label{sec:mem_time}

In Table~\ref{tab:mem_time}, we list the memory and time cost of models (with the corresponding methods) used in our experiments. The extra memory cost of the NPR (or SN) prediction network is negligible. The extra time cost of the NPR (or SN) prediction network is negligible on CIFAR10, CelebA 64x64, at most 4.5\% on ImageNet 64x64, and at most 10\% on LSUN Bedroom.

\begin{table}[H]
\centering
\caption{Model size (MB) and the averaged time (ms) to run a model function evaluation (i.e., the time to compute noise prediction network for baselines, and the time to compute noise \& NPR (or SN) prediction networks that share parameters for our method). We also show the extra time cost of the NPR (or SN) prediction network relative to that of the noise prediction network in parentheses. All are evaluated with a batch size of 10 on one GeForce RTX 2080 Ti.}
\label{tab:mem_time}
\vskip 0.15in
\begin{small}
\begin{sc}
\begin{tabular}{lccc}
\toprule
    & \makecell{Noise prediction \\ network \\ (all baselines)} & \makecell{Noise \& SN prediction \\ networks \\ (\sndpm)} & \makecell{Noise \& NPR prediction \\ networks \\ (\nprdpm)} \\
\midrule
    CIFAR10 (LS) & 200.44 MB / 25.65 ms & 200.45 MB / 25.77 ms (+0.5\%) & 200.45 MB / 25.71 ms (+0.2\%) \\
    CIFAR10 (CS) & 200.44 MB / 25.26 ms & 200.45 MB / 25.62 ms (+1.4\%) & 200.45 MB / 25.64 ms (+1.5\%) \\
    CIFAR10 (VP SDE) & 235.77 MB / 26.98 ms & 235.78 MB / 27.17 ms (+0.7\%) & 235.78 MB / 27.17 ms (+0.7\%) \\
    CelebA 64x64 & 300.22 MB / 48.64 ms & 300.23 MB / 49.20 ms (+1.2\%) & 300.23 MB / 49.18 ms (+1.1\%) \\
    ImageNet 64x64 & 461.82 MB / 72.32 ms & 463.47 MB / 75.56 ms (+4.5\%) & 461.84 MB / 72.80 ms (+0.7\%) \\
    LSUN Bedroom & 433.63 MB / 441.41 ms & 435.02 MB / 485.54 ms (+10.0\%) & 433.64 MB / 448.74 ms (+1.7\%) \\
\bottomrule
\end{tabular}
\end{sc}
\end{small}
\end{table}

\subsection{Training Details}

We use a similar training setting to that of the noise prediction network in \citet{bao2022analytic}. On all datasets, we use the AdamW optimizer~\cite{loshchilov2017decoupled} with a learning rate of 0.0001; we train 500K iterations; we use an exponential moving average (EMA) with a rate of 0.9999. We use a batch size of 64 on LSUN Bedroom, and 128 on other datasets. We save a checkpoint every 10K iterations and select the one with the best FID on 1000 generated samples. By default, these samples are generated with full timesteps, except for LSUN Bedroom. On LSUN Bedroom, these samples are generated with 100 timesteps for an acceptable time cost (the time cost on LSUN Bedroom is much larger than other datasets as shown in Table~\ref{tab:mem_time}).

Training a NPR (or SN) network on CIFAR10 takes about 32 hours on one GeForce RTX 2080 Ti. Training on CelebA 64x64 takes about 72 hours on two GeForce RTX 2080 Ti. Training on ImageNet 64x64 takes about 83 hours on two GeForce RTX 2080 Ti. Training on LSUN Bedroom takes about 171 hours on four GeForce RTX 2080 Ti.

\subsection{Log-Likelihood and Sampling}

The image data is a vector of integers in $\{0, 1, \cdots, 255\}$ and we linearly scale it to $[-1, 1]$ following \citet{ho2020denoising,bao2022analytic}.

\textbf{Log-likelihood.} Following \citet{ho2020denoising,bao2022analytic}, we discretize the last Markov transition $p(\vx_0|\vx_1)$ to get discrete NLL and its upper bound. The likelihood results are evaluated on the whole test dataset by default.

\textbf{Sampling.} Following \citet{ho2020denoising,bao2022analytic}, we only display the mean of $p(\vx_0|\vx_1)$ noiselessly at the end of sampling. Following \citet{bao2022analytic}, we clip the covariance $\vsigma_2(\vx_2)^2$ of $p(\vx_1|\vx_2)$ by infinity norm, such that $\|\vsigma_2(\vx_2)\|_\infty \E|\epsilon| \leq \frac{2}{255} y$, where $\|\vsigma_2(\vx_2)\|_\infty$ denotes the infinity norm, $\epsilon$ is the standard Gaussian noise and $y$ is the maximum tolerated perturbation of a channel. Following \citet{bao2022analytic}, we use $y=2$ on CIFAR10 (LS) and CelebA 64x64 under the DDPM forward process, and use $y=1$ for other cases. Following \citet{bao2022analytic}, we calculate the FID score on 50K generated samples, using the official implementation of FID to pytorch (\url{https://github.com/mseitzer/pytorch-fid}). Following \citet{nichol2021improved,bao2022analytic}, the reference distribution statistics of FID are computed on the full training
set on CIFAR10 and ImageNet 64x64, and 50K training samples on CelebA 64x64 and LSUN Bedroom.

\subsection{Number of Monte Carlo Samples to Calculate the Optimal Trajectory}

Calculating the optimal trajectory needs to estimate every decomposed term appeared in $\lelbo$, which involves a Monte Carlo estimate calculated on a set of training samples~\cite{watson2021learning,bao2022analytic}. Following \citet{bao2022analytic}, we use 50K samples on CIFAR10, 10K samples on CelebA 64x64 and ImageNet 64x64.

\subsection{Experimental Details of Table~{\ref{tab:fid}, \ref{tab:timesteps} \& \ref{tab:bpd}}}
\label{sec:detail_timesteps}

In Table~{\ref{tab:fid} \& \ref{tab:bpd}}, the results of baselines and our methods are based on the same noise prediction networks (i.e., those listed in Table~\ref{tab:detail_model}). The baseline results in DPMs with discrete timesteps are provides by \citet{bao2022analytic}. As for DPMs with continuous timesteps, we get results of the Euler-Maruyama solver and the probability flow by running the pytorch official code (\url{https://github.com/yang-song/score_sde_pytorch}) of \citet{song2020score}, and get results of the ancestral sampling and Analytic-DPM using our implementation.

In Table~\ref{tab:timesteps}, the results of DDPM, DDIM, Analytic-DPM and our methods are based on the same noise prediction networks (i.e., those listed in Table~\ref{tab:detail_model}). The time cost in a single timestep is the same for all baselines (DDPM, DDIM, Improved DDPM and Analytic-DPM), based on the statistics reported in Table 4 in \citet{bao2022analytic}. The ratio of the time cost in a single timestep in \nprdpm{} and \sndpm{} to that in baselines is based on Table~\ref{tab:mem_time}. The result of baselines are provided by \citet{bao2022analytic}. The results of \nprdpm{} and \sndpm{} are based on the DDPM forward process on LSUN Bedroom, and based on the DDIM forward process on other
datasets. These choices require smaller number of timesteps than their alternatives.

\section{Additional Experiments}

\subsection{Additional Likelihood Comparison}
\label{sec:ll_iddpm}

We compare our \nprddpm{} and Improved DDPM~\cite{nichol2021improved} based on the ImageNet 64x64 model mentioned in Appendix~\ref{sec:pretrained_model}. Improved DDPM parameterizes the diagonal covariance using an interpolation between $\beta_n$ and $\tilde{\beta}_n$ and learn the covariance by directly optimizing $\lelbo$. As shown in Table~\ref{tab:improved_ddpm}, our \nprddpm{} can get a reasonable likelihood result with a small number of timesteps, which is contrast to Improved DDPM. With the full timesteps, two methods show a similar likelihood performance. This implies that using an alternative objective such as the MSE loss can achieve a similar likelihood performance to directly optimizing $\lelbo$.

\begin{table}[H]
\centering
\caption{Upper bound ($-\lelbo$) on the negative log-likelihood (bits/dim) $\downarrow$, under the setting of the DDPM forward process on ImageNet 64x64 and the even trajectory.
}
\label{tab:improved_ddpm}
\vskip 0.15in
\begin{small}
\begin{sc}
\begin{tabular}{lrrrrrrrr}
\toprule
Model $\backslash$ \# timesteps $K$ & 25 & 50 & 100 & 200 & 400 & 1000 & 4000 \\
\midrule
Improved DDPM & 18.91 & 8.46 & 5.27 & 4.24 & 3.86 & 3.68 & \textbf{3.57} \\
\nprddpm & \textbf{4.66} & \textbf{4.22} & \textbf{3.96} & \textbf{3.80} & \textbf{3.71} & \textbf{3.64} & 3.60 \\
\arrayrulecolor{black}\bottomrule
\end{tabular}
\end{sc}
\end{small}
\end{table}

\subsection{Variance of Likelihood Results}
\label{sec:ll_var}

We note that under the full timesteps, the likelihood results of Analytic-DDPM and \nprddpm{} in Table~\ref{tab:bpd} are close. To reduce the effect of randomness, we report the standard deviation of the likelihood results under the full timesteps in Table~\ref{tab:ll_var}.

\begin{table}[H]
\caption{Upper bound ($-\lelbo$) on the negative log-likelihood (bits/dim) $\downarrow$. We repeat 5 times with different seeds and report the mean and the standard deviation. Each time we randomly select 10K test samples.}
\label{tab:ll_var}
\vskip 0.15in
\centering
\begin{small}
\begin{sc}
\begin{tabular}{lcccc}
\toprule
     & CIFAR10 (LS) & CIFAR10 (CS) & CelebA 64X64 & ImageNet 64X64 \\
\midrule
    Analytic-DDPM & 3.58593±0.00028 & 3.42229±0.00019 & 2.65657±0.00126 & 3.61749±0.00745 \\
    \nprddpm & 3.57204±0.00039 & 3.41013±0.00031 & 2.65257±0.00113 & 3.59660±0.00793 \\
\bottomrule
\end{tabular}
\end{sc}
\end{small}
\end{table}

\subsection{Comparison After Normalizing for the Extra Cost}
\label{sec:norm_cmp}

We present results after normalizing for the extra inference cost. As shown in Table~\ref{tab:fid_cmp_norm}, our method still outperforms the strong baseline Analytic-DPM. We also compare with the ``Gotta Go Fast'' SDE solver~\cite{jolicoeur2021gotta}, and our method performs better when the NFE is small (e.g., $\leq 49$), and thereby is more efficient. When the NFE is large (e.g., $\geq 147$), different methods have similar performance.

We also present results after normalizing for the extra training cost. On CIFAR10 (LS), we train a noise prediction network with 300K iterations and train a NPR (or SN) prediction network with 200K iterations. This ensures that the total training cost is strictly less than that of the original CIFAR10 (LS) model provided by \citet{bao2022analytic}, which is trained with 500K iterations. With 24 and 49 timesteps respectively, NPR-DDPM gets a FID of 10.56 and 6.24; SN-DDPM gets a FID of 7.51 and 4.64. The results are not affected much by the normalization for the extra training cost, and our method still outperforms the strong baseline Analytic-DPM.

\begin{table}[H]
\centering
\caption{FID comparison after normalizing for the extra inference cost. The normalized number of timesteps and model function evaluations (NFE)  are displayed in parentheses. The normalization ensures the time cost of our method is \emph{strictly less} than that of the baselines.}
\vskip 0.05in
\label{tab:fid_cmp_norm}
\begin{small}
\begin{sc}
\setlength{\tabcolsep}{5.1pt}{
\begin{tabular}{lrrrrrrrrrr}
\toprule
 & \multicolumn{2}{c}{CIFAR10 (LS)} & \multicolumn{2}{c}{CIFAR10 (CS)} & \multicolumn{2}{c}{CelebA 64x64} & \multicolumn{2}{c}{ImageNet 64x64} & \multicolumn{2}{c}{CIFAR10 (VP SDE)} \\
\cmidrule(lr){2-3} \cmidrule(lr){4-5} \cmidrule(lr){6-7} \cmidrule(lr){8-9} \cmidrule(lr){10-11}
\# timesteps & 
25 (24) & 50 (49) &
25 (24) & 50 (49) &
25 (24) & 50 (49) &
25 (23) & 50 (47) &
25 (24) & 50 (49) \\
\midrule
Analytic-DDPM & 
11.60 & 7.25 & 
8.50 & 5.50 &
16.01 & 11.23 &
32.56 & 22.45 &
11.57 & 6.54 \\
\nprddpm & 
10.82 & 6.28 & 
7.98 & 5.33 & 
16.00 & 11.06 &
29.72 & 21.52  &
10.88 & 5.80 \\
\snddpm & 
\textbf{7.42} & \textbf{4.60} &
\textbf{6.15} & \textbf{4.20} &
\textbf{12.11} & \textbf{8.08} & 
\textbf{29.05} & \textbf{20.80} & 
\textbf{7.94} & \textbf{4.47} \\
\bottomrule
\end{tabular}}
\end{sc}
\end{small}
\vskip 0.05in
\begin{small}
\begin{sc}
\setlength{\tabcolsep}{5.9pt}{
\begin{tabular}{lrrrrrrr}
\toprule
\makecell{NFE on CIFAR10 (VP SDE)} & 29 (28) & 
39 (38) & 49 (48) & 147 (145) &
179 (177) & 274 (272) &
329 (326) \\
\midrule
Gotta Go Fast (official code) & 247.79 & 116.91 & 72.29 & \textbf{2.95} & \textbf{2.59} & 2.74 & 2.70 \\
\nprddpm & 9.22 & 6.97 & 5.88 & 3.44 & 3.19 & 2.91 & 2.94 \\
\snddpm & \textbf{6.62} & \textbf{5.21} & \textbf{4.55} & 2.96 & 2.85 & \textbf{2.67} & \textbf{2.64} \\
\bottomrule
\end{tabular}}
\end{sc}
\end{small}
\end{table}

\subsection{Samples}
\label{sec:samples}
Recall that Table~\ref{tab:timesteps} reports the least number of timesteps required to achieve a FID around 6 on CIFAR10, CelebA 64x64 and LSUN Bedroom. In Figure~{\ref{fig:cifar10}-\ref{fig:lsun}}, we show generated samples of \nprdpm{} and \sndpm{} under these number of timesteps on these datasets. Here we use $K$ to denote the number of timesteps. In Figure~\ref{fig:imagenet}, we also show generated samples on ImageNet 64x64 under $K=25$ timesteps and the DDPM forward process.

In Figure~{\ref{fig:K_cifar10_ls}-\ref{fig:K_imagenet64}}, we show more generated samples from both \nprdpm{} and \sndpm{} under both the DDPM and DDIM forward processes, as well as the VP SDE.

\begin{figure}[H]
\begin{center}
\subfloat[Samples from dataset]{\includegraphics[width=0.31\linewidth]{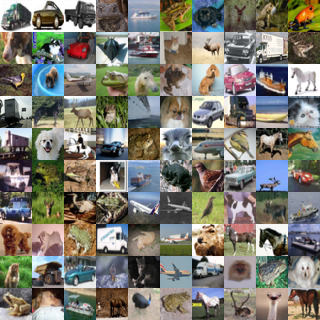}}\ 
\subfloat[\nprdpm{} ($K=23$)]{\includegraphics[width=0.31\linewidth]{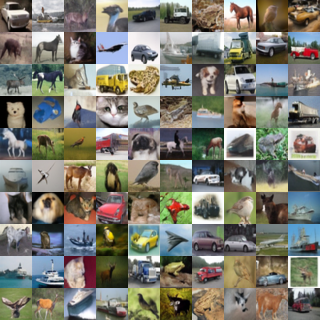}}\ 
\subfloat[\sndpm{} ($K=17$)]{\includegraphics[width=0.31\linewidth]{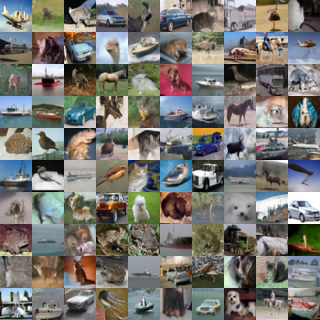}}
\vspace{-.1cm}
\caption{Generated samples on CIFAR10.}
\label{fig:cifar10}
\end{center}
\end{figure}

\newpage
\begin{figure}[H]
\begin{center}
\subfloat[Samples from dataset]{\includegraphics[width=0.31\linewidth]{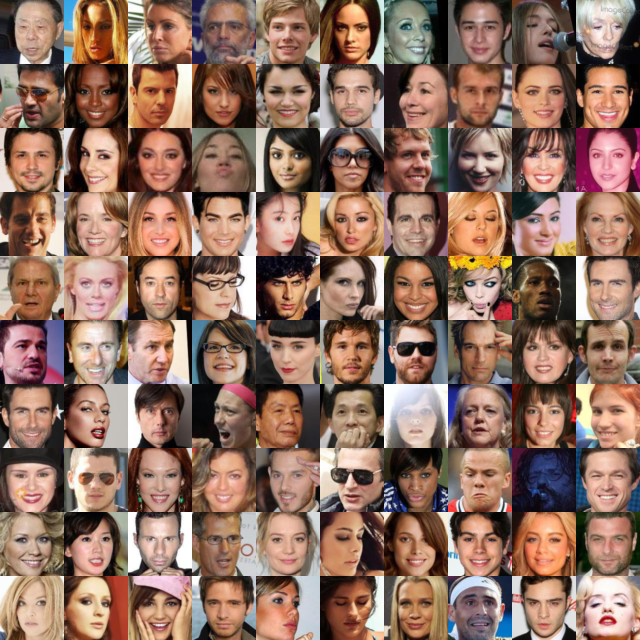}}\ 
\subfloat[\nprdpm{} ($K=50$)]{\includegraphics[width=0.31\linewidth]{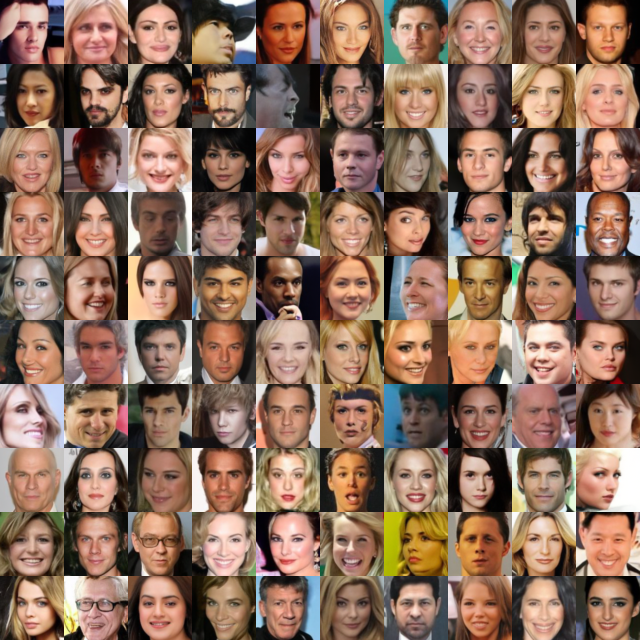}}\ 
\subfloat[\sndpm{} ($K=22$)]{\includegraphics[width=0.31\linewidth]{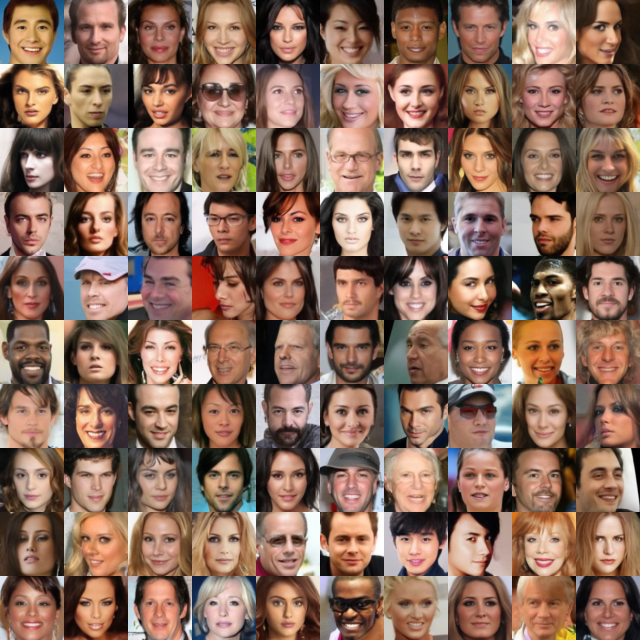}}
\vspace{-.1cm}
\caption{Generated samples on CelebA 64x64.}
\label{fig:celeba}
\end{center}
\end{figure}

\begin{figure}[H]
\begin{center}
\subfloat[Samples from dataset]{\includegraphics[width=0.31\linewidth]{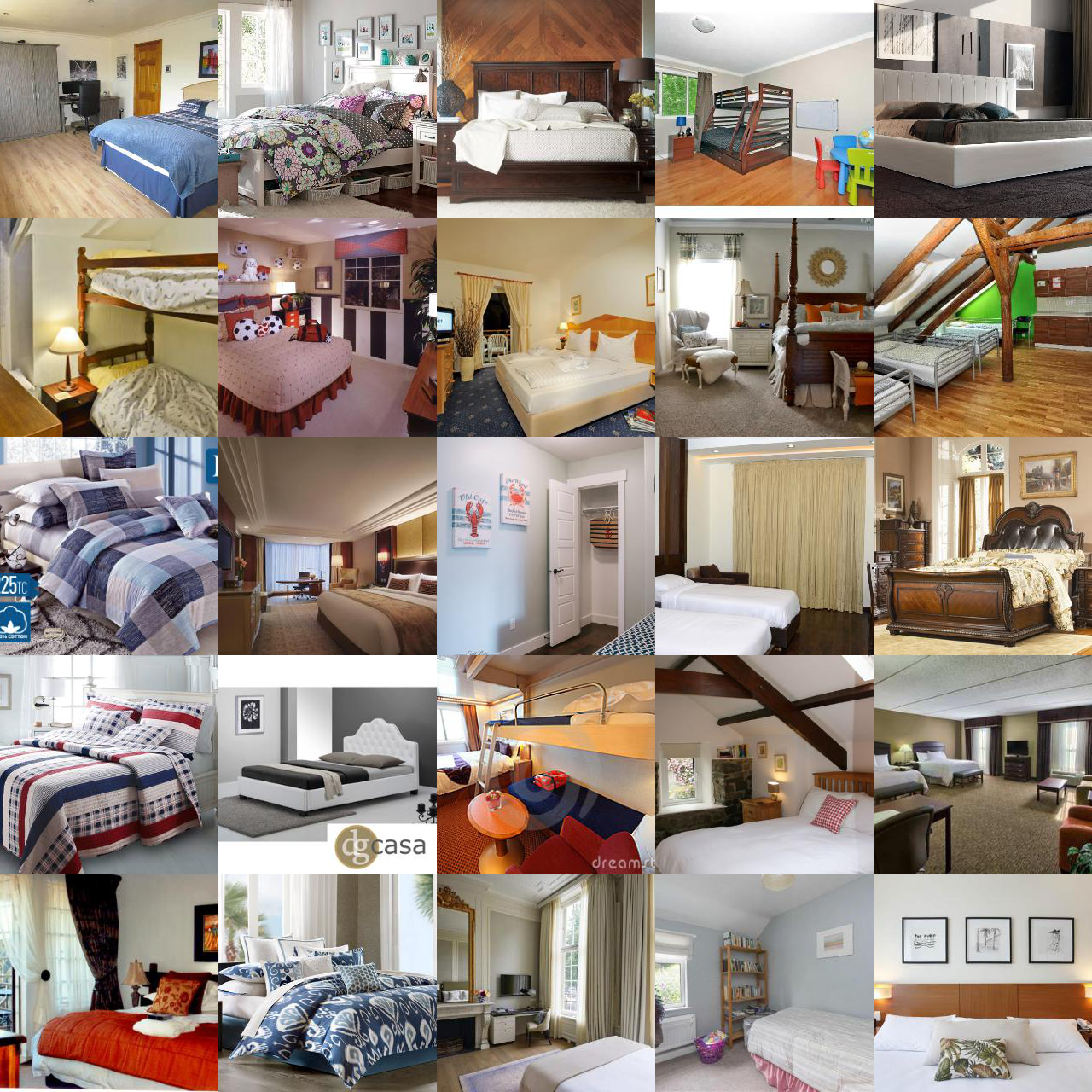}}\ 
\subfloat[\nprdpm{} ($K=90$)]{\includegraphics[width=0.31\linewidth]{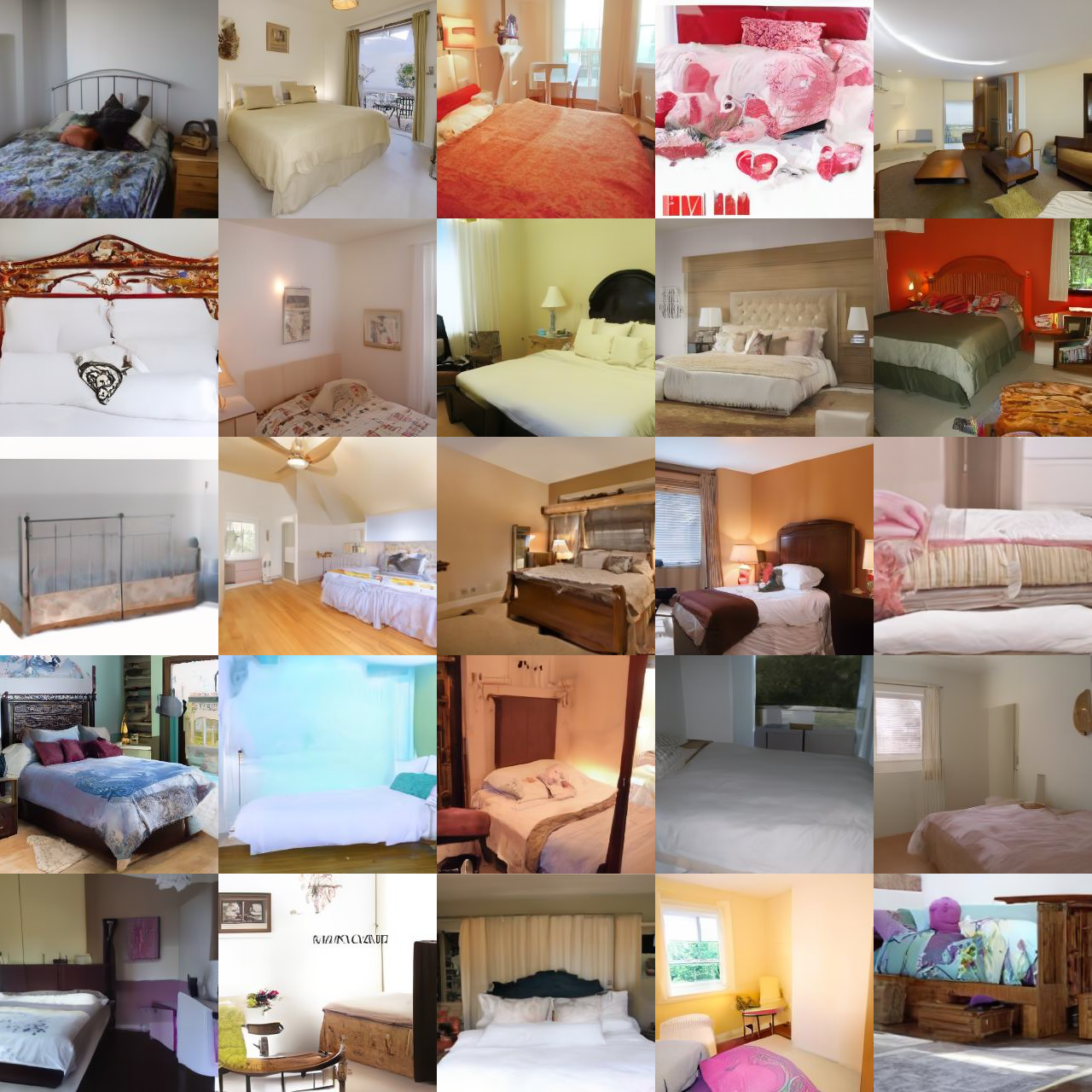}}\ 
\subfloat[\sndpm{} ($K=92$)]{\includegraphics[width=0.31\linewidth]{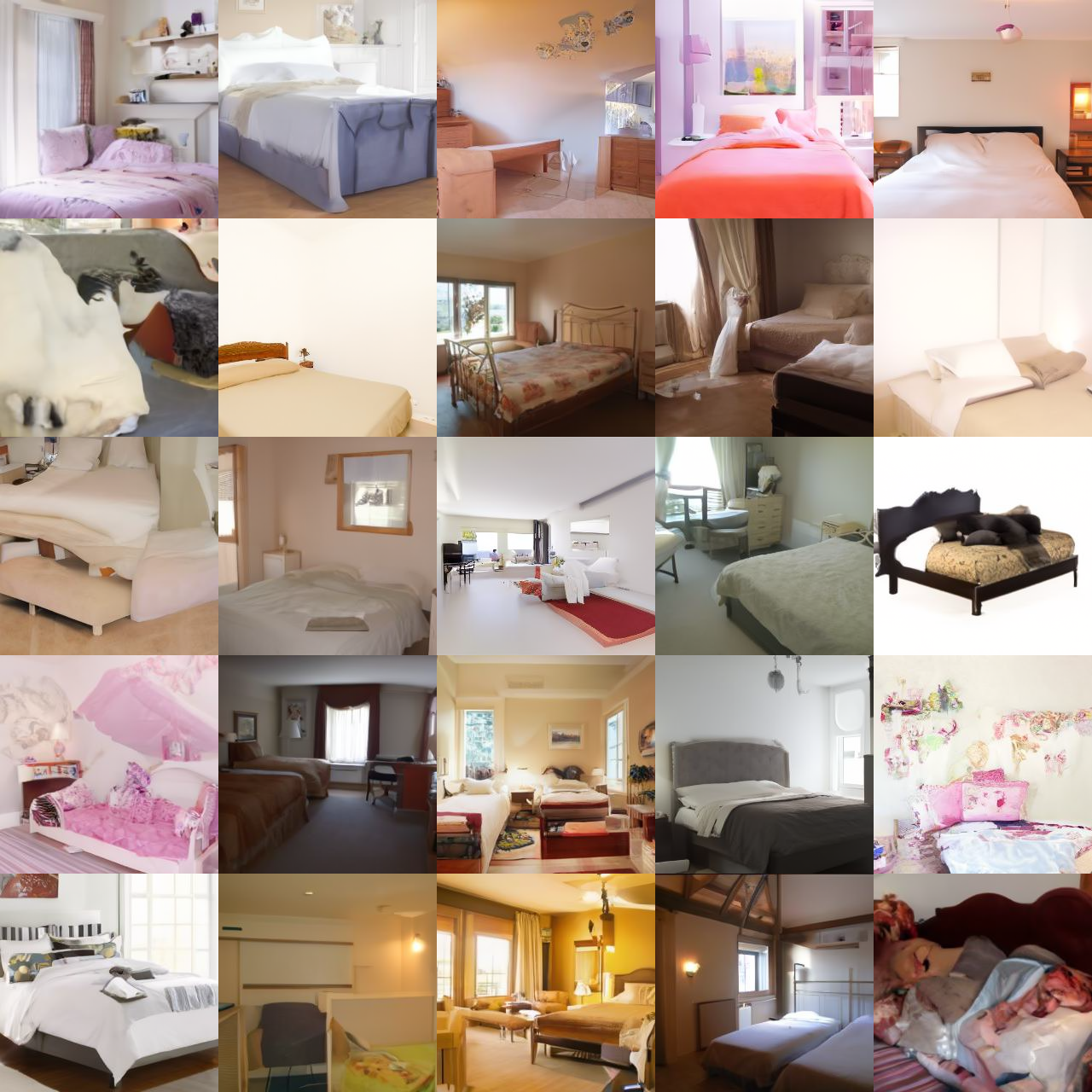}}
\vspace{-.1cm}
\caption{Generated samples on LSUN Bedroom.}
\label{fig:lsun}
\end{center}
\end{figure}

\begin{figure}[H]
\begin{center}
\subfloat[Samples from dataset]{\includegraphics[width=0.31\linewidth]{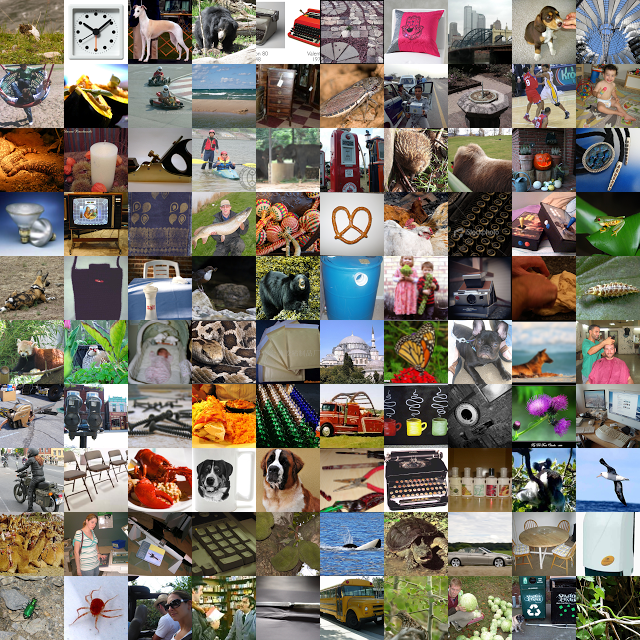}}\ 
\subfloat[\nprdpm{} ($K=25$)]{\includegraphics[width=0.31\linewidth]{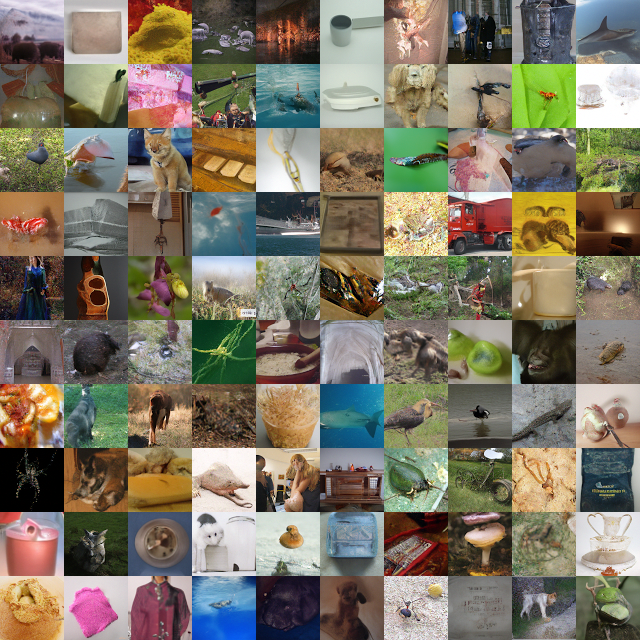}}\ 
\subfloat[\sndpm{} ($K=25$)]{\includegraphics[width=0.31\linewidth]{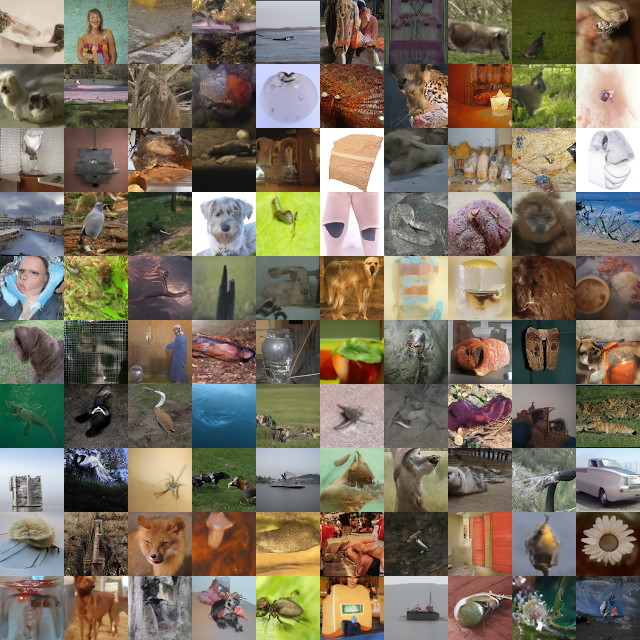}}
\vspace{-.1cm}
\caption{Generated samples on ImageNet 64x64.}
\label{fig:imagenet}
\end{center}
\end{figure}

\newpage
\begin{figure}[H]
\centering
\begin{minipage}{\textwidth}
\begin{center}
\subfloat[\nprddpm{} ($K=10$)]{\includegraphics[width=0.24\linewidth]{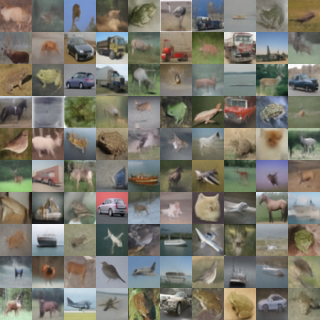}}\ 
\subfloat[\nprddpm{} ($K=100$)]{\includegraphics[width=0.24\linewidth]{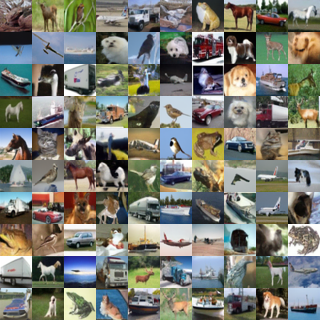}}\ 
\subfloat[\snddpm{} ($K=10$)]{\includegraphics[width=0.24\linewidth]{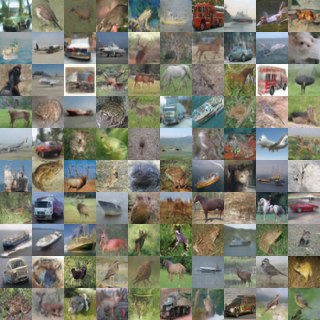}}\ 
\subfloat[\snddpm{} ($K=100$)]{\includegraphics[width=0.24\linewidth]{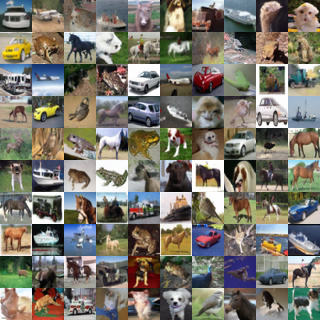}} \\
\vspace{-.2cm}
\subfloat[\nprddim{} ($K=10$)]{\includegraphics[width=0.24\linewidth]{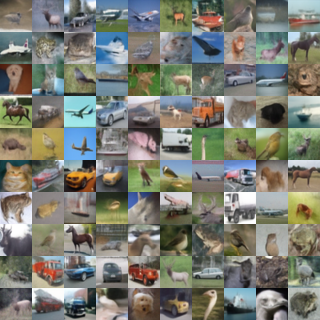}}\ 
\subfloat[\nprddim{} ($K=100$)]{\includegraphics[width=0.24\linewidth]{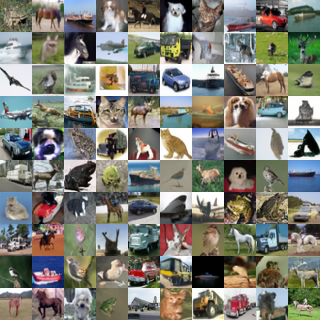}}\ 
\subfloat[\snddim{} ($K=10$)]{\includegraphics[width=0.24\linewidth]{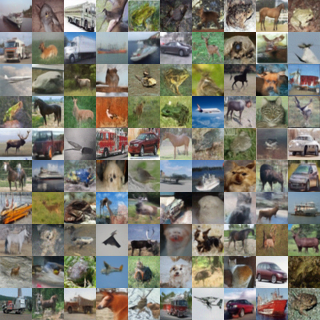}}\ 
\subfloat[\snddim{} ($K=100$)]{\includegraphics[width=0.24\linewidth]{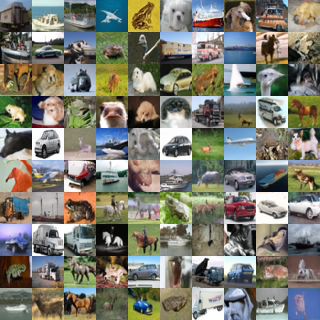}}
\vspace{-.1cm}
\caption{Generated samples on CIFAR10 (LS).}
\label{fig:K_cifar10_ls}
\end{center}
\end{minipage}\\
\begin{minipage}{\textwidth}
\begin{center}
\subfloat[\nprddpm{} ($K=10$)]{\includegraphics[width=0.24\linewidth]{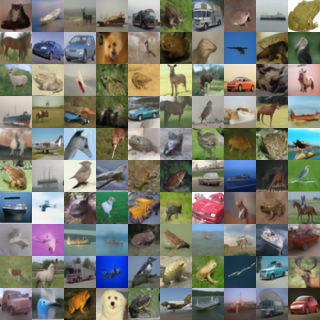}}\ 
\subfloat[\nprddpm{} ($K=100$)]{\includegraphics[width=0.24\linewidth]{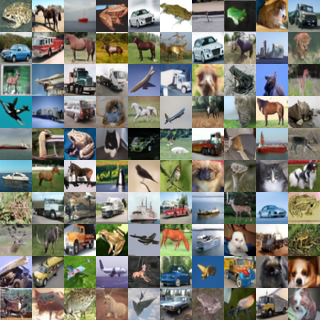}}\ 
\subfloat[\snddpm{} ($K=10$)]{\includegraphics[width=0.24\linewidth]{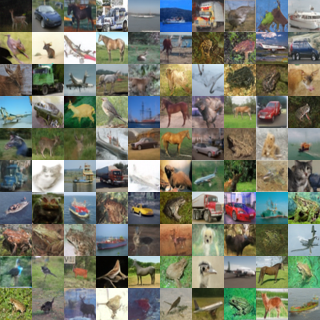}}\ 
\subfloat[\snddpm{} ($K=100$)]{\includegraphics[width=0.24\linewidth]{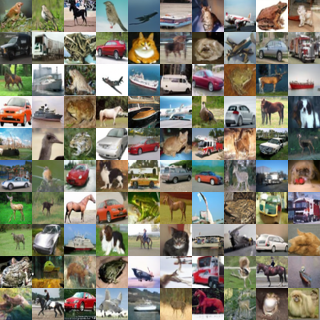}} \\
\vspace{-.2cm}
\subfloat[\nprddim{} ($K=10$)]{\includegraphics[width=0.24\linewidth]{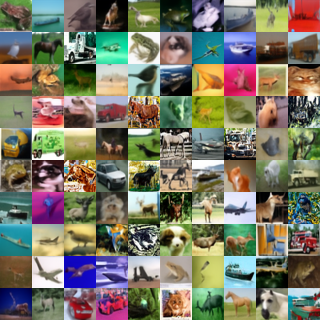}}\ 
\subfloat[\nprddim{} ($K=100$)]{\includegraphics[width=0.24\linewidth]{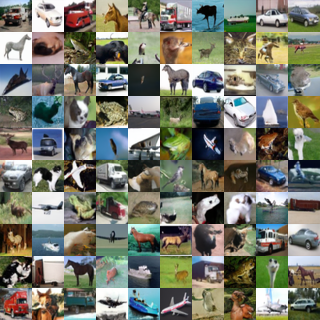}}\ 
\subfloat[\snddim{} ($K=10$)]{\includegraphics[width=0.24\linewidth]{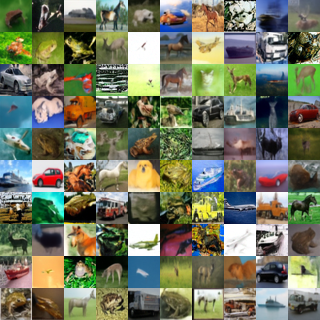}}\ 
\subfloat[\snddim{} ($K=100$)]{\includegraphics[width=0.24\linewidth]{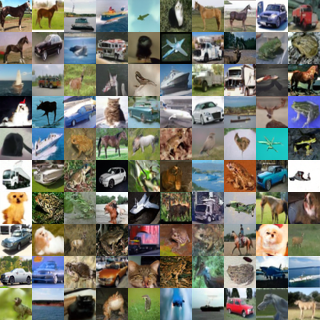}}
\vspace{-.1cm}
\caption{Generated samples on CIFAR10 (CS).}
\label{fig:K_cifar10_cs}
\end{center}
\end{minipage}
\end{figure}

\newpage
\begin{figure}[H]
\centering
\begin{minipage}{\textwidth}
\begin{center}
\subfloat[\nprdpm{} ($K=10$)]{\includegraphics[width=0.24\linewidth]{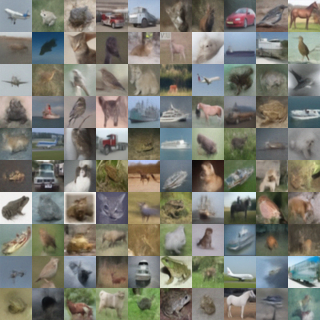}}\ 
\subfloat[\nprdpm{} ($K=100$)]{\includegraphics[width=0.24\linewidth]{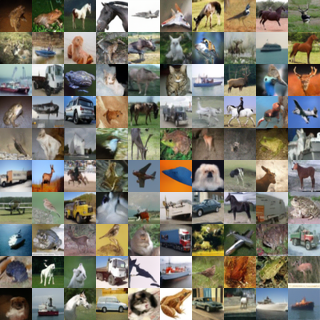}}\ 
\subfloat[\sndpm{} ($K=10$)]{\includegraphics[width=0.24\linewidth]{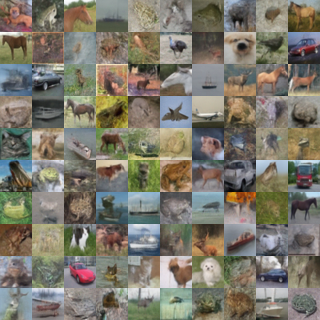}}\ 
\subfloat[\sndpm{} ($K=100$)]{\includegraphics[width=0.24\linewidth]{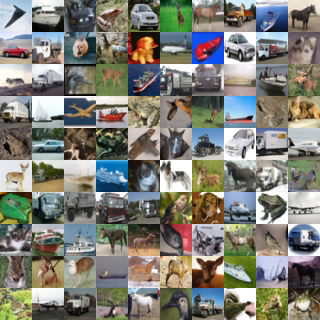}}
\vspace{-.1cm}
\caption{Generated samples on CIFAR10 (VP SDE).}
\label{fig:K_cifar10_vpsde}
\end{center}
\end{minipage}\\
\begin{minipage}{\textwidth}
\begin{center}
\subfloat[\nprddpm{} ($K=10$)]{\includegraphics[width=0.24\linewidth]{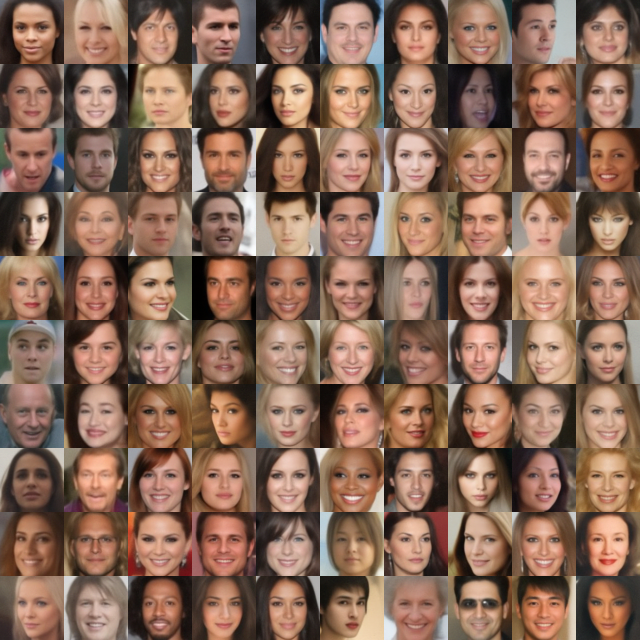}}\ 
\subfloat[\nprddpm{} ($K=100$)]{\includegraphics[width=0.24\linewidth]{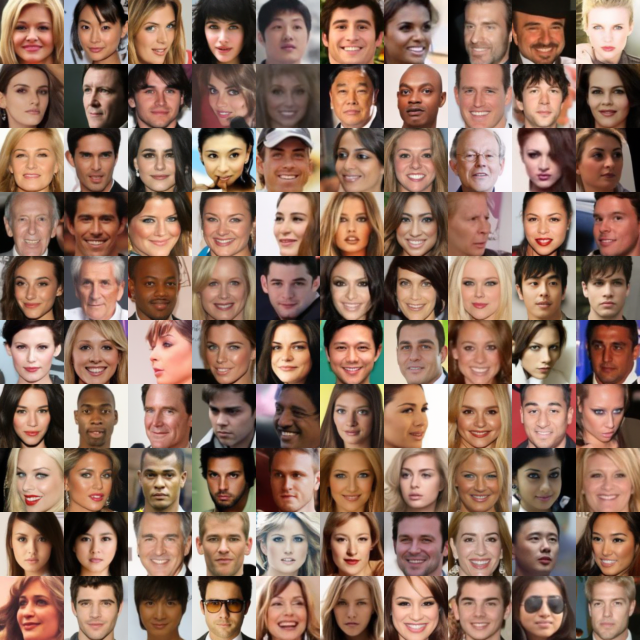}}\ 
\subfloat[\snddpm{} ($K=10$)]{\includegraphics[width=0.24\linewidth]{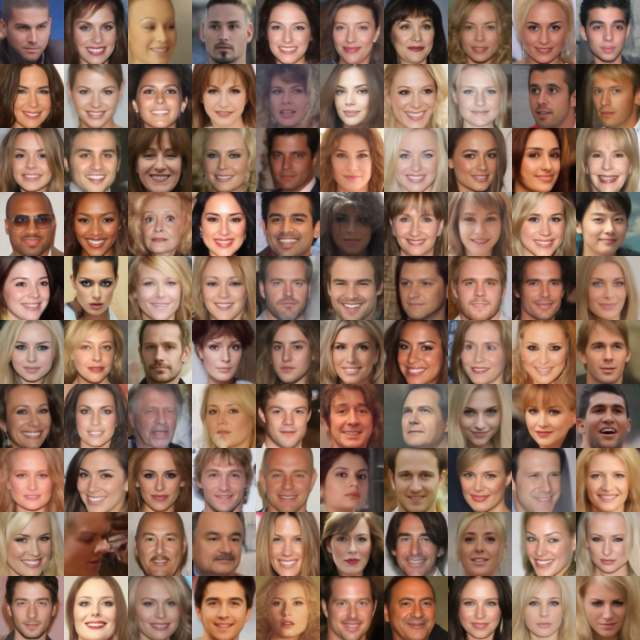}}\ 
\subfloat[\snddpm{} ($K=100$)]{\includegraphics[width=0.24\linewidth]{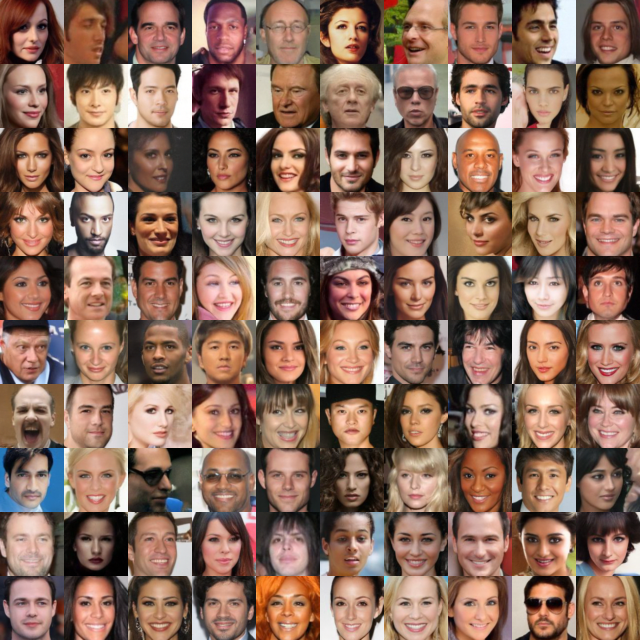}} \\
\vspace{-.2cm}
\subfloat[\nprddim{} ($K=10$)]{\includegraphics[width=0.24\linewidth]{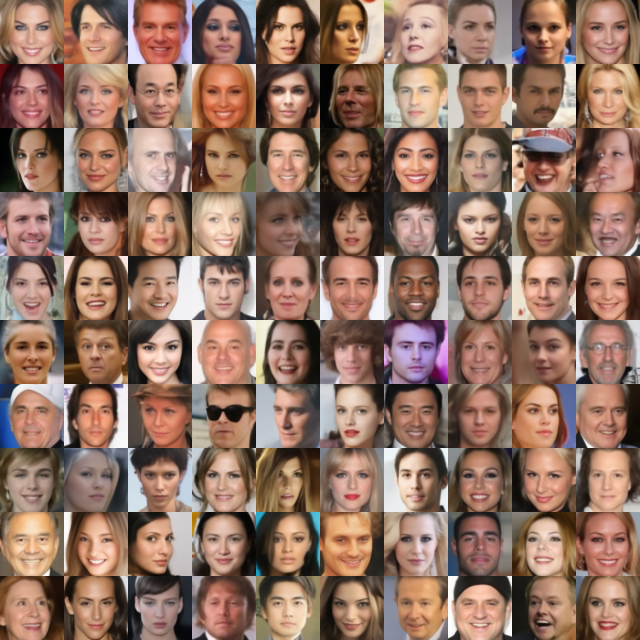}}\ 
\subfloat[\nprddim{} ($K=100$)]{\includegraphics[width=0.24\linewidth]{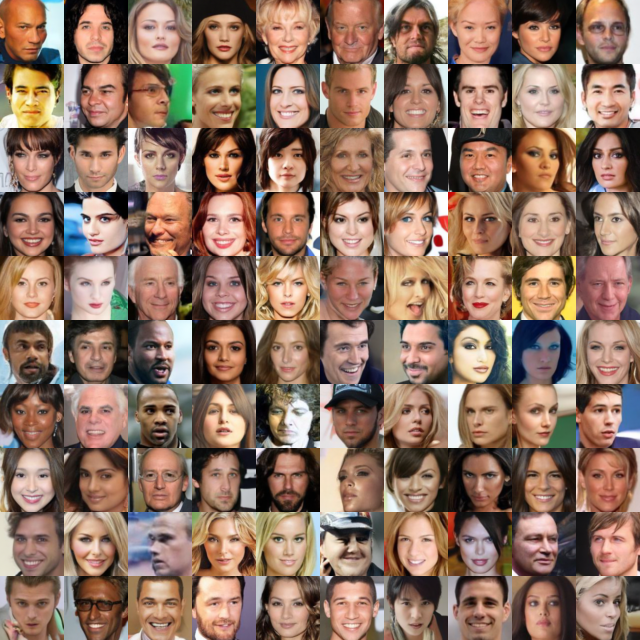}}\ 
\subfloat[\snddim{} ($K=10$)]{\includegraphics[width=0.24\linewidth]{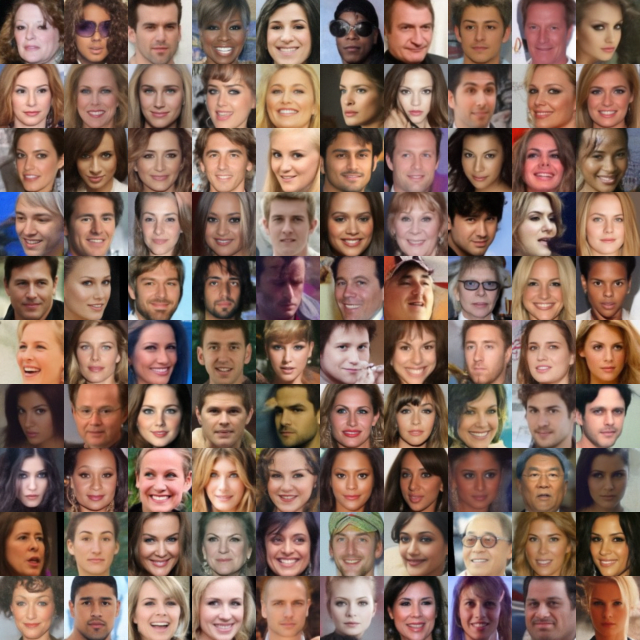}}\ 
\subfloat[\snddim{} ($K=100$)]{\includegraphics[width=0.24\linewidth]{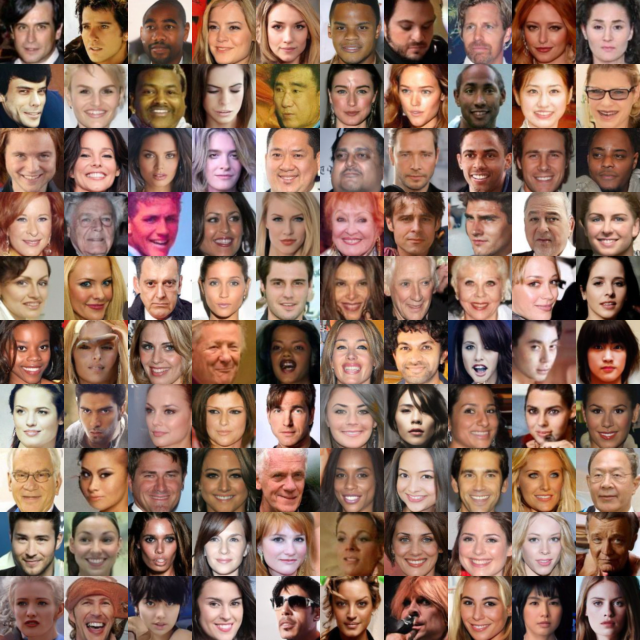}}
\vspace{-.1cm}
\caption{Generated samples on CelebA 64x64.}
\label{fig:K_celeba64}
\end{center}
\end{minipage}
\end{figure}

\newpage
\begin{figure}[H]
\centering
\begin{minipage}{\textwidth}
\begin{center}
\subfloat[\nprddpm{} ($K=25$)]{\includegraphics[width=0.24\linewidth]{imgs/samples/imagenet64_ddpm_npr_25.png}}\ 
\subfloat[\nprddpm{} ($K=200$)]{\includegraphics[width=0.24\linewidth]{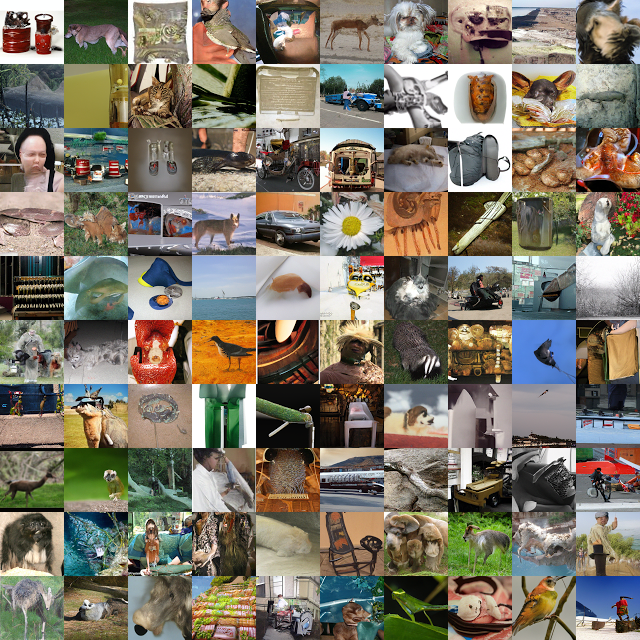}}\ 
\subfloat[\snddpm{} ($K=25$)]{\includegraphics[width=0.24\linewidth]{imgs/samples/imagenet64_ddpm_sn_25.png}}\ 
\subfloat[\snddpm{} ($K=200$)]{\includegraphics[width=0.24\linewidth]{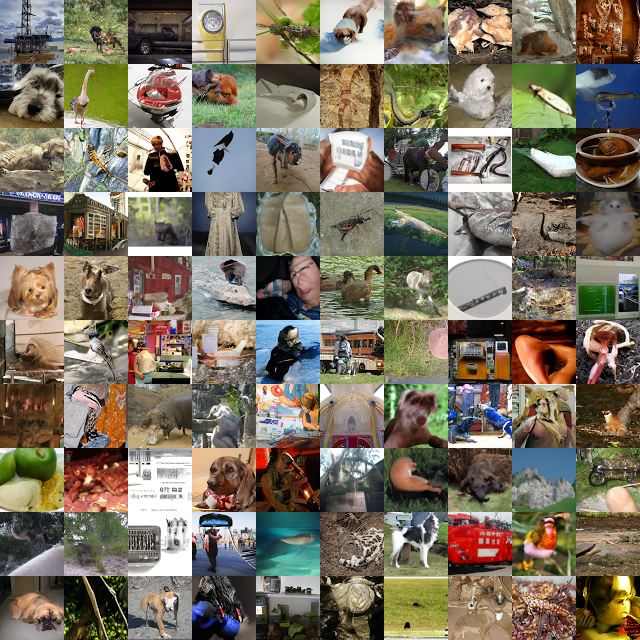}} \\
\vspace{-.2cm}
\subfloat[\nprddim{} ($K=25$)]{\includegraphics[width=0.24\linewidth]{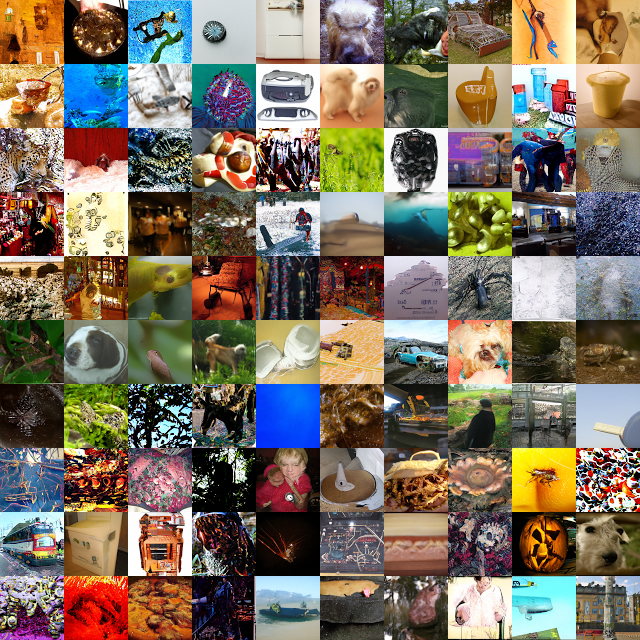}}\ 
\subfloat[\nprddim{} ($K=200$)]{\includegraphics[width=0.24\linewidth]{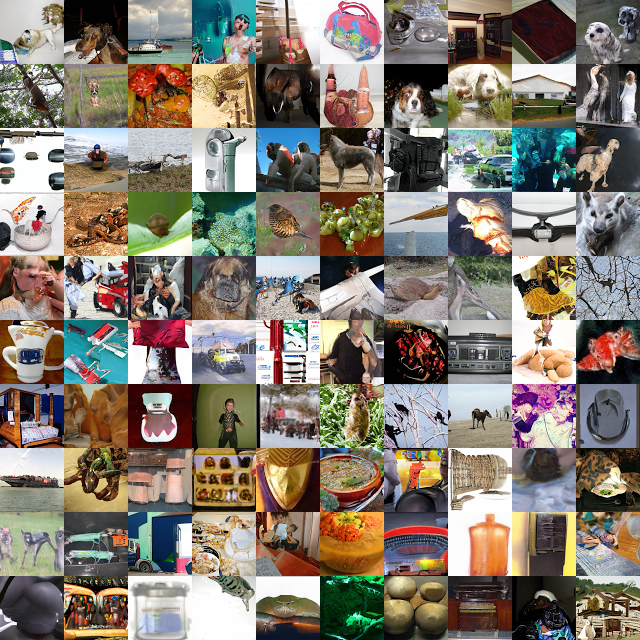}}\ 
\subfloat[\snddim{} ($K=25$)]{\includegraphics[width=0.24\linewidth]{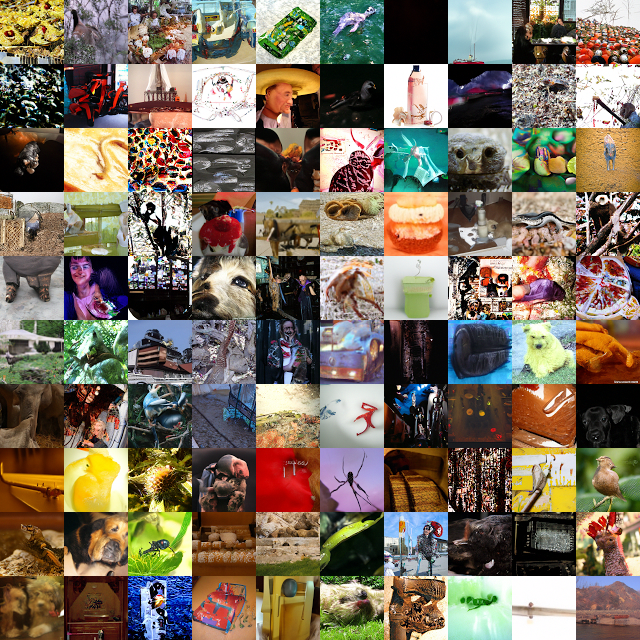}}\ 
\subfloat[\snddim{} ($K=200$)]{\includegraphics[width=0.24\linewidth]{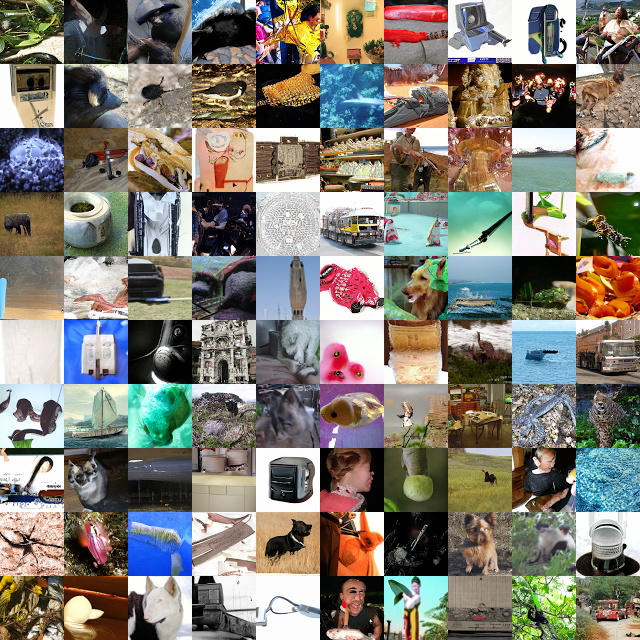}}
\vspace{-.1cm}
\caption{Generated samples on ImageNet 64x64.}
\label{fig:K_imagenet64}
\end{center}
\end{minipage}
\end{figure}

\end{document}